\newcommand{\nettt}{\textsc{NeT$^3$}}
  \providecommand\BibTeX{{%
    \normalfont B\kern-0.5em{\scshape i\kern-0.25em b}\kern-0.8em\TeX}}}
\begin{document}

\title{Network of Tensor Time Series}

%%
%% The "title" command has an optional parameter,
%% allowing the author to define a "short title" to be used in page headers.

\author{Baoyu Jing, Hanghang Tong}
% \email{}
% \orcid{1234-5678-9012}
% \author{G.K.M. Tobin}
% \authornotemark[1]
% \email{webmaster@marysville-ohio.com}
\affiliation{%
  \{baoyuj2, htong\}@illinois.edu\\
  \institution{University of Illinois at Urbana-Champaign}
  \state{IL}
  \country{USA}
}

% \author{Hanghang Tong}
% \email{htong@illinois.edu}
% \affiliation{%
%   \institution{University of Illinois at Urana-Champaign}
%   \state{IL}
%   \country{USA}
% }

\author{Yada Zhu}
\email{yzhu@us.ibm.com}
\affiliation{%
  \institution{IBM Research}
  \state{NY}
  \country{USA}
}

%%
%% By default, the full list of authors will be used in the page
%% headers. Often, this list is too long, and will overlap
%% other information printed in the page headers. This command allows
%% the author to define a more concise list
%% of authors' names for this purpose.
% \renewcommand{\shortauthors}{Trovato and Tobin, et al.}

\begin{abstract}
Co-evolving time series appears in a multitude of applications such as environmental monitoring, financial analysis, and smart transportation.
This paper aims to address the following challenges, including
(C1) how to incorporate \textit{explicit relationship networks} of the time series;
(C2) how to model the \textit{implicit relationship} of the temporal dynamics.
We propose a novel model called Network of Tensor Time Series (\nettt), which is comprised of two modules, including Tensor Graph Convolutional Network (TGCN) and Tensor Recurrent Neural Network (TRNN).
TGCN tackles the first challenge by generalizing Graph Convolutional Network (GCN) for flat graphs to tensor graphs, which captures the synergy between multiple graphs associated with the tensors.
TRNN leverages tensor decomposition to model the implicit relationships among co-evolving time series.
The experimental results on five real-world datasets demonstrate the efficacy of the proposed method.
\end{abstract}

%%
%% The code below is generated by the tool at http://dl.acm.org/ccs.cfm.
%% Please copy and paste the code instead of the example below.
%%
% \begin{CCSXML}
% <ccs2012>
%  <concept>
%   <concept_id>10010520.10010553.10010562</concept_id>
%   <concept_desc>Computer systems organization~Embedded systems</concept_desc>
%   <concept_significance>500</concept_significance>
%  </concept>
%  <concept>
%   <concept_id>10010520.10010575.10010755</concept_id>
%   <concept_desc>Computer systems organization~Redundancy</concept_desc>
%   <concept_significance>300</concept_significance>
%  </concept>
%  <concept>
%   <concept_id>10010520.10010553.10010554</concept_id>
%   <concept_desc>Computer systems organization~Robotics</concept_desc>
%   <concept_significance>100</concept_significance>
%  </concept>
%  <concept>
%   <concept_id>10003033.10003083.10003095</concept_id>
%   <concept_desc>Networks~Network reliability</concept_desc>
%   <concept_significance>100</concept_significance>
%  </concept>
% </ccs2012>
% \end{CCSXML}

% \ccsdesc[500]{Computer systems organization~Embedded systems}
% \ccsdesc[300]{Computer systems organization~Redundancy}
% \ccsdesc{Computer systems organization~Robotics}
% \ccsdesc[100]{Networks~Network reliability}

\keywords{Co-evolving Time Series; Network of Tensor Time Series; Tensor Graph Convolutional Network; Tensor Recurrent Neural Network}

\maketitle

\section{Introduction}
% \begin{figure}[t]
%   \centering
%   \begin{minipage}{\linewidth}
%     \centering
%     \subcaptionbox{A tensor time series consists of three modes: location, data type and time.\label{fig:example_tensor_time_series}}
%     {\includegraphics[width=.48\linewidth]{figs/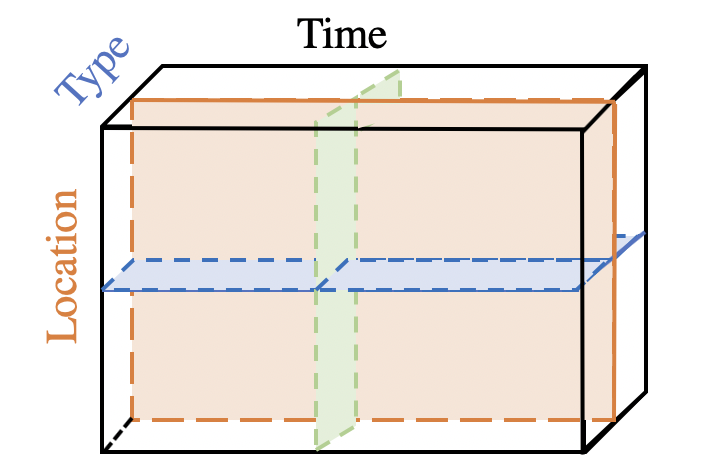}}\quad
%     \subcaptionbox{A temporal snapshot. Rows and columns present locations and data types, which are constrained by their networks.\label{fig:example_snapshot}}
%     {\includegraphics[width=.48\linewidth]{figs/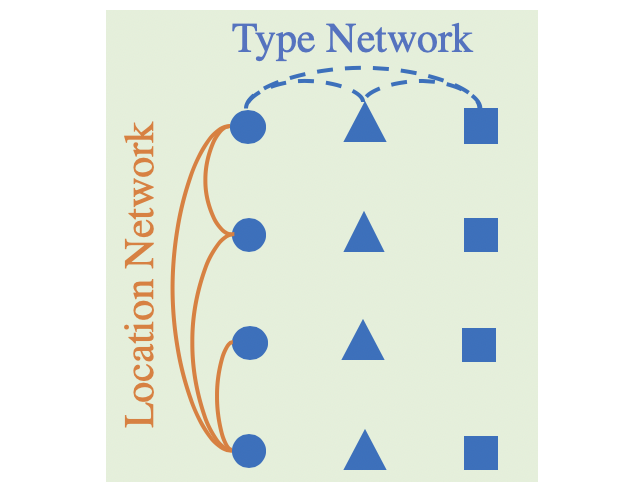}}\quad
%     \subcaptionbox{A slice along one data type: co-evolving time series of one data type at different locations.\label{fig:example_location}}
%     {\includegraphics[width=.48\linewidth]{figs/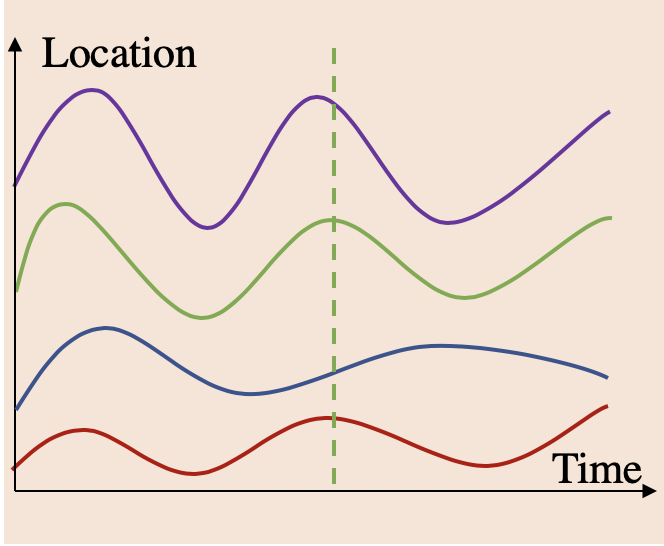}}\quad
%     \subcaptionbox{A slice along one location: co-evolving time series of different data types at the same location.\label{fig:example_type}}
%     {\includegraphics[width=.48\linewidth]{figs/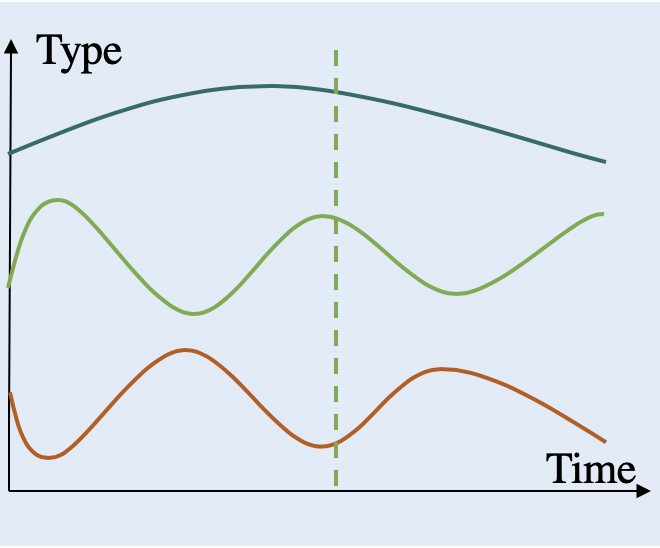}}\quad
%     \caption{An exemplary tensor time series and three slices along different dimensions. (Best viewed in color.)}\label{fig:example}
%   \end{minipage}
% \end{figure}

\begin{figure}[t!]
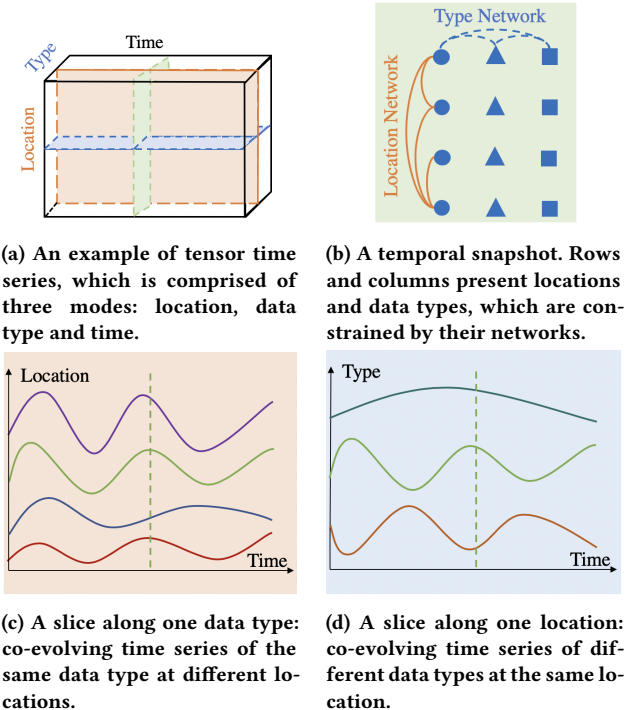

    \centering
    \begin{subfigure}[b]{.22\textwidth}
        \includegraphics[width=\textwidth]{example_a.png}
        \caption{An example of tensor time series, which is comprised of three modes: location, data type and time.}
        \label{fig:example_tensor_time_series}
    \end{subfigure}
    \quad
    \begin{subfigure}[b]{.22\textwidth}
        \includegraphics[width=\textwidth]{example_d.png}
        \caption{A temporal snapshot. Rows and columns present locations and data types, which are constrained by their networks.}
        \label{fig:example_snapshot}
    \end{subfigure}
    \begin{subfigure}[b]{.22\textwidth}
        \centering
        \includegraphics[width=\textwidth]{example_b.png}
        \caption{A slice along one data type: co-evolving time series of the same data type at different locations.}
        \label{fig:example_location}
    \end{subfigure}
    \quad
    \begin{subfigure}[b]{.22\textwidth}\label{fig:visualize_eeij}
        \centering
        \includegraphics[width=\textwidth]{example_c.png}
        \caption{A slice along one location: co-evolving time series of different data types at the same location.}
        \label{fig:example_type}
    \end{subfigure}
    
    \caption{An exemplary tensor time series and three slices along different dimensions. (Best viewed in color.)}
    \label{fig:example}
\end{figure}

Co-evolving time series naturally arises in numerous applications, ranging from environmental monitoring \cite{banzon2016long, srivastava2018comparative}, financial analysis \cite{tsay2014financial} to smart transportation \cite{li2017diffusion, yu2017spatio, li2019predicting}.
As shown in Figure \ref{fig:example_tensor_time_series} and \ref{fig:example_snapshot}, each temporal snapshot of the co-evolving time series naturally forms a multi-dimensional array, i.e., a \textit{multi-mode tensor} \cite{rogers2013multilinear}.
For example, the spatial-temporal monitoring data of atmosphere is a time series of an $N_1\times N_2\times N_3 \times N_4$ tensor, where $N_1$, $N_2$, $N_3$ and $N_4$ denote latitude, longitude, elevation and air conditions respectively (e.g. temperature, pressure and oxygen concentration).
Companies' financial data is a time series of an $N_1 \times N_2 \times N_3$ tensor, where $N_1$, $N_2$ and $N_3$ denote the companies, the types of financial data (e.g. revenue, expenditure) and the statistics of them respectively.
% Company revenues is a time series of a $N_1 \times N_3$ tensor, where $N_1$ and $N_3$ are companies and different statistics of revenues.
% Should other financial data (e.g. expenditure) be provided, the two-mode tensor above can be extended to a three-mode tensor $N_1\times N_2 \times N_3$.
% where $N_2$ represents different types of financial data.
Nonetheless, the vast majority of the recent deep learning methods for co-evolving time series \cite{li2017diffusion, yu2017spatio, li2019predicting, yan2018spatial, liang2018geoman} have almost exclusively focused on a single mode. 
% \by{The spatial-temporal gcn, it considers the location (or spatial) as one mode, and it doesn't consider longitude, latitude etc.}
% while ignored other modes.

Data points within a tensor are usually related to each other, and different modes are associated with different relationships (Figure \ref{fig:example_snapshot}).
Within the above example of environmental monitoring, along geospatial modes ($N_1$, $N_2$ and $N_3$), we could know the (latitudinal, longitudinal and elevational) location relationship between two data points.
In addition, different data types ($N_4$) are also related with each other.  
As governed by Gay-Lussac's law \cite{barnett1941brief}, given fixed mass and volume, the pressure of a gas is proportional to the Kelvin temperature.
% Revenues of ventilator companies grow together with face mask manufactures with the outbreak of the coronavirus, as these are healthcare related companies.
These relationships can be \textit{explicitly} modeled by \textit{networks} or \textit{graphs} \cite{chakrabarti2006graph, akoglu2015graph}.
Compared with the rich machinery of deep graph convolutional methods for flat graphs \cite{kipf2016semi, defferrard2016convolutional}, multiple graphs associated with a tensor (referred to as \textit{tensor graphs} in this paper) are less studied. 
To fill this gap, we propose a novel Tensor Graph Convolution Network (TGCN) which extends Graph Convolutional Network (GCN) \cite{kipf2016semi}
to tensor graphs based on multi-dimensional convolution.

% Given a time series, the two primary problems of interests are missing value recovery and future value prediction.
% Missing values are commonly observed in time series because of the malfunctioning or connectivity issues of sensors, and successful recovery of the values will provide valuable historical information for studying the time series \cite{kong2013data}.
% A good prediction of future values will prepare us in advance for the crisis in climate \cite{mudelsee2013climate}, economics \cite{lopez2013effect} and epidemic such as COVID-19 \cite{peng2020epidemic}.
% To address these problems, it is indispensable to accurately model the complex \textit{temporal dynamics} behind the time series.
Another key challenge for modeling the temporal dynamics behind co-evolving time series is how to capture the \textit{implicit relationship} of different time series.
As shown in Figure \ref{fig:example_location}, the temporal patterns of time series with the same data type (e.g. temperature) are similar.
The relationship of the co-evolving temperature time series can be partially captured by the location network, e.g., two neighboring locations often have similar temporal dynamics.
However, the temperature time series from two locations far apart could also share similar patterns.
Most of the existing studies either use the same temporal model for all time series \cite{li2017diffusion, yu2017spatio, li2019predicting, yan2018spatial}, or use separate Recurrent Neural Networks (RNN) \cite{srivastava2018comparative, zhou2017recover} for different time series. 
Nonetheless, none of them offers a principled way to model the implicit relationship.
To tackle with this challenge, we propose a novel Tensor Recurrent Neural Network (TRNN) based on Multi-Linear Dynamic System (MLDS) \cite{rogers2013multilinear} and Tucker decomposition, which helps reduce the number of model parameters.

Our main contributions are summarized as follows:
\begin{itemize}
    \item We introduce a novel graph convolution for tensor graphs and present a novel TGCN that generalizes GCN \cite{kipf2016semi}. The new architecture can capture the synergy among different graphs by simultaneously performing convolution on them.
    \item We introduce a novel TRNN based on MLDS \cite{rogers2013multilinear} for efficiently modeling the implicit relationship between complex temporal dynamics of tensor time series. 
    \item We present comprehensive evaluations for the proposed methods on a variety of real-world datasets to demonstrate the effectiveness of the proposed method.
\end{itemize}

%\hh{let us add a small paragraph about the organization of the paper -- we have the space and it will make the paper more organized.}
%\by{updated}

The rest of the paper is organized as follows. 
In Section \ref{sec:preliminary}, we briefly introduce relevant definitions about graph convolution and tensor algebra, and formally introduce the definition of network of tensor time series.
In Section \ref{sec:methods}, we present and analyze the proposed TGCN and TRNN.
The experimental results are presented in Section \ref{sec:experiments}.
Related works and conclusion are presented in Section \ref{sec:related_work} and Section \ref{sec:conclusion} respectively.

\section{Preliminaries}\label{sec:preliminary}
In this section, we formally define network of tensor time series (Subsection~\ref{sub:multift}), after we review the preliminaries, including graph convolution on flat graphs (Subsection~\ref{sub:plaingcn}), tensor algebra (Subsection~\ref{sub:tensor}), and multi-dimensional Fourier transformation (Subsection~\ref{sub:multift}) respectively.
We introduce the definitions of the problems in Section \ref{sec:problem_definition}.

\subsection{Graph Convolution on Flat Graphs}\label{sub:plaingcn}

% Spectral graph convolution on a flat graph is defined by analogizing it to the one-dimensional convolution \cite{kipf2016semi, defferrard2016convolutional}.
% Its Chebyshev approximation is thus defined by a linear operation between $\theta_pT_p(\tilde{\mathbf{L}})$ and $\mathbf{x}$, as shown in Definition \ref{def:chebynet}.
% 
% \begin{definition}[Chebyshev Approximation for Flat Graph Convolution]\label{def:chebynet}
% Given an input graph signal $\mathbf{x}\in\mathbb{R}^N$ and its adjacency matrix $\mathbf{A}\in\mathbb{R}^{N\times N}$, where $N$ is the number of nodes, the Chebyshev approximation for graph convolution on a flat graph is defined by \cite{kipf2016semi, defferrard2016convolutional}:
% \begin{equation}
%     \mathbf{g}_\theta \star \mathbf{x} = \sum_{p=0}^P\theta_pT_p(\Tilde{\mathbf{L}})\mathbf{x}
% \end{equation}
% where $\mathbf{L}=\mathbf{I} - \mathbf{D}^{-\frac{1}{2}}\mathbf{A}\mathbf{D}^{-\frac{1}{2}}$ is the graph Laplacian, $\mathbf{D}$ is the degree matrix of $\mathbf{A}$, and $\Tilde{\mathbf{L}} = \frac{2}{\lambda_{max}}\mathbf{L} - \mathbf{I}$ is the normalized graph Laplacian,
% $\lambda_{max}$ is maximum eigenvalue of $\mathbf{L}$;
% $T_p(x)$ is Chebyshev polynomials defined by $T_p(x) = 2xT_{p-1}(x) - T_{p-2}(x)$ with $T_0(x) = 1$ and $T_1(x) = x$, and $p$ denotes the order of polynomials;
% $\mathbf{g}_\theta$ and $\theta_p$ denote the filter vector and the parameter respectively.
% \end{definition} 

Analogous to the one-dimensional Discrete Fourier Transform (Definition \ref{def:dft}), the graph Fourier transform is given by Definition \ref{def:gft}.
Then the spectral graph convolution (Definition \ref{def:graph_conv}) is defined based on one-dimensional convolution and the convolution theorem.
The free parameter of the convolution filter is further replaced by Chebyshev polynomials and thus we have Chebyshev approximation for graph convolution (Definition \ref{def:cheby_graph}).

%\hh{let us move the definition of flat graph to 2.1 and move tensor graph to either 2.2 or 2.5. they are better fit there. plus, right now, we have too many subsections for 'preliminaries'}

% Given an input graph signal $\mathbf{x}\in\mathbb{R}^N$ and its adjacency matrix $\mathbf{A}\in\mathbb{R}^{N\times N}$, where $N$ is the number of nodes, its graph Laplacian matrix $\mathbf{L}=\mathbf{I} - \mathbf{D}^{-\frac{1}{2}}\mathbf{A}\mathbf{D}^{-\frac{1}{2}}$, where $\mathbf{I}\in\mathbb{R}^{N\times N}$ and $\mathbf{D}\in\mathbb{R}^{N\times N}$ denote identity matrix and degree matrix respectively.
% Due to the symmetry of $\mathbf{L}$, it can be decomposed into $\mathbf{L} = \mathbf{\Phi}\mathbf{\Lambda}\mathbf{\Phi}^T$, where $\mathbf{\Lambda}=\text{diag}(\lambda_1, \cdots, \lambda_N)$ is a diagonal matrix of eigenvalues  and $\mathbf{\Phi}\in\mathbb{R}^{N\times N}$ is the matrix of orthonormal eigenvectors.
% Hence, the Chebyshev approximation for spectral graph convolution can be given by Definition \ref{def:cheby_graph}.

\begin{definition}[Flat Graph]\label{def:flat_graph}
A flat graph contains a one-dimentional graph signal $\mathbf{x}\in\mathbb{R}^{N}$ and an adjacency matrix $\mathbf{A}\in\mathbb{R}^{N\times N}$.

\end{definition}

\begin{definition}[Discrete Fourier Transform]\label{def:dft}
Given an one dimensional signal $\mathbf{x}\in\mathbb{R}^N$, where $N$ is the length of the sequence, its Fourier transform is defined by:
\begin{equation}
    \Tilde{\mathbf{x}}[n] = \sum_{k=1}^N\mathbf{x}[k]e^{-\frac{i2\pi}{N}kn}
\end{equation}
where $\mathbf{x}[k]$ is the $k$-th element of $\mathbf{x}$ and $\Tilde{\mathbf{x}}[n]$ is the $n$-th element of the transformed vector $\Tilde{\mathbf{x}}$. 
The above definition can be rewritten as:
\begin{equation}
    \Tilde{\mathbf{x}} = \mathbf{F}\mathbf{x}
\end{equation}
where $\mathbf{F}\in\mathbb{R}^{N\times N}$ is the filter matrix and $\mathbf{F}[n,k] = e^{-\frac{i2\pi}{N}kn}$.
\end{definition}

\begin{definition}[Graph Fourier Transform \cite{bruna2013spectral}]\label{def:gft}
Given a graph signal $\mathbf{x}\in\mathbb{R}^{N}$, along with its adjacency matrix $\mathbf{A}\in\mathbb{R}^{N\times N}$, where $N$ is the number of nodes, the graph Fourier transform is defined by:
\begin{equation}
    \Tilde{\mathbf{x}} = \mathbf{\Phi}^T\mathbf{x}
\end{equation}
where $\mathbf{\Phi}$ is the eigenvector matrix of the graph Laplacian matrix $\mathbf{L}=\mathbf{I} - \mathbf{D}^{-\frac{1}{2}}\mathbf{A}\mathbf{D}^{-\frac{1}{2}}=\mathbf{\Phi}\mathbf{\Lambda}\mathbf{\Phi}^T$, $\mathbf{I}\in\mathbb{R}^{N\times N}$, $\mathbf{D}\in\mathbb{R}^{N\times N}$ denote the identity matrix and the degree matrix, and $\mathbf{\Lambda}$ is a diagonal matrix whose diagonal elements are eigenvalues. %\hh{$\Lambda$ is not defined}
%\by{updated}
\end{definition}

% \begin{definition}[Discrete Convolution]\label{def:conv}
% Given two vectors $\mathbf{x}\in\mathbb{R}^{N}$ and $\mathbf{y}\in\mathbb{R}^{N}$, the discrete convolution is defined as:
% \begin{equation}
%     \mathbf{y}\star\mathbf{x} = 
% \end{equation}
% \end{definition}

\begin{definition}[Spectral Graph Convolution \cite{bruna2013spectral}]\label{def:graph_conv}
Given a signal $\mathbf{x}\in\mathbb{R}^N$ and a filter $\mathbf{g}\in\mathbb{R}^N$, the spectral graph convolution is defined in the Fourier domain according to the convolution theorem:
\begin{align}
    \mathbf{\Phi}^T(\mathbf{g}\star\mathbf{x}) &= (\mathbf{\Phi}^T\mathbf{g})\odot(\mathbf{\Phi}^T\mathbf{x})\\
    \mathbf{g}\star\mathbf{x} &= \mathbf{\Phi}(\mathbf{\Phi}^T\mathbf{g})\odot(\mathbf{\Phi}^T\mathbf{x}) = \mathbf{\Phi}\text{diag}(\Tilde{\mathbf{g}}) \mathbf{\Phi}^T\mathbf{x}\label{eq:spectral_conv}
\end{align}
where $\star$ and $\odot$ denote convolution operation and Hadamard product; the second equation holds due to the orthonormality.
\end{definition}

\begin{definition}[Chebyshev Approximation for Spectral Graph Convolution \cite{defferrard2016convolutional}]\label{def:cheby_graph}
Given an input graph signal $\mathbf{x}\in\mathbb{R}^N$ and its adjacency matrix $\mathbf{A}\in\mathbb{R}^{N\times N}$, the Chebyshev approximation for graph convolution on a flat graph is given by \cite{kipf2016semi, defferrard2016convolutional}:
\begin{equation}
\mathbf{g}_\theta \star \mathbf{x} =\mathbf{\Phi} (\sum_{p=0}^P\theta_pT_p(\Tilde{\mathbf{\Lambda}}))\mathbf{\Phi}^T\mathbf{x} = \sum_{p=0}^P\theta_pT_p(\Tilde{\mathbf{L}})\mathbf{x}
\end{equation}
where $\Tilde{\mathbf{\Lambda}} = \frac{2}{\lambda_{max}}\mathbf{\Lambda} - \mathbf{I}$ is the normalized eigenvalues,
$\lambda_{max}$ is maximum eigenvalue of the matrix $\mathbf{\Lambda}$;
$\Tilde{\mathbf{L}} = \frac{2}{\lambda_{max}}\mathbf{L} - \mathbf{I}$;
$T_p(x)$ is Chebyshev polynomials defined by $T_p(x) = 2xT_{p-1}(x) - T_{p-2}(x)$ with $T_0(x) = 1$ and $T_1(x) = x$, and $p$ denotes the order of polynomials;
$\mathbf{g}_\theta$ and $\theta_p$ denote the filter vector and the parameter respectively.
\end{definition} 

\subsection{Tensor Algebra}\label{sub:tensor}
% \begin{definition}[Matricization]\label{def:matricization}
% Matricization ``flattens'' a tensor into a matrix.
% Given a tensor $\mathcal{X}\in\mathbb{R}^{N_1\times\cdots\times N_M \times N_1'\times\cdots\times N_M'}$, its matricization 
% % \hh{use textrm when representing a function name (mat). otherwise, it indicates it is a variable name}
% $\textrm{mat}(\mathcal{X})\in\mathbb{R}^{N_1\cdots N_M\times N_1'\cdots N_M'}$ is defined as:
% \begin{equation}
% \textrm{mat}(\mathcal{X})[i,j] = \mathcal{X}[n_1, \cdots, n_M, n_1',\cdots, n_M']
% \end{equation}
% $i = 1+\sum_{m=1}^M\prod_{l=1}^{m-1}N_l(n_m-1)$, $j = 1+\sum_{m=1}^M\prod_{l=1}^{m-1}N_l'(n_m-1)$.
% \end{definition}

\begin{definition}[Mode-m Product] \label{def:mode_product}
The mode-m product generalizes matrix-matrix product to tensor-matrix product.
Given a matrix $\mathbf{U}\in\mathbb{R}^{N_m\times N'}$, and a tensor $\mathcal{X}\in\mathbb{R}^{N_1\times\cdots N_{m-1}\times N_{m}\times N_{m+1}\cdots\times N_M}$, then $\mathcal{X}\times_{m}\mathbf{U}\in\mathbb{R}^{N_1\times\cdots N_{m-1}\times N'\times N_{m+1} \cdots \times N_M}$ is its mode-m product.
% the mode-$m$ product of $\mathcal{X}$ and $\mathbf{U}$ is denoted by $\mathcal{X}\times_{m}\mathbf{U}\in\mathbb{R}^{N_1\times\cdots N_{m-1}\times N'\times N_{m+1} \cdots \times N_M}$.
Its element $[n_1, \cdots, n_{m-1}, n', n_{m+1}, \cdots, n_M]$ is defined as:

\begin{equation}
    \begin{split}
        & (\mathcal{X} \times_{m} \mathbf{U})[n_1, \cdots, n_{m-1}, n', n_{m+1}, \cdots, n_M]\\
        =& \sum_{n_m=1}^{N_m}\mathcal{X}[n_1, \cdots, n_{m-1}, n_m, n_{m+1}, \cdots, n_M]\mathbf{U}[n_m, n']
    \end{split}
\end{equation}

\end{definition}

% \begin{definition}[Contracted Product]\label{def:multi_linear}
% The contracted product (or multi-linear product) is a generalization of matrix multiplication for tensors.
% Given two tensors $\mathcal{Z}\in\mathbb{R}^{N1'\times\cdots\times N_M' \times N_1\times\cdots\times N_M}$ and   $\mathcal{U}\in\mathbb{R}^{N_1\times \cdots \times N_M \times N_1' \times\cdots\times N_M'}$, 
% the element $[n_1, \dots, n_M]$ of their multi-linear product $\mathcal{U}\circledast\mathcal{Z}$ is:
% \begin{equation}
%     \begin{split}
%     (\mathcal{U}\circledast\mathcal{Z})[n_1, \dots, n_M] = & \sum_{n_1', \dots, n_M'} \mathcal{U}[n_1, \dots, n_M, n_1', \dots, n_M']\\
%     &\times \mathcal{Z}[n_1', \dots, n_M', n_1, \dots, n_M]
% \end{split}
% \end{equation}
% \end{definition}

\begin{definition}[Tucker Decomposition]\label{def:tucker}
The Tucker decomposition can be viewed as a form of high-order principal component analysis \cite{kolda2009tensor}. 
A tensor $\mathcal{X}\in\mathbb{R}^{N_1\times\cdots\times N_M}$ can be decomposed into a smaller core tensor $\mathcal{Z}\in\mathbb{R}^{N'_1\times\cdots\times N'_M}$ by $M$ orthonormal matrices $\mathbf{U}_m\in\mathbb{R}^{N'_m\times N_m}$ ($N'_m < N_m$):
\begin{equation}
    \mathcal{X} = \mathcal{Z}\prod_{m=1}^M\times_m\mathbf{U}_m
\end{equation}
% where $\textrm{mat}(\mathcal{U})=\mathbf{U}_M\otimes_k\cdots\otimes_k\mathbf{U}_1$ and $\otimes_k$ denotes the Kronecker product.
The matrix $\mathbf{U}_m$ is comprised of principal components for the $m$-\textrm{th} mode and the core tensor $\mathcal{Z}$ indicates the interactions among the components.
Due to the orthonormality of $\mathbf{U}_m$, we have:
\begin{equation}
    \mathcal{Z} = \mathcal{X}\prod_{m=1}^M\times_m\mathbf{U}_m^T
\end{equation}
\end{definition}

\begin{figure*}[t!]
    \centering
    \includegraphics[width=0.9\linewidth]{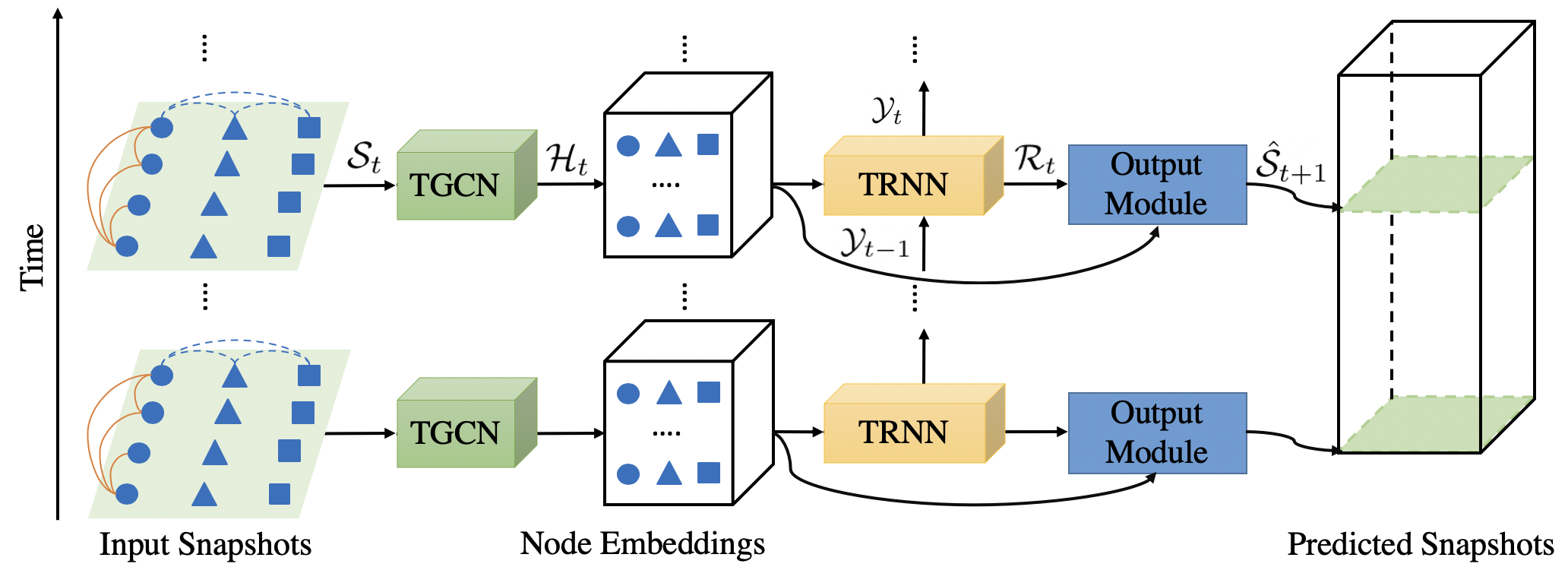}
    \caption{The framework of the proposed model \nettt. At each time step $t$, the model takes a snapshot $\mathcal{S}_t$ from the tensor time series $\mathcal{S}$ and extracts its node embedding tensor $\mathcal{H}_t$ via Tensor Graph Convolution Network (TGCN) module.
    $\mathcal{H}_t$ will be fed into the Tensor RNN (TRNN) module to encode the temporal dynamics. 
    Finally, the output module takes both of $\mathcal{H}_t$ and $\mathcal{R}_t$ to predict the snapshot of the next time step $\hat{\mathcal{S}}_{t+1}$.
    Note that $\mathcal{Y}_t$ and $\mathcal{Y}_{t+1}$ are the hidden states of TRNN at time step $t$ and $t+1$ respectively.}
    \label{fig:model}
\end{figure*}

\subsection{Multi-dimensional Fourier Transform}\label{sub:multift}
\begin{definition}[Multi-dimensional Discrete Fourier Transform]\label{def:mdft}
Given a multi-dimensional/mode signal $\mathcal{X}\in\mathbb{R}^{N_1\times\cdots\times N_M}$, the multi-dimensional Fourier transform is defined by:
\begin{equation}
\begin{split}
    \Tilde{\mathcal{X}}[n_1, \cdots, n_M] = \prod_{m=1}^M\sum_{k_m=1}^{N_m}e^{-\frac{i2\pi}{N_m}k_mn_m} \mathcal{X}[k_1, \cdots, k_M]
\end{split}
\end{equation}
Similar to the one-dimensional Fourier transform (Definition \ref{def:dft}), the above equation can be re-written by a multi-linear form:
\begin{equation}
    \Tilde{\mathcal{X}} = \mathcal{X}\times_1\mathbf{F_1}\cdots\times_M\mathbf{F_M} = \mathcal{X}\prod_{m=1}^M\times_m\mathbf{F}_m 
\end{equation}
where $\times_m$ denotes the mode-m product, $\mathbf{F}_m\in\mathbb{R}^{N_m\times N_m}$ is the filter matrix, and $\mathbf{F}_m[n, k] = e^{-\frac{i2\pi}{N_m}kn}$.
\end{definition}

\begin{definition}[Separable Multi-dimensional Convolution]\label{def:sep_conv}
The separable multi-dimensional convolution is defined based on Definition \ref{def:mdft}.
Given a signal $\mathcal{X}\in\mathbb{R}^{N_1\times\cdots\times N_M}$ and a separable filter $\mathcal{Y}\in\mathbb{R}^{N_1\times\cdots\times N_M}$ such that $\mathcal{Y}[n_1, \cdots, n_m] = \mathbf{y}_1[n_1]\cdots\mathbf{y}_M[n_m]$,
where $\mathbf{y}_m\in\mathbb{R}^{N_m}$ is the filter vector for the $m$-th mode, then the multi-dimensional convolution is the same as iteratively applying one dimensional convolution onto $\mathcal{X}$:
\begin{equation}
\begin{split}
    \mathcal{Y}\star\mathcal{X} = \mathbf{y}_1\star_1\cdots\star_{M-1}\mathbf{y}_M\star_{M}\mathcal{X}
\end{split}
\end{equation}
where $\star_m$ denotes convolution on the $m$-th mode.

Suppose $\mathcal{X}\in\mathbb{R}^{N_1\times N_2}$ and $\mathcal{Y}=\mathbf{y}_1\cdot\mathbf{y}_2^T$, where $\mathbf{y}_1\in\mathbb{R}^{N_1}$ and $\mathbf{y}_2\in\mathbb{R}^{N_2}$.
Then $\mathcal{Y}\star\mathcal{X}$ means applying $\mathbf{y}_1$ and $\mathbf{y}_2$ to the rows and columns of $\mathcal{X}$ respectively.
Formally we have:
\begin{equation}
    \mathcal{Y}\star\mathcal{X} = \mathbf{y}_1\star_1\mathbf{y}_2\star_2\mathcal{X} = \mathbf{Y}_1^T\mathcal{X}\mathbf{Y}_2 = \mathcal{X}\prod_{m=1}^2\times_{m}\mathbf{Y}_m
\end{equation}
where $\mathbf{Y}_1\in\mathbb{R}^{N_1\times N_1}$ and $\mathbf{Y}_2\in\mathbb{R}^{N_2\times N_2}$ are the transformation matrix corresponding to $\mathbf{y}_1$ and $\mathbf{y}_2$ respectively.
\end{definition}

\subsection{Network of Tensor Time Series}\label{sub:net3}
\begin{definition}[Tensor Time Series]
A tensor time series is a $(M+1)$-mode tensor $\mathcal{S}\in\mathbb{R}^{N_1\times\cdots\times N_M\times T}$ or $\{\mathcal{S}_t\in\mathbb{R}^{N_1\times\cdots\times N_M}\}_{t=1}^T$, where the $(M+1)$-th mode is the time and its dimension is $T$.
\end{definition}

\begin{definition}[Tensor Graph]\label{def:tensor_graph}
The tensor graph is comprised of a $M$-mode tensor $\mathcal{X}\in\mathbb{R}^{N_1\times\cdots\times N_M}$ and the adjacency matrices for each mode $\mathbf{A}_m\in\mathbb{R}^{N_m\times N_m}$.
Note that if $m$-th mode is not associated with an adjacency matrix, then $\mathbf{A}_m = \mathbf{I}_m$, where $\mathbf{I}_m\in\mathbb{R}^{N_m\times N_m}$ denotes the identity matrix.

\end{definition}

\begin{definition}[Network of Tensor Time Series]
A network of tensor time series is comprised of (1) a tensor time series $\mathcal{S}\in\mathbb{R}^{N_1\times\cdots\times N_M\times T}$ and (2) a set of adjacency matrices $\mathbf{A}_m\in\mathbb{R}^{N_m\times N_m}$ ($m\in[1,\cdots, M]$) for all but the last mode   (i.e., the time mode).
% Note that if the $m$-th mode does not associate with an adjacency matrix, then $\mathbf{A}_m=\mathbf{I}_m$.

\end{definition}

% \subsection{Problem Definition}\label{sec:problem_definition}
% In this paper, we focus on two classic problems for time series, namely, missing value recovery and future value prediction.
% \begin{definition}[Missing Value Recovery]\label{prob:missing}
% Given a network of tensor time series with $\mathcal{S}\in\mathbb{R}^{N_1\times\cdots\times N_M\times T}$ and $\mathbf{A}_m\in\mathbb{R}^{N_m\times N_m}$ ($m\in[1,\cdots, M]$), and an indicator $\mathcal{I}\in\mathbb{R}^{N_1\times\cdots\times N_M\times T}$ indicating the presence $\mathcal{I}[n_1, \cdots, n_M, t]=1$ and the absence $\mathcal{I}[n_1, \cdots, n_M, t]=0$ of the data point $\mathcal{S}[n_1, \cdots, n_M, t]$, the task of missing value recovery is to recover the missing data points of $\mathcal{S}$.
% \end{definition}

% \begin{definition}[Future Value Prediction]\label{prob:future}
% Given a network of tensor time series with $\mathcal{S}\in\mathbb{R}^{N_1\times\cdots\times N_M\times T}$ and $\mathbf{A}_m\in\mathbb{R}^{N_m\times N_m}$ ($m\in[1,\cdots, M]$), and a time step $T'$, the task of the future value prediction is to predict the future values of $\mathcal{S}$ from the time step $T+1$ to $T+T'$.
% \end{definition}

\subsection{Problem Definition}\label{sec:problem_definition}
In this paper, we focus on the representation learning for the network of tensor time series by predicting its future values. 
The model trained by predicting the future values can also be applied to recover the missing values of the time series.

\begin{definition}[Future Value Prediction]\label{prob:future}
Given a network of tensor time series with $\mathcal{S}\in\mathbb{R}^{N_1\times\cdots\times N_M\times T}$ and $\{\mathbf{A}_m\in\mathbb{R}^{N_m\times N_m}\}_{m=1}^M$, and a time step $T'$, the task of the future value prediction is to predict the future values of $\mathcal{S}$ from $T+1$ to $T+T'$.
\end{definition}

\begin{definition}[Missing Value Recovery]\label{prob:missing}
We formulate the task of missing value recovery from the perspective of future value prediction.
Suppose the data point $\mathcal{S}[n_1, \cdots, n_M, T']$ ($T'\leq T$) of $\mathcal{S}\in\mathbb{R}^{N_1\times\cdots\times N_M\times T}$ is missing, then we takes $\omega\leq T'$ historical values of $\mathcal{S}$ prior to the time step $T'$: $\{\mathcal{S}_t\}_{\omega=T'-\omega}^{T'-1}$ as input, and predict the value of the $\hat{\mathcal{S}}[n_1, \cdots, n_M, T']$. 
\end{definition}

\section{Methodology}\label{sec:methods}
An overview of the proposed \nettt\ is presented in Figure \ref{fig:model}, which works as follows.
At each time step $t$, the proposed Tensor Graph Convolutional Network (TGCN) (Section \ref{sec:tensor_graph_conv}) takes as input the $t$-th snapshot $\mathcal{S}_t\in\mathbb{R}^{N_1\times\cdots\times N_M}$ along with its adjacency matrices $\{\mathbf{A}_m\in\mathbb{R}^{N_m\times N_m}\}_{m=1}^M$ 
% from the tensor time series $\mathcal{S}\in\mathbb{R}^{N_1\times\cdots\times N_M\times T}$ 
and extracts its node embedding tensor $\mathcal{H}_t$, which will be fed into the proposed Tensor Recurrent Neural Network (TRNN) (Section \ref{sec:trnn}) to encode temporal dynamics and produce $\mathcal{R}_t$. Finally, the output module (Section \ref{sec:output}) takes both $\mathcal{H}_t$ and $\mathcal{R}_t$ to predict the snapshot of the next time step $\hat{\mathcal{S}}_{t+1}$.
Note that $\mathcal{Y}_{t}$ in Figure \ref{fig:model} denotes the hidden state of TRNN at the time step $t$.

\subsection{Tensor Graph Convolution Network}\label{sec:tensor_graph_conv}
%\hh{(1) let us have a small intro paragraph to summarize the structure of this subsection. -- we have space, and right now, it reads a bit dry and flow is a bit abrupt; (2) for the major things that we propose, explicitly use word 'we propose' instead of 'we introduce' or 'we define'. sec 3.2 does a better job on both points.}
%\by{updated}

In this subsection, we first introduce spectral graph convolution on tensor graphs and its Chebychev approximation in Subsection \ref{subsec:spectral_convo_for_tensor_graph}.
Then we provide a detailed derivation for the layer-wise updating function of the proposed TGCN in Subsection \ref{subsec:tgcl}.

\subsubsection{Spectral Convolution for Tensor Graph}\label{subsec:spectral_convo_for_tensor_graph}
Analogues to the multi-dimensional Fourier transform (Definition \ref{def:mdft}) and the graph Fourier transform on flat graphs (Definition \ref{def:gft}), we first define the Fourier transform on tensor graphs in Definition \ref{def:tensor_graph_ft}.
Then based on the separable multi-dimensional convolution (Definition \ref{def:sep_conv}), and tensor graph Fourier transform (Definition \ref{def:tensor_graph_ft}), we propose spectral convolution on tensor graphs in Definition \ref{def:multi_graph_conv}.
Finally, in Definition \ref{def:cheby_conv_tensor}, we propose to use Chebychev approximation in order to parameterize the free parameters in the filters of spectral convolution.

\begin{definition}[Tensor Graph Fourier Transform]\label{def:tensor_graph_ft}
Given a graph signal $\mathcal{X}\in\mathbb{R}^{N_1\times\cdots\times N_M}$, along with its adjacency matrices for each mode $\mathbf{A}_m\in\mathbb{R}^{N_m\times N_m}$ ($m\in[1,\cdots, M]$), the tensor graph Fourier transform is defined by:
\begin{equation}
    \Tilde{\mathcal{X}}=\mathcal{X}\prod_{m=1}^M\times_m\mathbf{\Phi}_m
\end{equation}
where $\mathbf{\Phi}_m$ is the eigenvector matrix of graph Laplacian matrix $\mathbf{L}_m=\mathbf{\Phi}_m\mathbf{\Lambda}_m\mathbf{\Phi}_m^T$ for $\mathbf{A}_m$;
$\times_m$ denotes the mode-m product.
\end{definition}

\begin{definition}[Spectral Convolution for Tensor Graph]\label{def:multi_graph_conv}
Given an input graph signal $\mathcal{X}\in\mathbb{R}^{N_1\times\cdots\times N_M}$, and a multi-dimensional filter $\mathcal{G}\in\mathbb{R}^{N_1\times\cdots\times N_M}$ defined by $\mathcal{G}[n_1,\cdots,n_M]=\mathbf{g}_1[n_1]\cdots\mathbf{g}_M[n_M]$, where $\mathbf{g}_m\in\mathbb{R}^{N_m}$ is the filter vector for the $m$-th mode. 
By analogizing to spectral graph convolution (Definition \ref{def:graph_conv}) and separable multi-dimensional convolution (Definition \ref{def:sep_conv}) , we define spectral convolution for tensor graph as:
\begin{equation}
    \mathcal{G}\star\mathcal{X} = \mathcal{X}\prod_{m=1}^M\times_m\mathbf{\Phi}^T_m\textrm{diag}(\Tilde{\mathbf{g}}_m)\mathbf{\Phi}_m
\end{equation}
where $\Tilde{\mathbf{g}}_m=\mathbf{\Phi}_m^T\mathbf{g}_m$ is the Fourier transformed filter for the $m$-th mode;
$\star$ and $\times_m$ denote the convolution operation and the mode-m product respectively;
$\textrm{diag}(\mathbf{g}_m)$ denotes the diagonal matrix, of which the diagonal elements are the elements in $\mathbf{g}_m$.
\end{definition}

\begin{definition}[Chebyshev Approximation for Spectral Convolution on Tensor Graph]\label{def:cheby_conv_tensor}
Given a tensor graph $\mathcal{X}\in\mathbb{R}^{N_1\times\cdots\times N_M}$, where each mode is associated with an adjacency matrix $\mathbf{A}_m\in\mathbb{R}^{N_m\times N_m}$, the Chebychev approximation for spectral convolution on tensor graphs is given by approximating $\Tilde{\mathbf{g}}_m$ by Chebyshev polynomials:
\begin{equation}\label{eq:cheby_conv_tensor}
\begin{split}
    \mathcal{G}_\theta\star\mathcal{X} &= \mathcal{X}\prod_{m=1}^M\times_m\mathbf{\Phi}^T_m(\sum_{p_m=0}^P\theta_{m,p_m}T_{p_m}(\Tilde{\mathbf{\Lambda}}_m))\mathbf{\Phi}_m\\
    &=\mathcal{X}\prod_{m=1}^M\times_m\sum_{p_m=0}^P\theta_{m,p_m}T_{p_m}(\Tilde{\mathbf{L}}_m)
\end{split}
\end{equation}
where $\mathcal{G}_\theta$ denotes the convolution filter parameterized by $\theta$;
$\mathbf{\Lambda}_m\in\mathbb{R}^{N_m\times N_m}$ is the
matrix of eigenvalues for the graph Laplacian matrix $\mathbf{L}_m=\mathbf{I}_m - \mathbf{D}_m^{-\frac{1}{2}}\mathbf{A}_m\mathbf{D}_m^{-\frac{1}{2}}=\mathbf{\Phi}_m\mathbf{\Lambda}_m\mathbf{\Phi}_m^T$;
$\Tilde{\mathbf{\Lambda}}_m = \frac{2}{\lambda_{m,max}}\mathbf{\Lambda}_m - \mathbf{I}_m$ is the normalized eigenvalues,
$\lambda_{m,max}$ is maximum eigenvalue in the matrix $\mathbf{\Lambda}_m$;
$\Tilde{\mathbf{L}}_m = \frac{2}{\lambda_{m,max}}\mathbf{L}_m - \mathbf{I}_m$;
$T_{p_m}(x)$ is Chebyshev polynomials defined by $T_{p_m}(x) = 2xT_{p_m-1}(x) - T_{p_m-2}(x)$ with $T_0(x) = 1$ and $T_1(x) = x$, and $p_m$ denotes the order of polynomials;
$\theta_{m,p_m}$ denote the co-efficient of $T_{p_m}(x)$.
For clarity, we use the same polynomial degree $P$ for all modes. 
\end{definition}

\subsubsection{Tensor Graph Convolutional Layer}\label{subsec:tgcl}
Due to the linearity of mode-m product, Equation \eqref{eq:cheby_conv_tensor} can be re-formulated as:
\begin{equation}\label{eq:cheby_conv_tensor_2}
\begin{split}
    \mathcal{G}_\theta\star\mathcal{X} &= \sum_{p_1, \cdots, p_M=0}^P\mathcal{X}\prod_{m=1}^M\times_m\theta_{m,p_m} T_{p_m}(\Tilde{\mathbf{L}}_m)\\
    &=\sum_{p_1, \cdots, p_M=0}^P\prod_{m=1}^M\theta_{m,p_m}  \mathcal{X}\prod_{m=1}^M\times_mT_{p_m}(\Tilde{\mathbf{L}}_m)
\end{split}
\end{equation}

We follow \cite{kipf2016semi} to simplify Equation \eqref{eq:cheby_conv_tensor_2}.
Firstly, let $\lambda_{m,max}=2$ and we have:
\begin{equation}\label{eq:L_tmp}
\begin{split}
    \Tilde{\mathbf{L}}_m &=\frac{2}{\lambda_{m,max}}\mathbf{L}_m - \mathbf{I}_m\\
    &=\mathbf{I}_m - \mathbf{D}_m^{-\frac{1}{2}}\mathbf{A}_m\mathbf{D}_m^{-\frac{1}{2}}- \mathbf{I}_m\\
    &=-\mathbf{D}_m^{-\frac{1}{2}}\mathbf{A}_m\mathbf{D}_m^{-\frac{1}{2}}
\end{split}
\end{equation}
For clarity, we use $\Tilde{\mathbf{A}}_m$ to represent $\mathbf{D}_m^{-\frac{1}{2}}\mathbf{A}_m\mathbf{D}_m^{-\frac{1}{2}}$.
Then we fix $P=1$ and drop the negative sign in Equation \eqref{eq:L_tmp} by absorbing it to parameter $\theta_{m,p_m}$.
Therefore, we have
\begin{equation}\label{eq:tmp2}
    \sum_{p=0}^P\theta_{m,p_m}T_p(\Tilde{\mathbf{L}}_m) = \theta_{m,0} + \theta_{m,1}\Tilde{\mathbf{A}}_m
\end{equation}
Furthermore, by plugging Equation \eqref{eq:tmp2} back into Equation \eqref{eq:cheby_conv_tensor_2} and replacing the product of parameters $\prod_{m=1}^M\theta_{m,p_m}$ by a single parameter $\theta_{p_1,\cdots, p_M}$, we will obtain:
\begin{equation}\label{eq:cheby_tensor_graph_final}
\begin{split}
    \mathcal{G}_\theta\star\mathcal{X} = \sum_{\exists p_m=1}\theta_{p_1,\cdots, p_M}\mathcal{X}\prod_{p_m=1}\times_m\Tilde{\mathbf{A}}_m + \theta_{0,\cdots, 0}\mathcal{X}
\end{split}
\end{equation}
We can observe from the above equation that $p_m$ works as an indicator for whether applying the convolution filter $\Tilde{\mathbf{A}}_m$ to $\mathcal{X}$ or not.
If $p_m=1$, then $\Tilde{\mathbf{A}}_m$ will be applied to $\mathcal{X}$, otherwise, $\mathbf{I}_m$ will be applied.
When $p_m=0$ for $\forall m\in[1, \cdots, M]$, we will have $\theta_{0,\cdots, 0}\mathcal{X}$. 
To better understand how the above approximation works on tensor graphs, let us assume $M=2$.
Then we have:
\begin{equation}\label{eq:cheby_tensor_graph_example}
    \mathcal{G}_\theta\star\mathcal{X} = \theta_{1,1}\mathcal{X}\times_1\Tilde{\mathbf{A}}_1\times_2\Tilde{\mathbf{A}}_2 + \theta_{1,0}\mathcal{X}\times_1\Tilde{\mathbf{A}}_1 + \theta_{0,1}\mathcal{X}\times_2\Tilde{\mathbf{A}}_2 + \theta_{0,0}\mathcal{X}
\end{equation}

Given the approximation in Equation \eqref{eq:cheby_tensor_graph_final}, we propose the tensor graph convolution layer in Definition \ref{def:tgcl}.

\begin{definition}[Tensor Graph Convolution Layer]\label{def:tgcl}
Given an input tensor $\mathcal{X}\in\mathbb{R}^{N_1\times\cdots\times N_M\times d}$, where $d$ is the number of channels, along with its adjacency matrices $\{\mathbf{A}_m\}_{m=1}^M$, the Tensor Graph Convolution Layer (TGCL) with $d'$ output channels is defined by:
\begin{equation}\label{eq:tgcl}
\begin{split}
    & \text{TGCL}(\mathcal{X}, \{\mathbf{A}_m\}_{m=1}^M)\\
    = & \sigma(\sum_{\exists p_m=1}\mathcal{X}\prod_{p_m=1}\times_m\Tilde{\mathbf{A}}_m\times_{M+1}\mathbf{\Theta}_{p_1,\cdots, p_M} + \mathcal{X}\times_{M+1}\mathbf{\Theta}_{0})
\end{split}
\end{equation}
where $\mathbf{\Theta}\in\mathbb{R}^{d\times d'}$ is parameter matrix;
$\sigma(\cdot)$ is activation function. 
\end{definition}

In the \nettt model (Figure \ref{fig:model}), given a snapshot $\mathcal{S}_t\in\mathbb{R}^{N_1\times\cdots\times N_M}$ along with its adjacency matrices $\{\mathbf{A}_m\}_{m=1}^M$, we use a one layer TGCL to obtain the node embeddings $\mathcal{H}_t\in\mathbb{R}^{N_1\times\cdots\times N_M\times d}$, where $d$ is the dimension of the node embeddings:
\begin{equation}\label{eq:tgcn2h}
    \mathcal{H}_t = \textrm{TGCN}(\mathcal{S}_t)
\end{equation}
% \hh{i think we still need to say sth like the output of tgcl is Ht, or put Ht in the equation, e.g., Ht=TGCL ....otherwise, it reads a bit dis-connected with 3.2}
% \by{updated}

\subsubsection{Synergy Analysis}
The proposed TGCL effectively models tensor graphs and captures the synergy among different adjacency matrices.
The vector $\mathbf{p}=[p_1, \cdots, p_M]\in[0,1]^M$ represents a combination of $M$ networks, where $p_m=1$ and $p_m=0$ respectively indicate the presence and absence of the $\Tilde{\mathbf{A}}_m$. 
Therefore, each node in $\mathcal{X}$ could collect other nodes' information along the adjacency matrix $\Tilde{\mathbf{A}}_m$ if $p_m=1$.
For example, suppose $M=2$ and $p_1=p_2=1$ (as shown in Figure \ref{fig:synergy} and Equation \eqref{eq:cheby_tensor_graph_example}), then node $\mathcal{X}[1, 1]$ (node $v$) could reach node $\mathcal{X}[2,2]$ (node $w'$) by passing node $\mathcal{X}[2, 1]$ along the adjacency matrix $\Tilde{\mathbf{A}}_1$ ($\mathcal{X}\times_1\Tilde{\mathbf{A}}_1$) and then arriving at node $\mathcal{X}[2,2]$ via $\Tilde{\mathbf{A}}_2$ ($\mathcal{X}\times_1\Tilde{\mathbf{A}}_1\times_2\Tilde{\mathbf{A}}_2$).
In contrast, with a traditional GCN layer, node $v$ can only gather information of its direct neighbors from a given model (node $v'$ via $\Tilde{\mathbf{A}}_1$ or $w$ via $\Tilde{\mathbf{A}}_2$).

An additional advantage of TGCL lies in that it is robust to missing values in $\mathcal{X}$ since TGCL is able to recover the value of a node from various combination of adjacency matrices.
For example, suppose the value of node $v=0$, then TGCL could recover its value by referencing the value of $v'$ (via $\mathcal{X}\times_1\Tilde{\mathbf{A}}_1$), or the value of $w$ (via $\mathcal{X}\times_2\Tilde{\mathbf{A}}_2$), or the value of $w'$ (via $\mathcal{X}\times_1\Tilde{\mathbf{A}}_1\times_2\Tilde{\mathbf{A}}_2$).
However, a GCN layer could only refer to the node $v'$ via $\Tilde{\mathbf{A}}_1$ or $w$ via $\Tilde{\mathbf{A}}_2$.

\begin{figure}[h!]
    \centering
    \includegraphics[width=.18\textwidth]{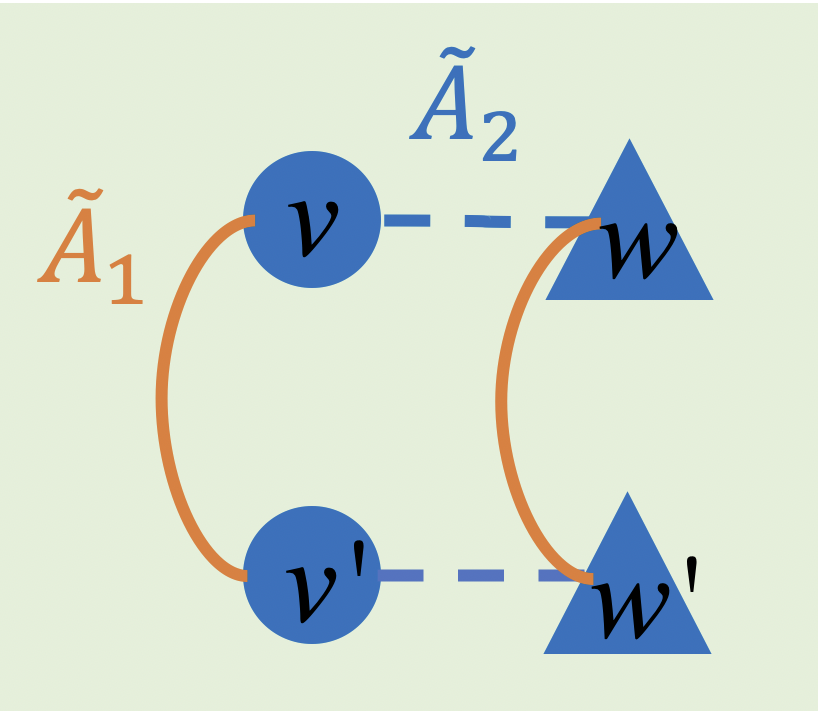}
    \caption{An illustration of synergy analysis of TGCL.}
    \label{fig:synergy}
\end{figure}

\begin{figure*}[t!]
    \centering
    \includegraphics[width=0.99\textwidth]{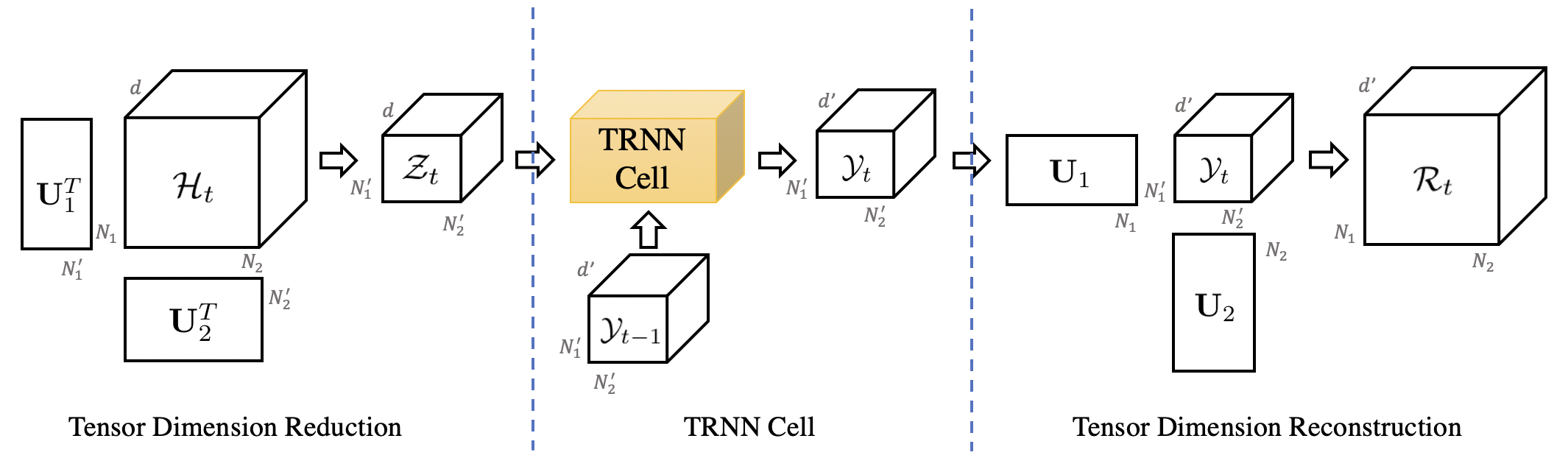}
    \caption{Tensor Recurrent Neural Network (TRNN).}
    \label{fig:trnn}
\end{figure*}

\subsubsection{Complexity Analysis}
% The complexity of the tensor graph convolution operation (Equation \eqref{eq:cheby_tensor_graph_final}) is 
% $O(2^{M-1}\prod_{m=1}^MN_m(2+\sum_{m=1}^MN_m))$. 
For a $M$-mode tensor with $K (1\leq K\leq M)$ networks, the complexity of the tensor graph convolution (Equation \eqref{eq:cheby_tensor_graph_final}) is $O(2^{K-1}\prod_{m=1}^MN_m(2+\sum_{k=1}^KN_k))$.

\subsection{Tensor Recurrent Neural Network}\label{sec:trnn}

Given the output from TGCN: $\mathcal{H}_t\in\mathbb{R}^{N_1\times\cdots\times N_M\times d}$ (Equation \eqref{eq:tgcn2h}), the next step is to incorporate temporal dynamics for $\mathcal{H}_t$.

% Traditional machine learning methods \cite{rogers2013multilinear, cai2015fast,cai2015facets} relying on linear dynamics systems fail to capture the non-linearity of temporal dynamics.
% Albeit recent deep learning methods successfully inject the non-linearity, most of them \cite{li2017diffusion, yu2017spatio, li2019predicting, yan2018spatial} ignored the specificity of each time series by simply using a same function to model temporal dynamics for all time series, which will result in sub-optimal solutions.
% A few methods \cite{srivastava2018comparative, zhou2018recover} ignored the commonality by instantiating separate RNNs for different time series, and these methods do not scale well in practice.
% To solve the drawbacks of the existing methods, 
As shown in Figure \ref{fig:trnn}, we propose a novel Tensor Recurrent Neural Network (TRNN), 
which captures the implicit relation among co-evolving time series by decomposing $\mathcal{H}_t$ into a low dimensional core tensor $\mathcal{Z}_t\in\mathbb{R}^{N_1'\times\cdots\times N_M'\times d}$ ($N_m'<N_m$) via a Tensor Dimension Reduction module (Section \ref{sec:tensor_reduction}).
The Tensor RNN Cell (Section \ref{sec:tensor_rnn_cell}) further introduces non-linear temporal dynamics into $\mathcal{Z}_t$ and produces the hidden state $\mathcal{Y}_{t}\in\mathbb{R}^{N_1'\times\cdots\times N_M'\times d}$.
Finally, the Tensor Dimension Reconstruction module (Section \ref{sec:tensor_reconstruction}) reconstructs $\mathcal{Y}_{t}$ and generates the reconstructed tensor $\mathcal{R}_t\in\mathbb{R}^{N_1\times\cdots\times N_M\times d}$.

\subsubsection{Tensor Dimension Reduction}\label{sec:tensor_reduction}
As shown in the left part of Figure \ref{fig:trnn}, the proposed tensor dimension reduction module will reduce the dimensionality of each mode of $\mathcal{H}_t\in\mathbb{R}^{N_1\times\cdots\times N_M\times d}$, except for the last mode (hidden features), by leveraging Tucker decomposition (Definition \ref{def:tucker}):
\begin{equation}\label{eq:trnn_decompose}
    \mathcal{Z}_t = \mathcal{H}_t\prod_{m=1}^M\times_m\mathbf{U}_m^T
\end{equation}
where $\mathbf{U}_m\in\mathbb{R}^{N'_m\times N_m}$ denotes the orthonormal parameter matrix, which is learnable via backpropagation; 
$\mathcal{Z}_t\in\mathbb{R}^{N_1' \times\cdots\times N_m'\times d}$ is the core tensor of $\mathcal{H}_t$.

\subsubsection{Tensor RNN Cell}\label{sec:tensor_rnn_cell}
Classic RNN cells, e.g. Long-Short-Term-Memory (LSTM) \cite{hochreiter1997long} 
% and Gated Recurrent Unit (GRU) \cite{cho2014learning}, 
are designed for a single input sequence, and therefore do not directly capture the correlation among co-evolving sequences.
To address this problem, we propose a novel Tensor RNN (TRNN) cell based on tensor algebra.

We first propose a Tensor Linear Layer (TLL):
\begin{equation} \label{eq:ttl}
    \text{TLL}(\mathcal{X}) = \mathcal{X}\prod_{m=1}^{M+1}\times_m\mathbf{W}_m + \mathbf{b}
\end{equation}
where $\mathcal{X}\in\mathbb{R}^{N_1\times\cdots\times N_M\times d}$ is the input tensor,
and $\mathbf{W}_m\in\mathbb{R}^{N_m\times N_m'}$ ($\forall m\in[1,\cdots, M]$) and $\mathbf{W}_{M+1}\in\mathbb{R}^{d\times d'}$ are the linear transition parameter matrices;
$\mathbf{b}\in\mathbb{R}^{d'}$ denotes the bias vector.
% $\mathbf{W}_m$ ($\forall m\in[1,\cdots, M]$) captures the correlation of the $m$-th mode and $\mathbf{W}_{M+1}$ is the linear transformation matrix mapping hidden representation from the $\mathbb{R}^{d}$ space to $\mathbb{R}^{d'}$ space.

TRNN can be obtained by replacing the linear functions in any RNN cell with the proposed TLL.
We take LSTM as an example to re-formulate its updating equations.
By replacing the linear functions in the LSTM with the proposed TLL, we have updating functions for Tensor LSTM (TLSTM)\footnote{Bias vectors are omitted for clarity.}:
\begin{align}
    \mathcal{F}_{t} &= \sigma(\textrm{TLL}_{fz}(\mathcal{Z}_t) +  \textrm{TLL}_{fy}(\mathcal{Y}_{t-1})) \\
    \mathcal{I}_{t} &= \sigma(\textrm{TLL}_{iz}(\mathcal{Z}_t) + \textrm{TLL}_{iy}(\mathcal{Y}_{t-1})) \\
    \mathcal{O}_{t} &= \sigma(\textrm{TLL}_{oz}(\mathcal{Z}_t) + \textrm{TLL}_{oy}( \mathcal{Y}_{t-1})) \\
    \Tilde{\mathcal{C}}_{t} &= \tanh(\textrm{TLL}_{cz}(\mathcal{Z}_t) + \textrm{TLL}_{cy}(\mathcal{Y}_{t-1})) \\
    \mathcal{C}_{t} &= \mathcal{F}_{t} \odot \mathcal{C}_{t-1} + \mathcal{I}_{t} \odot \Tilde{\mathcal{C}}_{t} \\
    \mathcal{Y}_{t} &= \mathcal{O}_{t} \odot \sigma(\mathcal{C}_{t})
\end{align}
% \begin{align}
%     \mathcal{F}_{t} &= \sigma(\mathcal{W}_f \circledast \mathcal{Z}_t + \mathcal{V}_f \circledast \mathcal{Y}_{t-1} + \mathbf{b}_f) \\
%     \mathcal{I}_{t} &= \sigma(\mathcal{W}_i \circledast \mathcal{Z}_t + \mathcal{V}_i \circledast \mathcal{Y}_{t-1} + \mathbf{b}_i) \\
%     \mathcal{O}_{t} &= \sigma(\mathcal{W}_o \circledast \mathcal{Z}_t + \mathcal{V}_o \circledast \mathcal{Y}_{t-1} + \mathbf{b}_o) \\
%     \Tilde{\mathcal{C}}_{t} &= \tanh(\mathcal{W}_c \circledast \mathcal{Z}_t + \mathcal{V}_c \circledast \mathcal{Y}_{t-1} + \mathbf{b}_c) \\
%     \mathcal{C}_{t} &= \mathcal{F}_{t} \odot \mathcal{C}_{t} + \mathcal{I}_{t} \odot \Tilde{\mathcal{C}}_{t-1} \\
%     \mathcal{Y}_{t} &= \mathcal{O}_{t} \odot \sigma(\mathcal{C}_{t})
% \end{align}
where $\mathcal{Z}_t\in\mathbb{R}^{N_1'\times\cdots\times N_M'\times d}$ and $\mathcal{Y}_t\in\mathbb{R}^{N_1'\times\cdots\times N_M'\times d'}$ denote the input core tensor and the hidden state tensor at the time step $t$; 
$\mathcal{F}_{t}$, $\mathcal{I}_{t}$, $\mathcal{O}_{t}\in\mathbb{R}^{N_1'\times\cdots\times N_M'\times d'}$ denote the forget gate, the input gate and the output gate, respectively;
$\Tilde{\mathcal{C}}_{t}\in\mathbb{R}^{N_1'\times\cdots\times N_M'\times d'}$ is the tensor for updating the cell memory $\mathcal{C}_t\in\mathbb{R}^{N_1'\times\cdots\times N_M'\times d'}$; 
TLL$_{\ast}(\cdot)$ denotes the tensor linear layer (Equation \eqref{eq:ttl}), and its subscripts in the above equations are used to distinguish different initialization of TLL\footnote{For all TLL related to $\mathcal{Z}_t$: TLL$_{\ast z}(\cdot)$, $\mathbf{W}_m\in\mathbb{R}^{N_m\times N_m'}$ ($\forall m\in[1,\cdots, M]$) and $\mathbf{W}_{M+1}\in\mathbb{R}^{d\times d'}$. For all TLL related to $\mathcal{Y}_{t-1}$: TLL$_{\ast y}(\cdot)$, $\mathbf{W}_m\in\mathbb{R}^{N_m'\times N_m'}$ ($\forall m\in[1,\cdots, M]$) and $\mathbf{W}_{M+1}\in\mathbb{R}^{d'\times d'}$.};
% $\mathbf{b}_\ast$ denotes the bias vectors;
$\sigma(\cdot)$ and $\tanh(\cdot)$ denote the sigmoid activation and tangent activation functions respectively; $\odot$ denotes the Hadamard product.

\subsubsection{Tensor Dimension Reconstruction}\label{sec:tensor_reconstruction}
To predict the values of each time series, we need to reconstruct the dimensionality of each mode. Thanks to  the orthonormality of $\mathbf{U}_m$ ($\forall m\in[1, \cdots, M]$), 
we can naturally reconstruct the dimensionality of $\mathcal{Y}_t\in\mathbb{R}^{N_1'\times\cdots\times N_M'\times d'}$ as follows: 
\begin{equation}
    \mathcal{R}_t = \mathcal{Y}_t\prod_{m=1}^M\times_m\mathbf{U}_m
\end{equation}
where $\mathcal{R}_{t}\in\mathbb{R}^{N_1\times\cdots\times N_M\times d'}$ is the reconstructed tensor.

\subsubsection{Implicit Relationship}
The Tucker decomposition (Definition \ref{def:tucker} and Equation \eqref{eq:trnn_decompose} can be regarded as high-order principal component analysis \cite{kolda2009tensor}.
The matrix $\mathbf{U}_m$ extracts eigenvectors of the $m$-th mode, and each element in $\mathcal{Z}$ indicates the relation between different eigenvectors.
We define $\rho\geq0$ as the indicator of \textit{interaction degree}, such that $N_m' = \rho N_m$ ($\forall m\in[1,\cdots, M]$), to represent to what degree does the TLSTM capture the correlation. 
The ideal range for $\rho$ is $(0,1)$.
When $\rho=0$, the TLSTM does not capture any relations and it is reduced to a single LSTM.
When $\rho=1$, the TLSTM captures the relation for each pair of the eigenvectors.
When $\rho>1$, the $\mathbf{U}_m$ is over-complete and contains redundant information.

Despite the dimentionality reduced by Equation \eqref{eq:trnn_decompose}, it is not guaranteed that the number of parameters in TLSTM will always be less than the number of parameters in multiple separate LSTMs, because of the newly introduced parameters $\mathbf{U}_m$ ($\forall m\in[1, \cdots, M]$).
The following lemma provides an upper-bound for $\rho$ given the dimensions of the input tensor and the hidden dimensions.

\begin{lemma}[Upper-bound for $\rho$]
Let $N_m$ and $N_m'$ be the dimensions of $\mathbf{U}_m$ in Equation \eqref{eq:trnn_decompose}, and let $d\in\mathbb{R}$ and $d'\in\mathbb{R}$ be the hidden dimensions of the inputs and outputs of TLSTM. TLSTM uses less parameters than multiple separate LSTMs, as long as the following condition holds:
\begin{equation}\label{eq:reduction_guarantee}
    \rho \leq \sqrt{\frac{(\prod_{m=1}^MN_m-1)d'(d+d'+1)}{2\sum_{m=1}^MN_m^2}+\frac{1}{256}} - \sqrt{\frac{1}{256}}
\end{equation}
\end{lemma}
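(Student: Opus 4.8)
The plan is to count the trainable parameters of both architectures explicitly, impose the inequality that the TLSTM count does not exceed the count for the separate LSTMs, and solve the resulting quadratic inequality in $\rho$. Throughout I write $S=\sum_{m=1}^M N_m^2$ and use $N_m'=\rho N_m$.

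First I would establish the baseline. A standard LSTM with input feature dimension $d$ and hidden dimension $d'$ has four gates, each carrying an input weight in $\mathbb{R}^{d'\times d}$, a recurrent weight in $\mathbb{R}^{d'\times d'}$, and a bias in $\mathbb{R}^{d'}$, for a per-cell total of $4d'(d+d'+1)$. Since one separate LSTM is instantiated for each of the $\prod_{m=1}^M N_m$ co-evolving series, the separate-LSTM parameter count is $P_S=(\prod_{m=1}^M N_m)\,4d'(d+d'+1)$.

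Next I would count the TLSTM parameters in three groups. (i) The Tucker factors $\mathbf{U}_m\in\mathbb{R}^{N_m'\times N_m}$ are shared between the reduction in Equation~\eqref{eq:trnn_decompose} and the reconstruction, so they are counted once and contribute $\sum_{m=1}^M N_m'N_m=\rho S$. (ii) The cell contains eight tensor linear layers (the four gates, each acting on both $\mathcal{Z}_t$ and $\mathcal{Y}_{t-1}$); since both of these tensors already live in the reduced mode dimensions $N_1'\times\cdots\times N_M'$, each of the $M$ mode matrices $\mathbf{W}_m$ lies in $\mathbb{R}^{N_m'\times N_m'}$ and contributes $(N_m')^2=\rho^2 N_m^2$, so over all eight layers the mode matrices contribute $8\rho^2 S$. (iii) The feature matrices $\mathbf{W}_{M+1}$ (four of shape $d\times d'$ and four of shape $d'\times d'$) together with the four gate biases reproduce exactly the single-cell cost $4d'(d+d'+1)$. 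Hence $P_T=8\rho^2 S+\rho S+4d'(d+d'+1)$.

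Finally I would impose $P_T\le P_S$. The common term $4d'(d+d'+1)$ appears on both sides and cancels, which is precisely what turns $\prod_{m}N_m$ into $\prod_{m}N_m-1$; the residual inequality is the quadratic $8S\rho^2+S\rho-(\prod_{m=1}^M N_m-1)\,4d'(d+d'+1)\le 0$. Dividing by $8S$ and completing the square about $\rho+\tfrac1{16}$ — noting that $\tfrac1{16}$ is half the normalized linear coefficient and that $\tfrac1{256}=(\tfrac1{16})^2=\tfrac{1}{256}$ — isolates $\rho$, and taking the nonnegative root gives exactly the bound in Equation~\eqref{eq:reduction_guarantee}. The only real obstacle is the parameter bookkeeping: correctly recognizing that there are eight TLLs whose mode matrices are each quadratic in $\rho$, that the $\mathbf{U}_m$ are reused rather than duplicated, and that the shared base cost $4d'(d+d'+1)$ is what produces the $-1$. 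Once these counts are fixed, the remaining algebra (completing the square and selecting the positive branch) is routine.
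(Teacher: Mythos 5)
Your proposal is correct and takes essentially the same approach as the paper: you arrive at identical parameter counts, namely $4d'(d+d'+1)\prod_{m=1}^M N_m$ for the separate LSTMs and $8\rho^2\sum_{m=1}^M N_m^2 + \rho\sum_{m=1}^M N_m^2 + 4d'(d+d'+1)$ for the TLSTM (with the Tucker factors $\mathbf{U}_m$ counted once), and then solve the same quadratic inequality in $\rho$. The only difference is presentational: you spell out the completing-the-square step that the paper leaves implicit after observing that $\Delta$ is convex in $\rho$.
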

\begin{proof}
There are totally $\prod_{m=1}^MN_m$ time series in the tensor time series $\mathcal{S}\in\mathbb{R}^{N_1\times\cdots\times N_M\times T}$, and thus the total number of parameters for $\prod_{m=1}^MN_m$ separate LSTM is:
\begin{equation}
\begin{split}
    N^{(LSTM)} &= \prod_{m=1}^{M}N_m[4(d d' + d' d' + d')] \\
    &=4d'(d+d'+1)\prod_{m=1}^MN_m
\end{split}
\end{equation}

The total number of parameters for the TLSTM is:
\begin{equation}
        N^{(TLSTM)} = 4d'(d+d'+1) + 8\sum_{m=1}^MN_m'^2 + \sum_{m=1}^M N_m'N_m
\end{equation}
where the first two terms on the right side are the numbers of parameters of the TLSTM cell, and the third term is the number of parameters required by $\{\mathbf{U}_m\}_{m=1}^M$ in the Tucker decomposition.

Let $\Delta = N^{(TLSTM)} - N^{(LSTM)}$, and let's replace $N_m'$ by $\rho N_m$, then we have:
\begin{equation}
    \Delta = (8\rho^2 + \rho)\sum_{m=1}^{M}N_m^2 - 4(\prod_{m=1}^MN_m - 1)d'(d+d'+1)
\end{equation}
Obviously, $\Delta$ is a convex function of $\rho$.
Hence, as long as $\rho$ satisfies the condition specified in the following equation, it can be ensured that the number of parameters is reduced.
\begin{equation}\label{eq:reduction_guarantee}
    \rho \leq \sqrt{\frac{(\prod_{m=1}^MN_m-1)d'(d+d'+1)}{2\sum_{m=1}^MN_m^2}+\frac{1}{256}} - \sqrt{\frac{1}{256}}
\end{equation}\end{proof}

\subsection{Output Module}\label{sec:output}
Given the reconstructed hidden representation tensor obtained from the TRNN: $\mathcal{R}_{t+1}\in\mathbb{R}^{N_1\times\cdots\times N_M\times d'}$, which captures the temporal dynamics, and the node embedding of the current snapshot $\mathcal{S}_t$: $\mathcal{H}_t\in\mathbb{R}^{N_1\times\cdots\times N_M\times d}$, the output module is a function mapping $\mathcal{R}_t$ and $\mathcal{H}_t$ to $\mathcal{S}_{t+1}\in\mathbb{R}^{N_1\times N_2 \cdots \times N_M}$.

We use a Multi-Layer Perceptron (MLP) with a linear output activation as the mapping function:
\begin{equation}
    \hat{\mathcal{S}}_{t+1} = \text{MLP}([\mathcal{H}_t, \mathcal{R}_t])
\end{equation}
where $\hat{\mathcal{S}}_{t+1}\in\mathbb{R}^{N_1\times\cdots\times N_M}$ represents the predicted snapshot; $\mathcal{H}_t$ and $\mathcal{R}_t$ are the outputs of TGCN and TRNN respectively; and $[\cdot, \cdot]$ denotes the concatenation operation.

\subsection{Training}
Directly training RNNs over the entire sequence is impractical in general \cite{sutskever2013training}.
A common practice is to partition the long time series data by a certain window size with $\omega$ historical steps and $\tau$ future steps \cite{li2017diffusion, yu2017spatio, li2019predicting}.

Given a time step $t$, let $\{\mathcal{S}_{t'}\}_{t'=t-\omega+1}^t$ and $\{\mathcal{S}_{t'}\}_{t'=t+1}^{t+\tau}$ be the historical and the future slices, the objective function of one window slice is defined as:
\begin{equation}
\begin{split}
    \arg \min_{\mathbf{\Theta}, \mathbf{\mathcal{W}}, \mathbf{\mathcal{B}}} & ||\nettt(\{\mathcal{S}_{t'}\}_{t'=t-\omega+1}^t) -  \{\mathcal{S}_{t'}\}_{t'=t+1}^{t+\tau})||_F^2\\
    + & \mu_1\sum_{t'=t-\omega+1}^{t}||\mathcal{H}_{t'}- \mathcal{Z}_{t'}\prod_{m=1}^M\times_m\mathbf{U}_m||_F^2 \\
    + & \mu_2 \sum_{m=1}^M||\mathbf{U}_m\mathbf{U}^T_m - \mathbf{I}_m||_F^2
\end{split}
\end{equation}
where \nettt denotes the proposed model;
$\mathbf{\Theta}$ and $\mathbf{\mathcal{W}}$  represent the parameters of TGCN and TRNN respectively;
$\mathbf{\mathcal{B}}$ denotes the bias vectors;
% \footnote{Subscripts and superscripts are ignored.} 
the second term denotes the reconstruction error of the Tucker decomposition;
the third term denotes the orthonormality regularization for $\mathbf{U}_m$, and
$\mathbf{I}_m$ denotes identity matrix ($\forall m\in[1,\cdots, M]$);
$||\cdot||_F$ is the Frobenius norm;
$\mu_1$ and $\mu_2$ are coefficients.

\section{Experiments}\label{sec:experiments}
In this section, we present the experimental results for the following questions:
\begin{itemize}
    \item [Q1.] How accurate is the proposed \nettt on recovering missing value and predicting future value?
    \item [Q2.] To what extent does the synergy captured by the proposed TGCN help improve the overall performance of \nettt?
    \item [Q3.] How does the interaction degree $\rho$ impact the performance of \nettt?
    \item [Q4.] How efficient and scalable is the proposed \nettt?
    % \hh{is it just TLSTM or Net3. the text in 4.3 seems to suggest the latter. double check}
    % \by{updated}
\end{itemize}

We first describe the datasets, comparison methods and implementation details in Subsection \ref{sec:exp_setup}, then we provide the results of the effectiveness and efficiency experiments in Subsection \ref{sec:exp_effectiveness} and Subsection \ref{sec:exp_efficiency}, respectively.
\subsection{Experimental Setup}\label{sec:exp_setup}
\subsubsection{Datasets} We evaluate the proposed \nettt\ model on five real-world datasets, whose statistics is summarized in Table~\ref{tab:dataset}.

\paragraph{Motes Dataset}
The \textit{Motes} dataset\footnote{\url{http://db.csail.mit.edu/labdata/labdata.html}} \cite{motes_dataset} is a collection of reading log from 54 sensors deployed in the Intel Berkeley Research Lab.
Each sensor collects 4 types of data, i.e., temperature, humidity, light, and voltage.
% The dataset also provides location information of each sensor and average connectivity probabilities among sensors.
Following \cite{cai2015facets}, we evaluate all the methods on the log of one day, which has 2880 time steps in total, yielding a $54\times 4\times 2880$ tensor time series.
We use the average connectivity of each pair of sensors to construct the network for the first mode (54 sensors).
As for the network of four data types, we use the Pearson correlation coefficient between each pair of them:

\begin{equation}\label{eq:pearson}
    \mathbf{A}[i,j] = \frac{1}{2}(r_{ij} + 1)
\end{equation}
where $r_{ij}\in[-1,1]$ denotes the Pearson correlation coefficient between the sequence $i$ and the sequence $j$.

\paragraph{Soil Dataset}
The \textit{Soil} dataset contains one-year log of water temperature and volumetric water content collected from 42 locations and 5 depth levels in the Cook Agronomy Farm (CAF)\footnote{\url{http://www.cafltar.org/}} near Pullman, Washington, USA, \cite{gasch2017pragmatic} which forms a $42\times 5\times 2\times 365$ tensor time series.
Since the dataset neither provides the specific location information of sensors nor the relation between the water temperature and volumetric water content, we use Pearson correlation, as shown in Equation \eqref{eq:pearson}, to build the adjacency matrices for all the modes.

\begin{table}[t]
    \centering
    \caption{Statistics of the datasets.}
    \begin{tabular}{c|r|r|r}
    \hline
    Dataset & Shape & \# Nodes & Modes with $\mathbf{A}$\\
    \hline
    \textit{Motes} & $54\times4\times2880$ & 216 & 1, 2\\
    \textit{Soil} & $42\times5\times2\times365$ & 420 & 1, 2, 3\\
    \textit{Revenue} & $410\times3\times62$ & 1,230 & 1, 2\\
    \textit{Traffic} & $1000\times2\times1440$ & 2,000 & 1 \\
    \textit{20CR} & $30\times30\times20\times6\times180$ & 108,000 & 1, 2, 3, 4\\
    \hline
    \end{tabular}
    \label{tab:dataset}
\end{table}

\paragraph{Revenue Dataset}
The \textit{Revenue} dataset is comprised of an actual and two estimated quarterly revenues for 410 major companies (e.g. Microsoft Corp.\footnote{\url{https://www.microsoft.com/}}, Facebook Inc.\footnote{\url{https://www.facebook.com/}}) from the first quarter of 2004 to the second quarter of 2019, which yields a $410\times 3\times 62$ tensor time series.
We construct a co-search network \cite{lee2015search} based on log files of the U.S Securities and Exchange Commission (SEC)\footnote{\url{https://www.sec.gov/dera/data/edgar-log-file-data-set.html}} to represent the correlation among different companies, which is used as the adjacency matrix for the first mode.
We also use the Pearson correlation coefficient to construct the adjacency matrix for the three revenues as in Equation \eqref{eq:pearson}. 

\paragraph{Traffic Dataset}
The \textit{Traffic} dataset is collected from Caltrans Performance Measurement System (PeMS).\footnote{\url{https://dot.ca.gov/programs/traffic-operations/mobility-performance-reports}}
Specifically, hourly average speed and occupancy of 1,000 randomly chosen sensor stations in District 7 of California from June 1, 2018, to July 30, 2018, are collected, which yields a $1000\times 2\times 1440$ tensor time series.
The adjacency matrix $\mathbf{A}_1$ for the first mode is constructed by indicating whether two stations are adjacent:
$\mathbf{A}_1[i,j]=1$ represents the stations $i$ and $j$ are next to each other.
As for the second mode, since the Pearson correlation between speed and occupancy is not significant, we use identity matrix $\mathbf{I}$ as the adjacency matrix.

\paragraph{20CR Dataset}
We use the version 3 of the 20th Century Reanalysis data\footnote{20th Century Reanalysis V3 data provided by the NOAA/OAR/ESRL PSL, Boulder, Colorado, USA, from their Web site \url{https://psl.noaa.gov/data/gridded/data.20thC_ReanV3.html}}\footnote{Support for the Twentieth Century Reanalysis Project version 3 dataset is provided by the U.S. Department of Energy, Office of Science Biological and Environmental Research (BER), by the National Oceanic and Atmospheric Administration Climate Program Office, and by the NOAA Physical Sciences Laboratory.} \cite{compo2011twentieth, slivinski2019towards}
collected by the National Oceanic and Atmospheric Administration (NOAA) Physical Sciences Laboratory (PSL).
We use a subset of the full dataset, which covers a $30\times30$ area of north America, ranging from $30^\circ$ N to $60^\circ$ N, $80^\circ$ W to $110^\circ$ W, and it contains 20 atmospheric pressure levels.
For each of the location point, 6 attributes are used, including air temperature, specific humidity, omega, u wind, v wind and geo-potential height.\footnote{For details of the attributes, please refer to the 20th Century Reanalysis project \url{https://psl.noaa.gov/data/20thC_Rean//}}
We use the monthly average data ranging from 2001 to 2015.
Therefore, the shape of the data is $30\times30\times20\times6\times180$.
The adjacency matrix $\mathbf{A}_1$ for the first mode, latitude, is constructed by indicating whether two latitude degrees are next to each other: $\mathbf{A}_1[i,j]=1$ if $i$ and $j$ are adjacent.
The adjacency matrices $\mathbf{A}_2$ and $\mathbf{A}_3$ for the second and the third modes are built in the same way as $\mathbf{A}_1$.
We build $\mathbf{A}_4$ for the 6 attributes based on Equation \eqref{eq:pearson}. 

\subsubsection{Comparison Methods}
We compare our methods with both classic methods (DynaMMo \cite{li2009dynammo}, MLDS \cite{rogers2013multilinear}) and recent deep learning methods (DCRNN \cite{li2017diffusion}, STGCN \cite{yu2017spatio}).
We also compare the proposed full model \nettt with its ablated versions.
To evaluate TGCN, we compare it with MLP, GCN \cite{kipf2016semi} and iTGCN.
Here, iTGCN is an ablated version of TGCN, which ignores the synergy between adjacency matrices. 
The updating function of iTGCN is given by the following equation:
\begin{equation}\label{eq:itgcn}
    \sigma(\sum_{m=1}^M\mathcal{X}\times_m\Tilde{\mathbf{A}}_m\times_{M+1}\mathbf{\Theta}_{m} + \mathcal{X}\times_{M+1}\mathbf{\Theta}_0)
\end{equation}
where $\sigma(\cdot)$ denotes the activation function, $\mathbf{\Theta}$ denotes parameter matrix and $\mathcal{X}\in\mathbb{R}^{N_1\times\cdots\times N_M\times d}$. 
For a fair comparison with GCN and the baseline methods, we construct a flat graph by combining the adjacency matrices:
\begin{equation}
    \mathbf{A}=\mathbf{A}_M\otimes_k\cdots\otimes_k\mathbf{A}_1
\end{equation}
where $\otimes_k$ is Kronecker product, the dimension of $\mathbf{A}$ is $\prod_{m=1}^MN_m$, and $N_m$ is the dimension of $\mathbf{A}_m$.
To evaluate TLSTM, we compare it with multiple separate LSTMs (mLSTM) and a single LSTM.
\begin{figure*}[t]
\centering
\subfloat[Motes-Missing]
{\includegraphics[width=.20\linewidth]{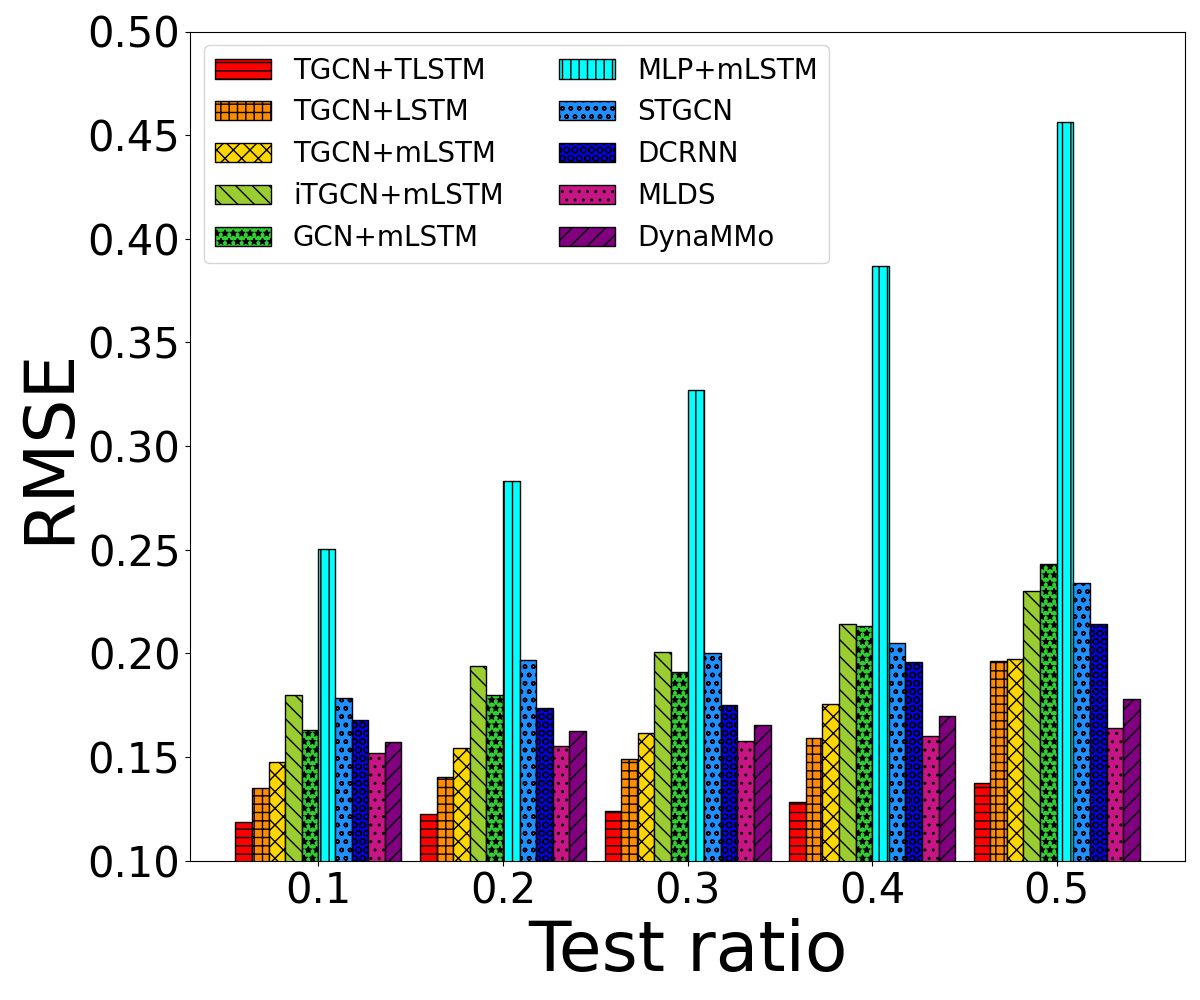}\label{fig:exp_motes_missing}}\,
\subfloat[Soil-Missing]
{\includegraphics[width=.20\linewidth]{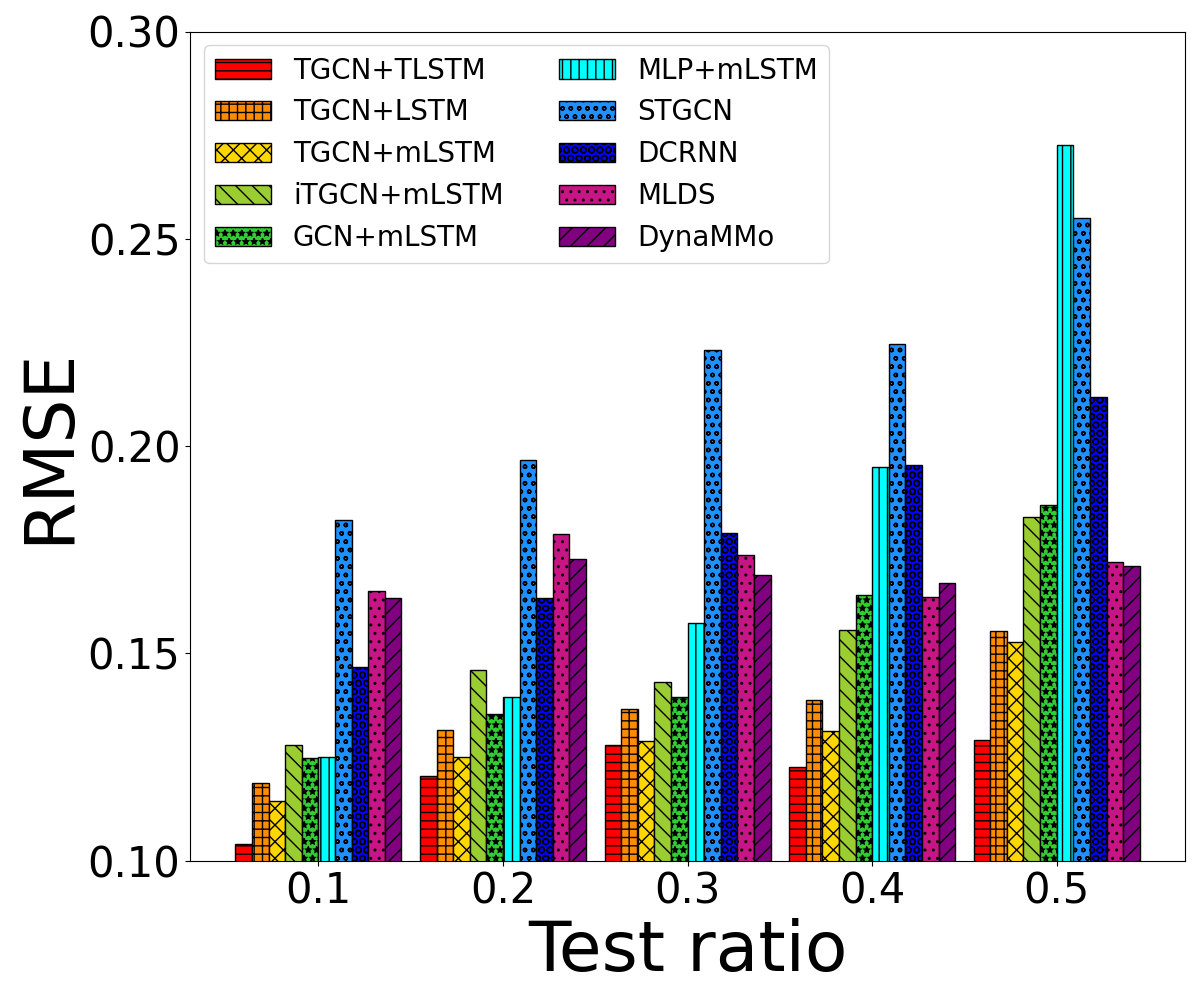}\label{fig:exp_soil_missing}}\,
\subfloat[Revenue-Missing]
{\includegraphics[width=.20\linewidth]{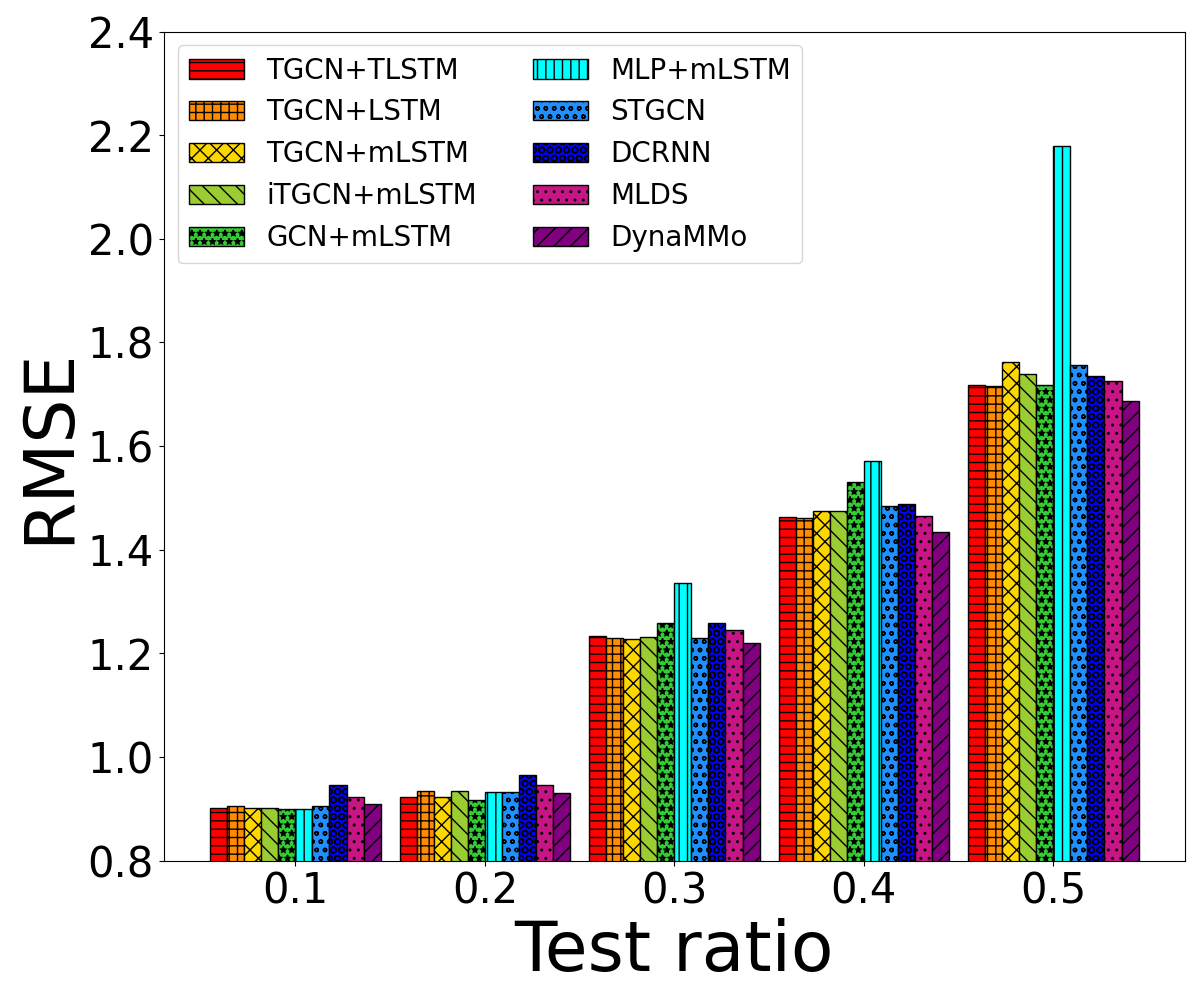}\label{fig:exp_revenue_missing}}\,
\subfloat[Traffic-Missing]
{\includegraphics[width=.20\linewidth]{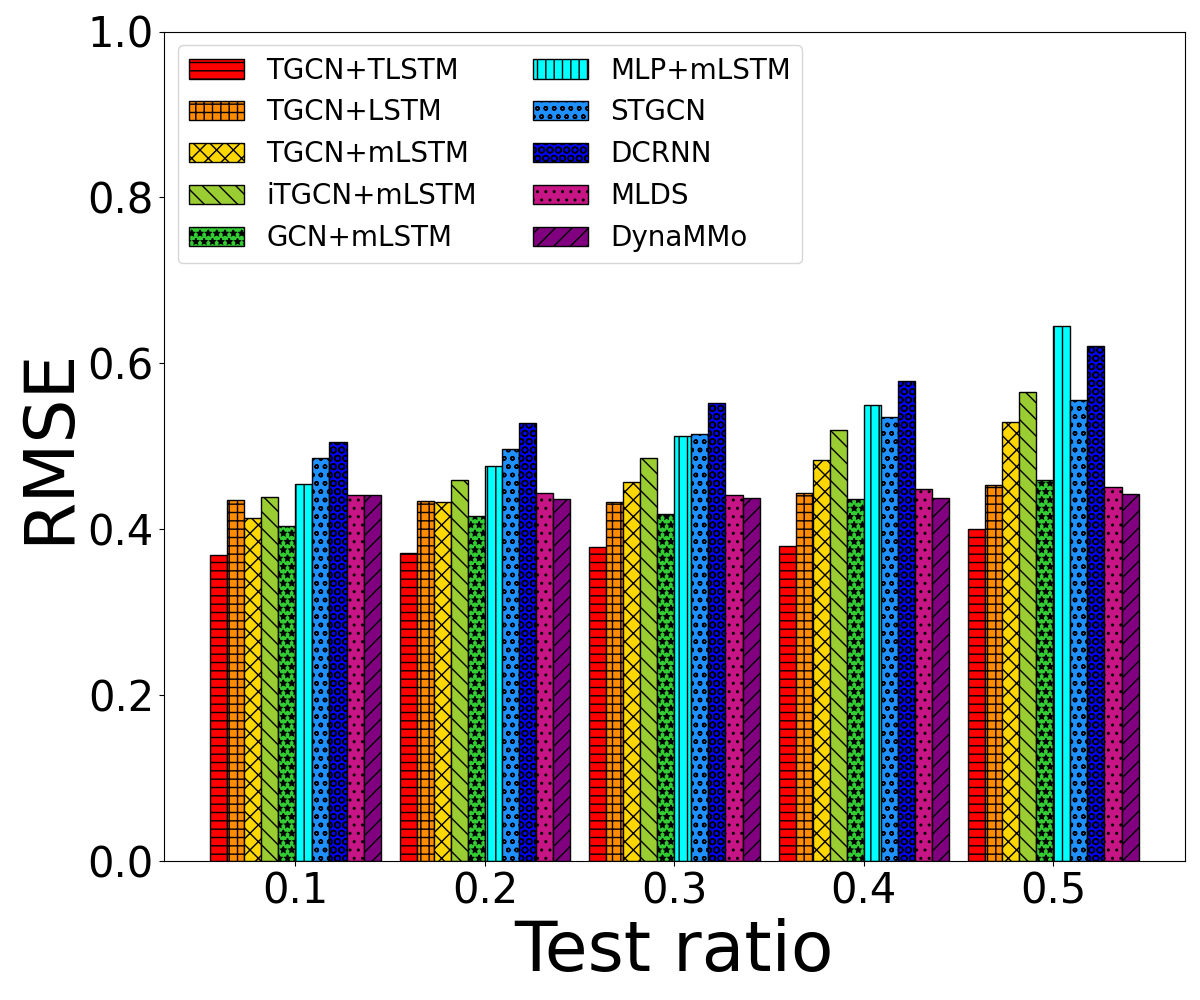}\label{fig:exp_traffic_missing}}\,
\subfloat[Motes-Future]
{\includegraphics[width=.20\linewidth]{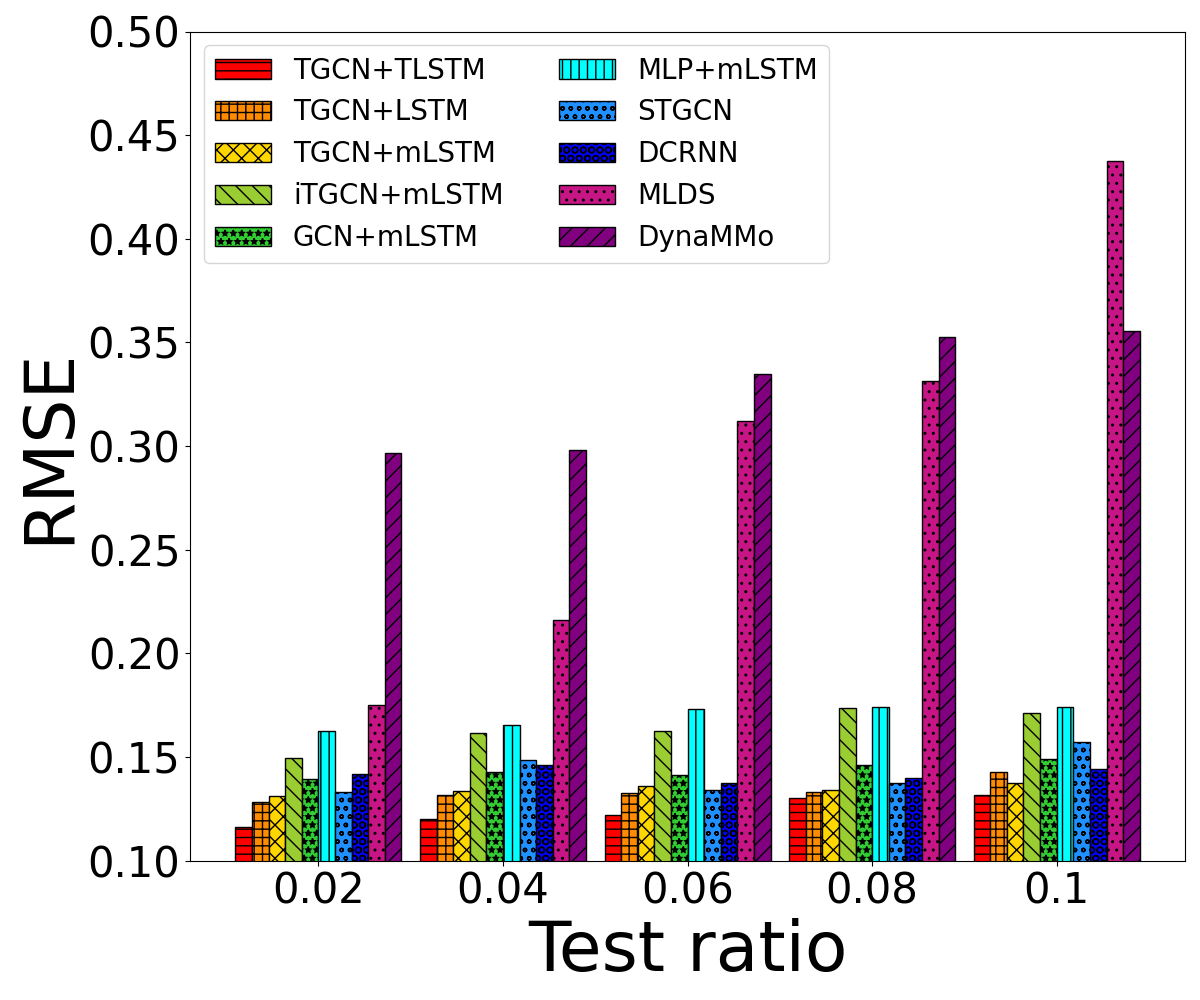}\label{fig:exp_motes_future}}\,
\subfloat[Soil-Future]
{\includegraphics[width=.20\linewidth]{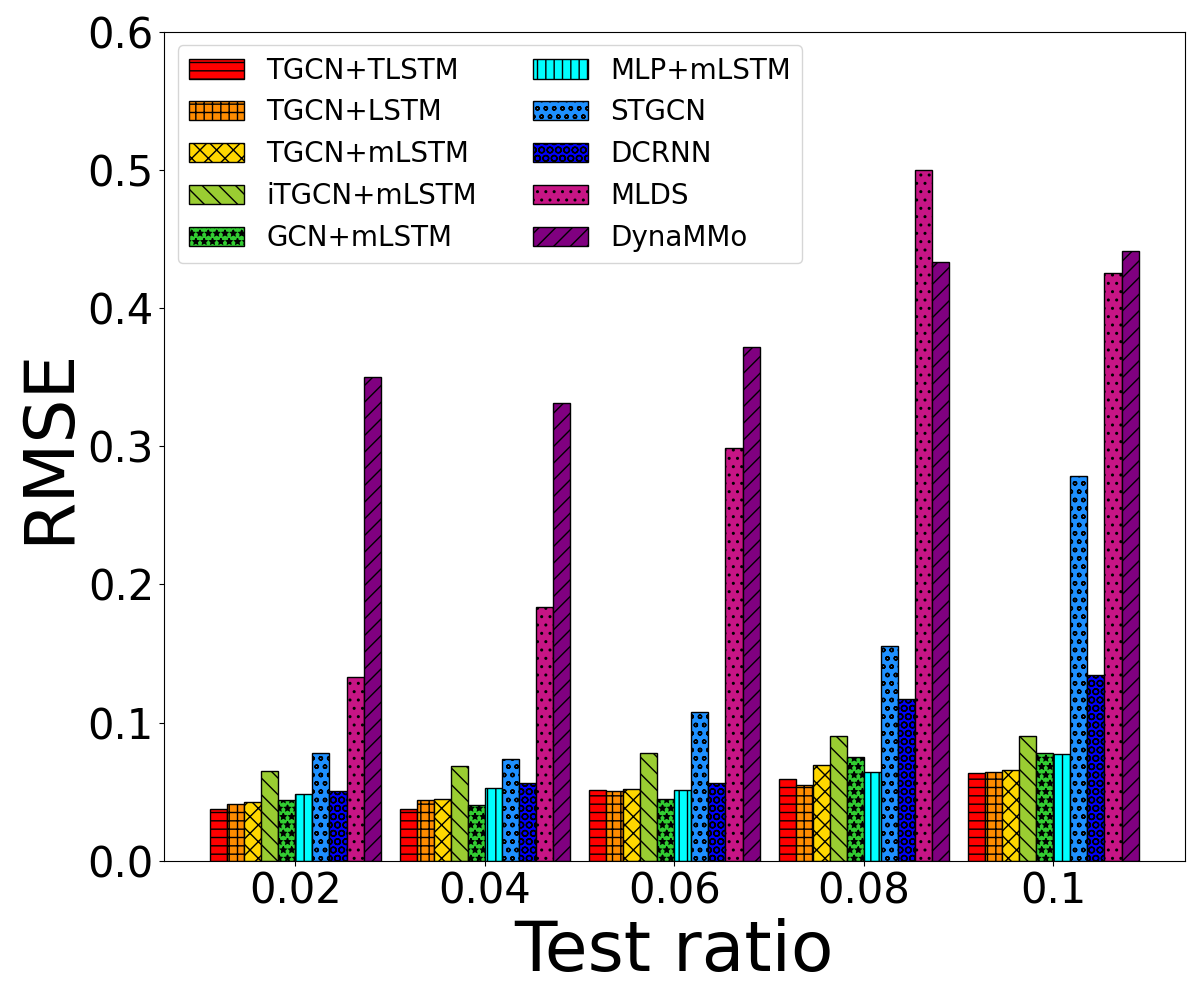}\label{fig:exp_soil_future}}
\subfloat[Revenue-Future]
{\includegraphics[width=.20\linewidth]{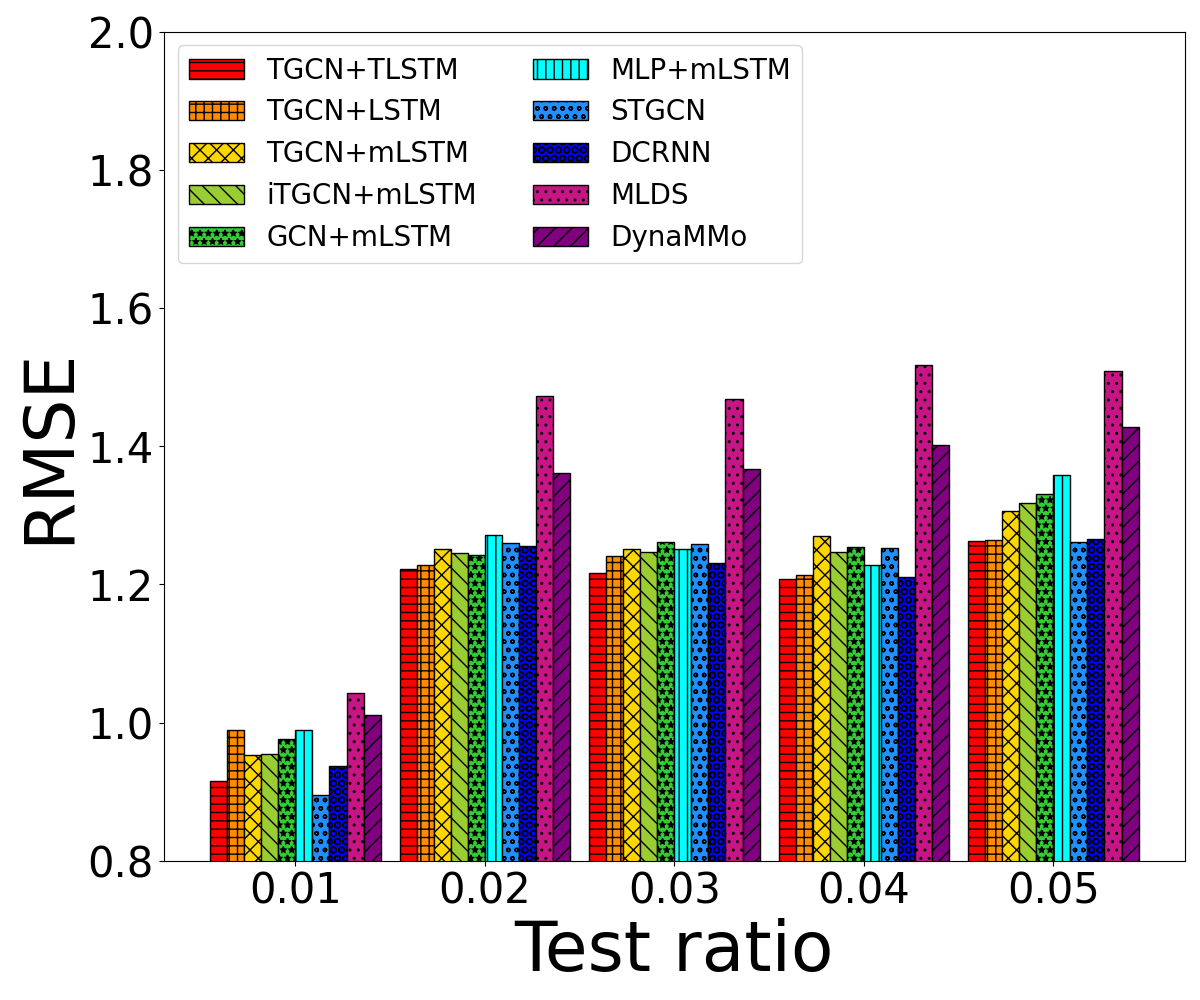}\label{fig:exp_revenue_future}}\,
\subfloat[Traffic-Future]
{\includegraphics[width=.20\linewidth]{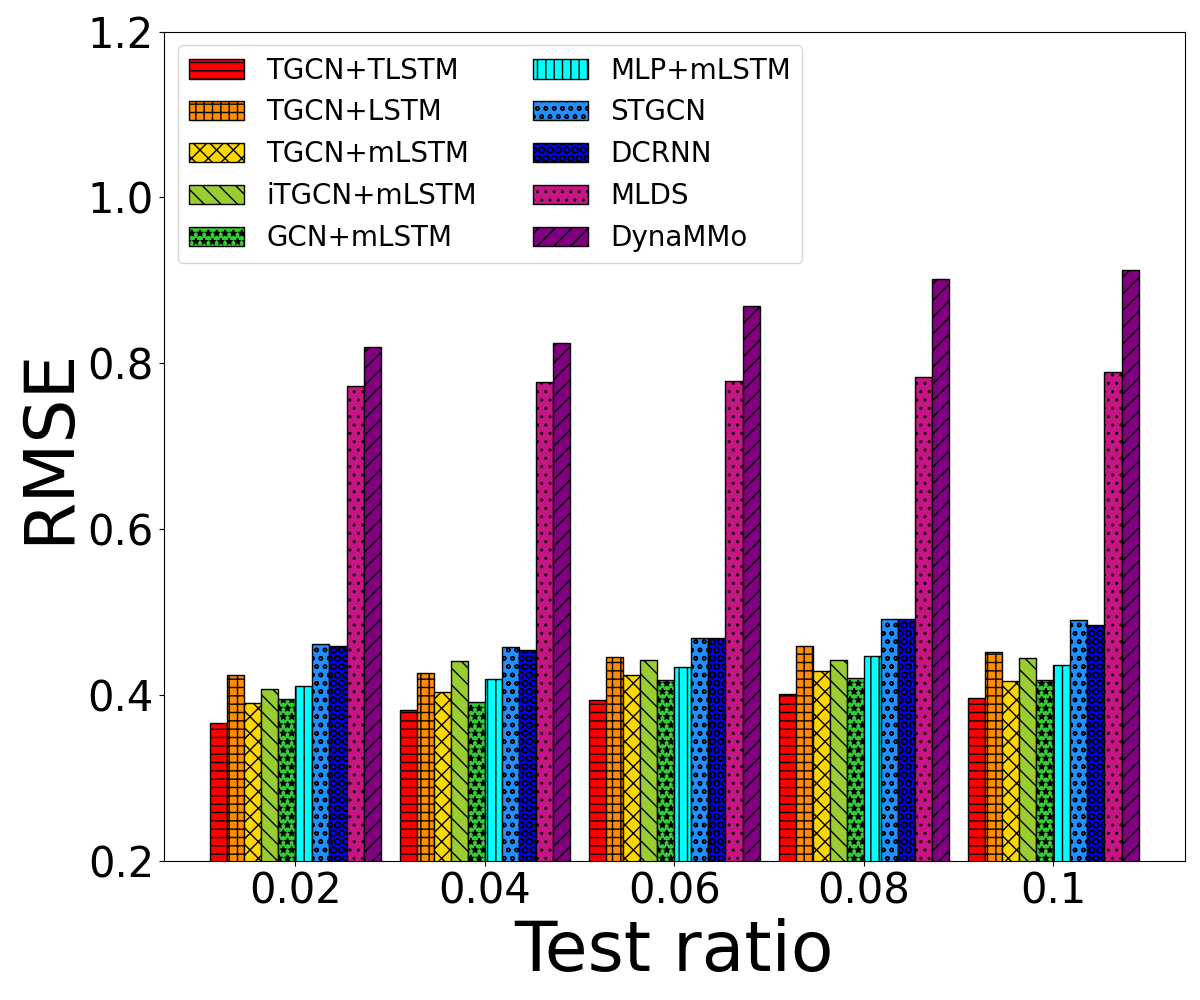}\label{fig:exp_traffic_future}}\,
\caption{RMSE of missing value recovery (upper) and future value prediction (lower).}\label{fig:exp_rmse}
\end{figure*}

\begin{figure*}[t]
\centering
\subfloat[Motes-Missing]
{\includegraphics[width=.20\linewidth]{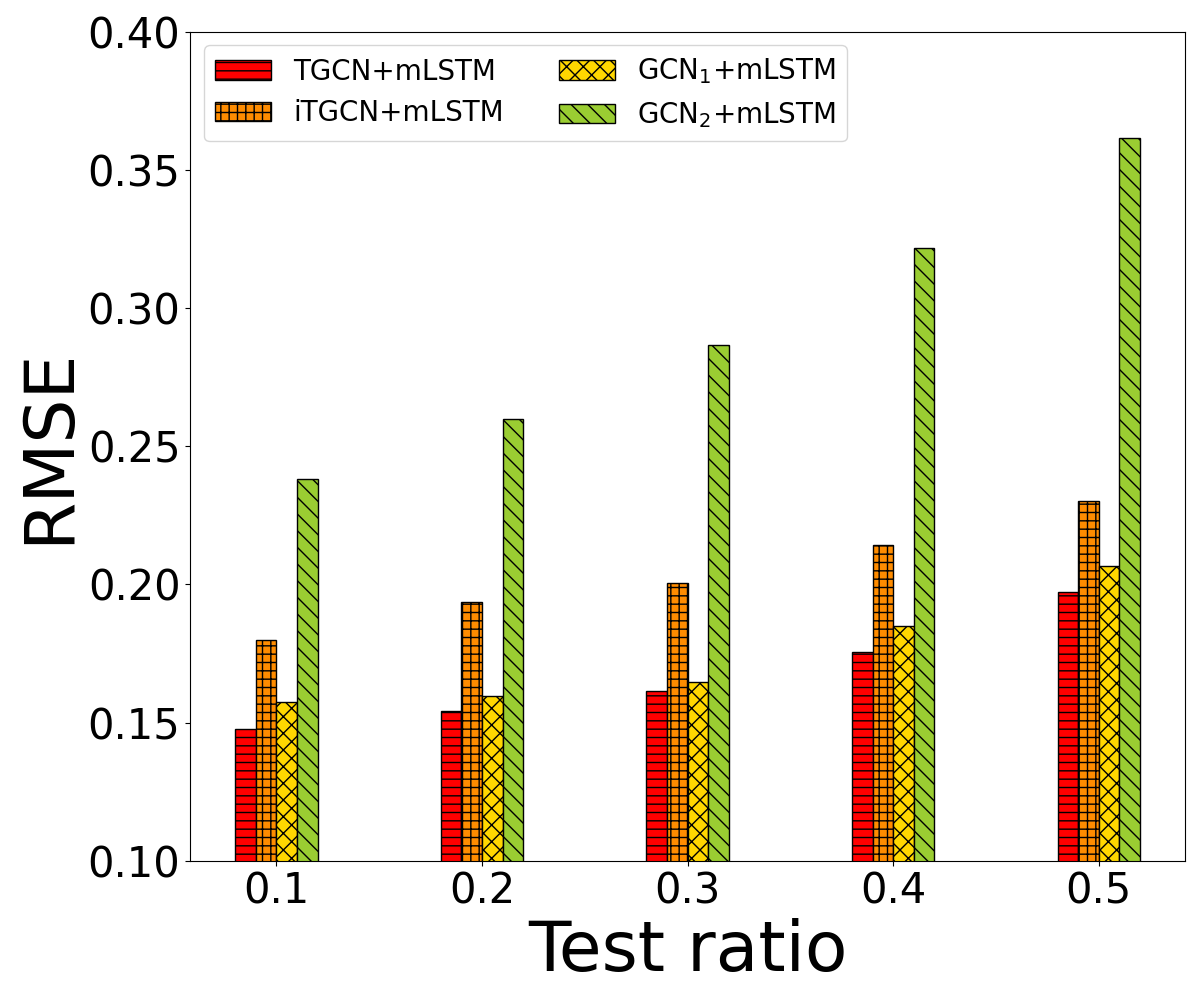}\label{fig:exp_motes_missing_synergy}}\,
\subfloat[Soil-Missing]
{\includegraphics[width=.20\linewidth]{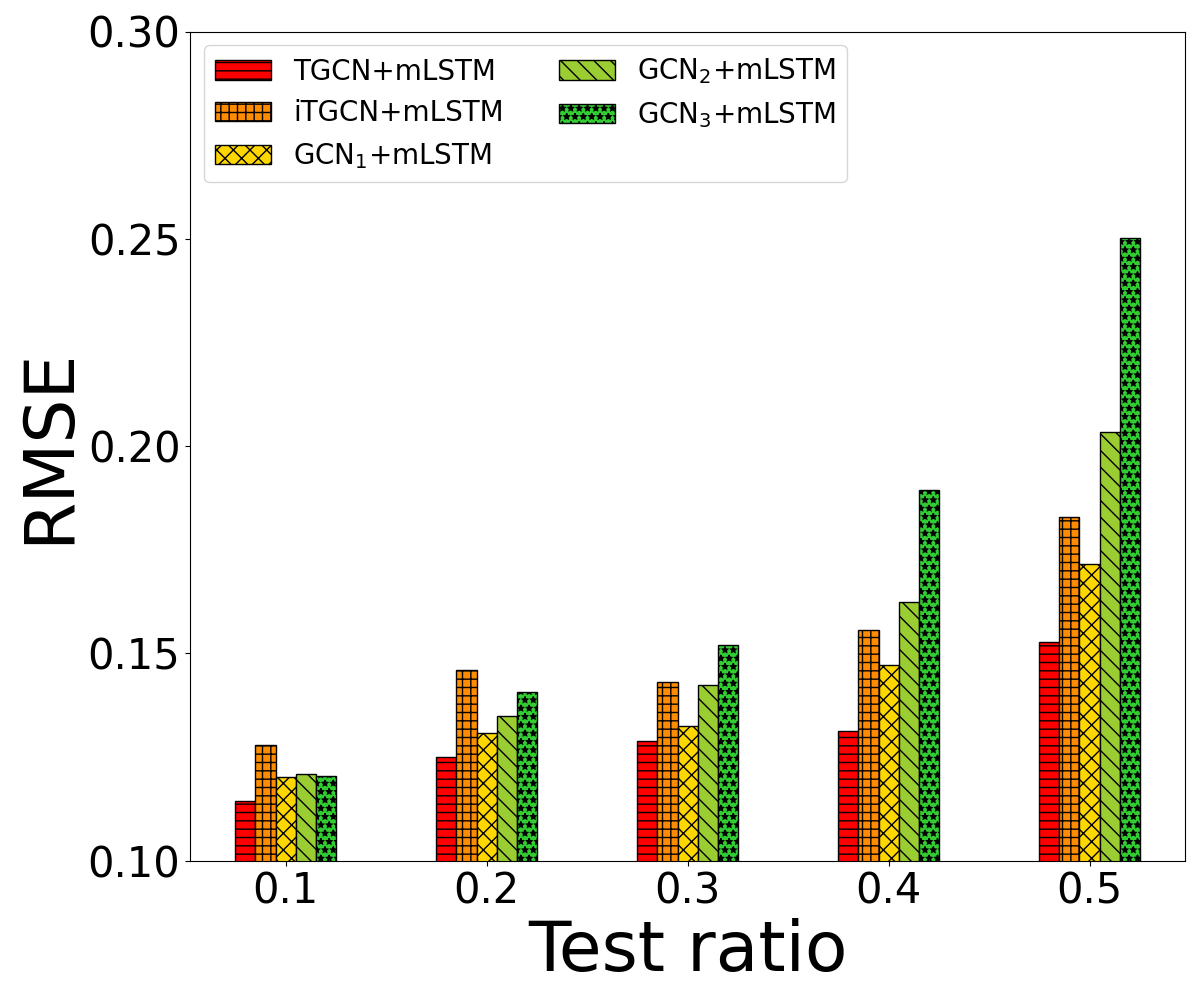}\label{fig:exp_soil_missing_synergy}}\,
\subfloat[Revenue-Missing]
{\includegraphics[width=.20\linewidth]{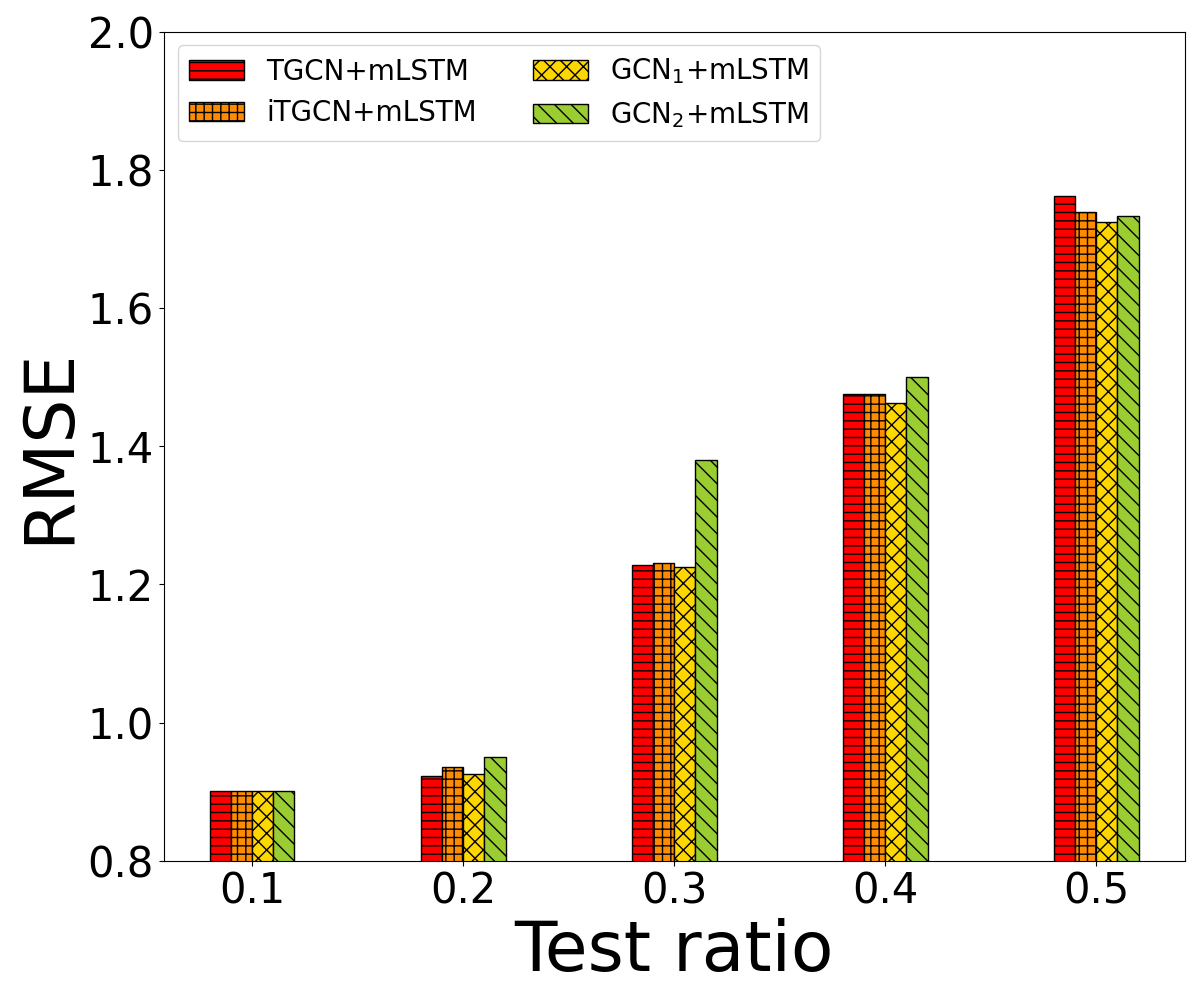}}\,
\subfloat[Traffic-Missing]
{\includegraphics[width=.20\linewidth]{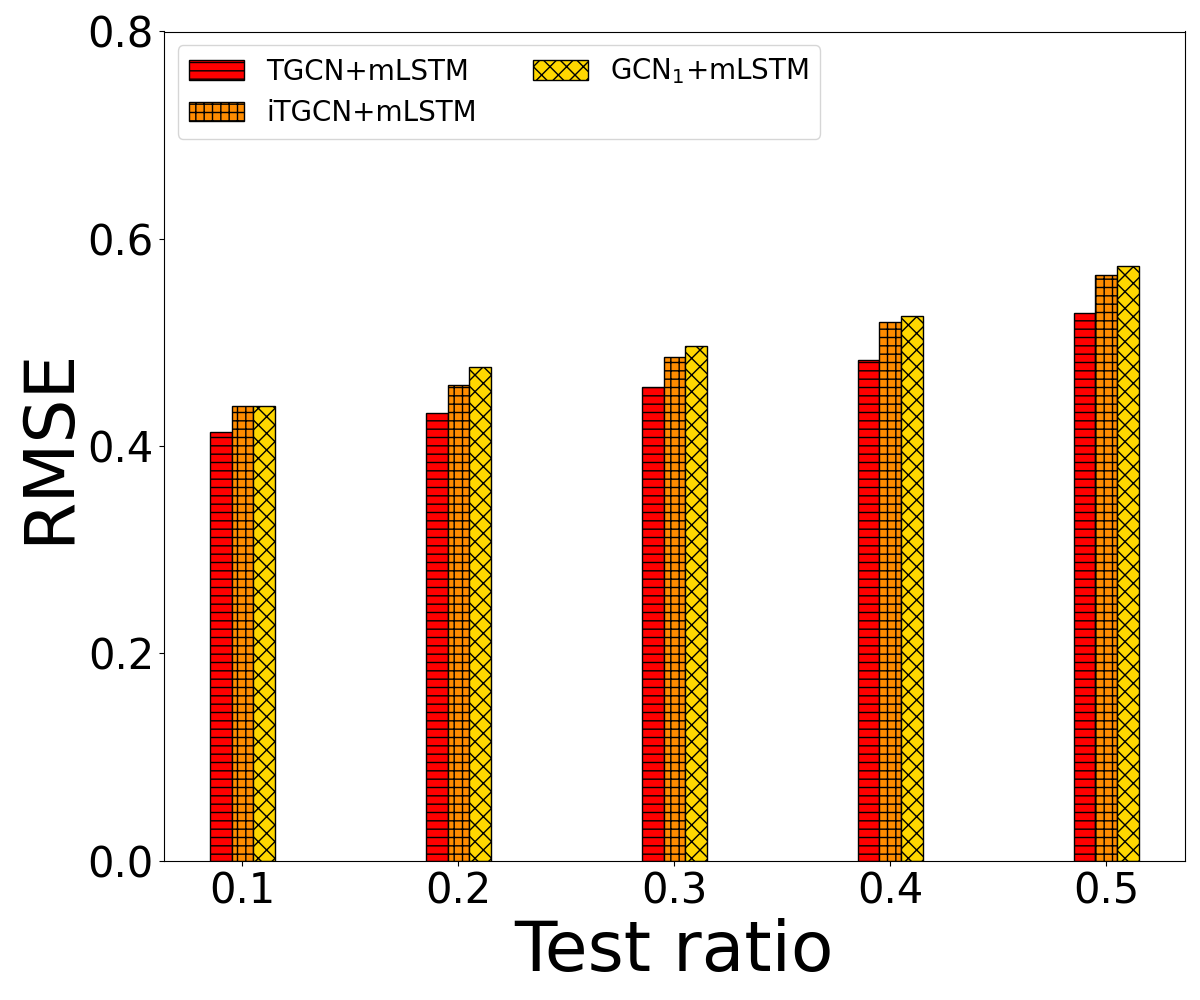}}\,
\subfloat[Motes-Future]
{\includegraphics[width=.20\linewidth]{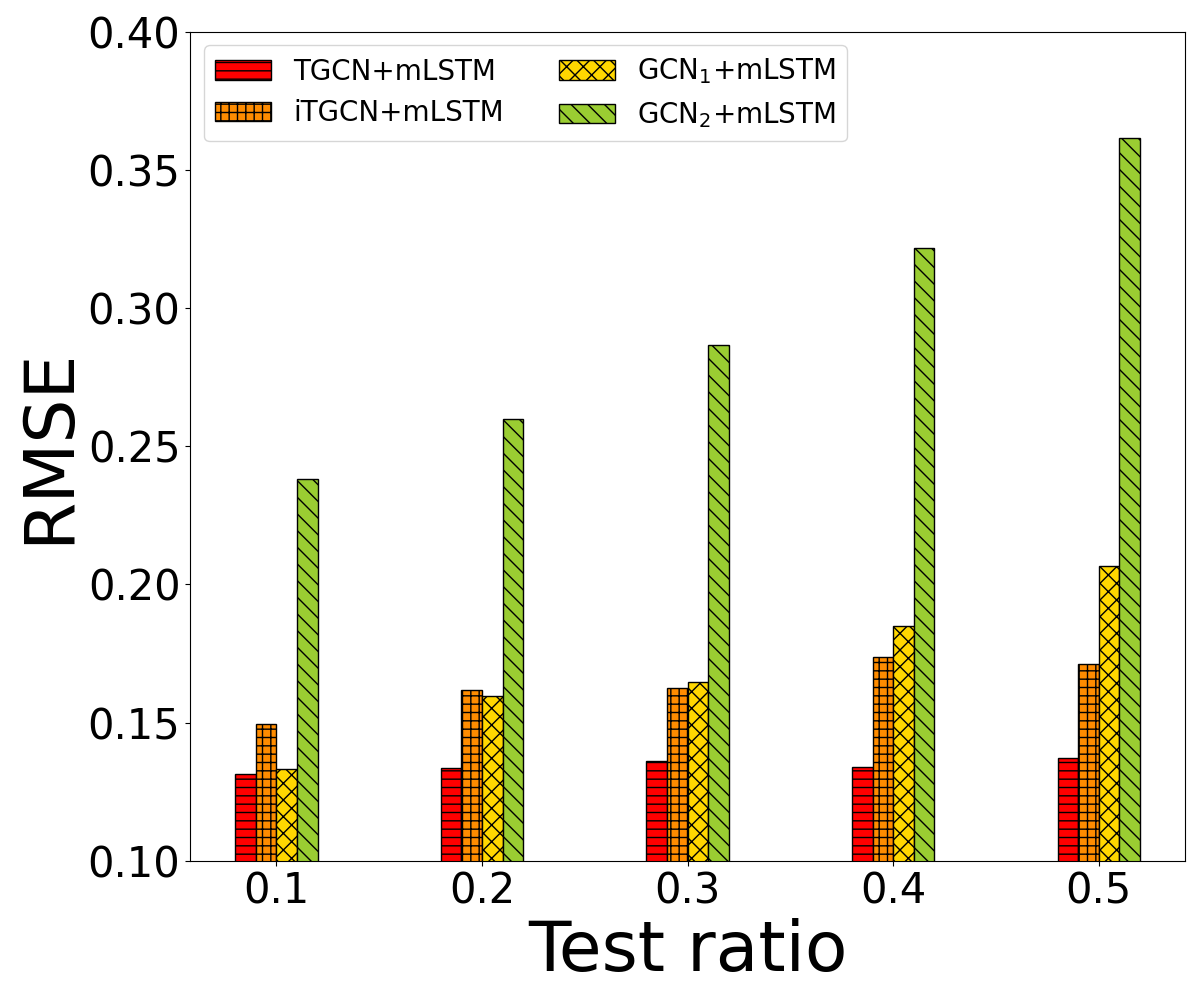}\label{fig:exp_motes_future_synergy}}\,
\subfloat[Soil-Future]
{\includegraphics[width=.20\linewidth]{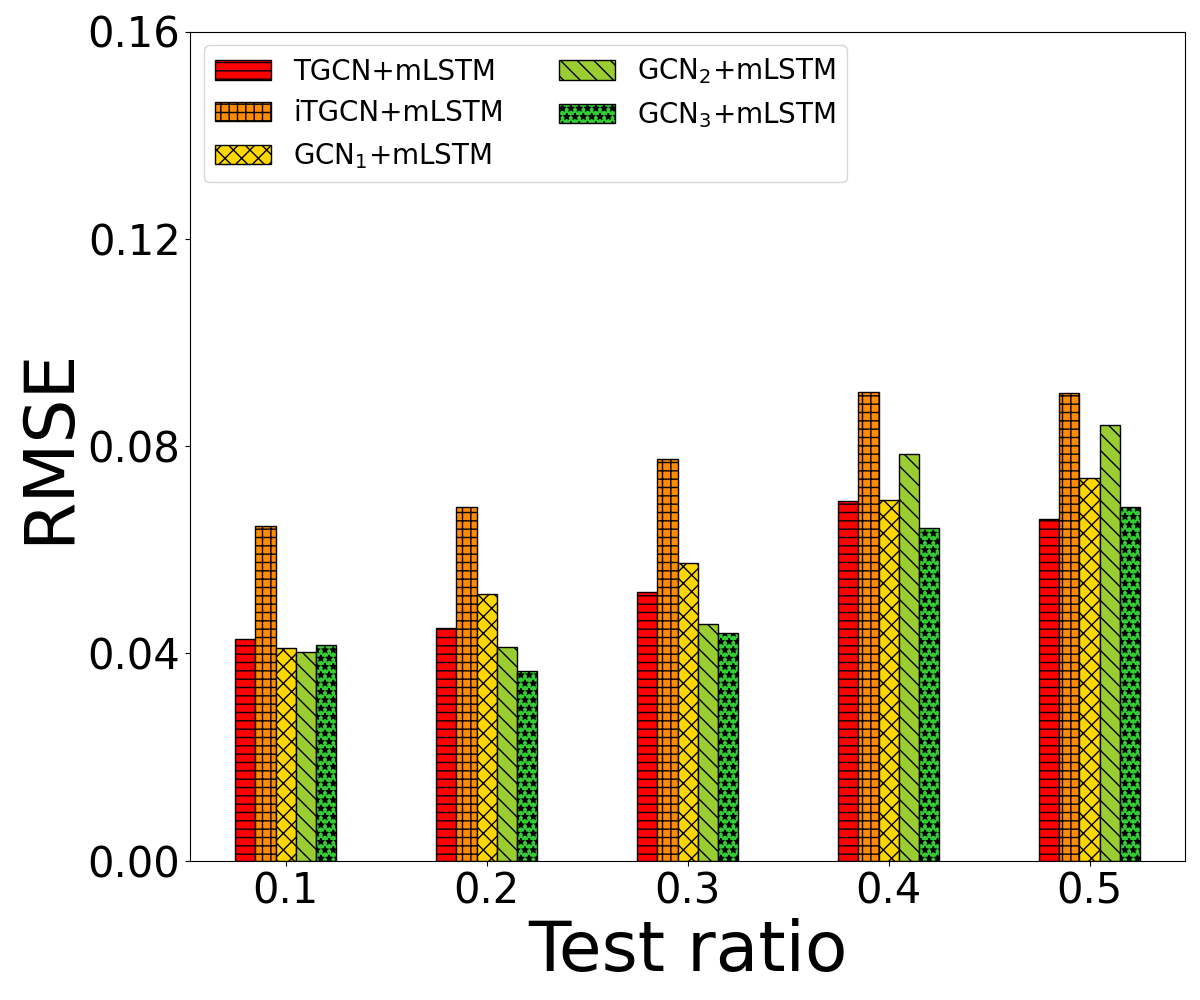}\label{fig:exp_soil_future_synergy}}
\subfloat[Revenue-Future]
{\includegraphics[width=.20\linewidth]{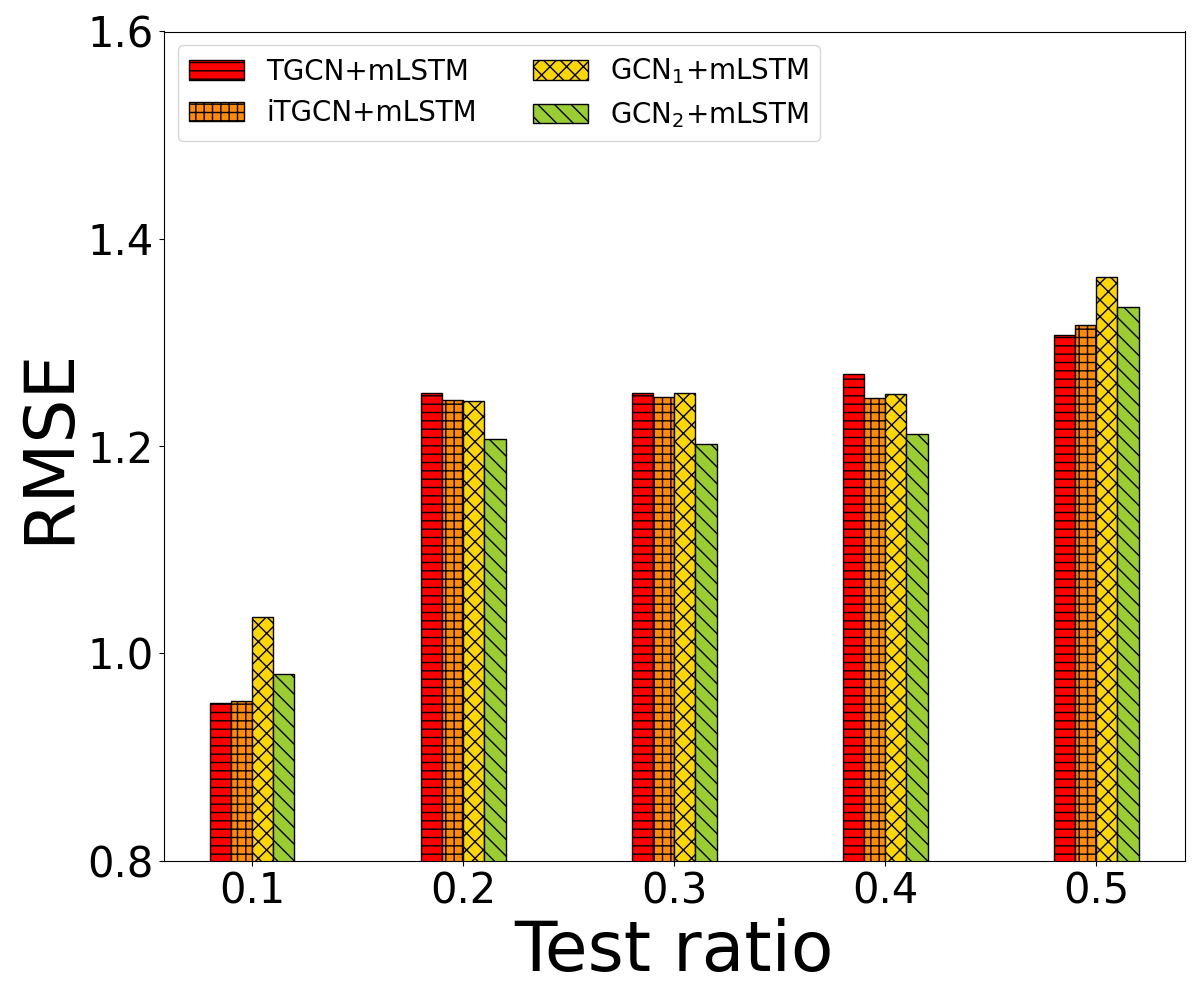}\label{fig:exp_revenue_future_synergy}}\,
\subfloat[Traffic-Future]
{\includegraphics[width=.20\linewidth]{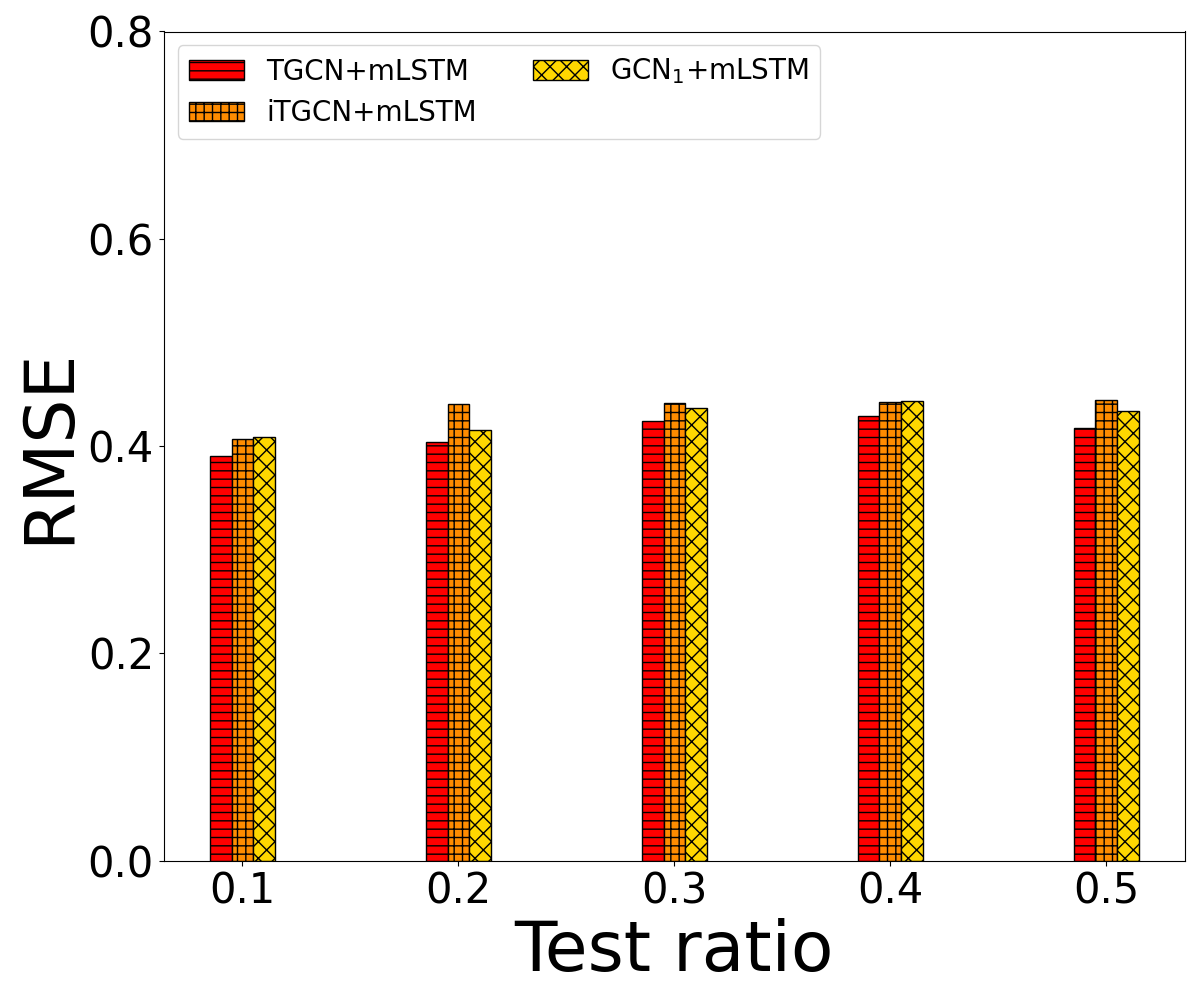}\label{fig:exp_traffic_future_synergy}}\,
\caption{Synergy Analysis: RMSE of missing value recovery (upper) and future value prediction (lower).}\label{fig:exp_rmse_synergy}
\end{figure*}

\begin{figure*}[t]
\centering
\subfloat[Missing Value Recovery]
{\includegraphics[width=.20\linewidth]{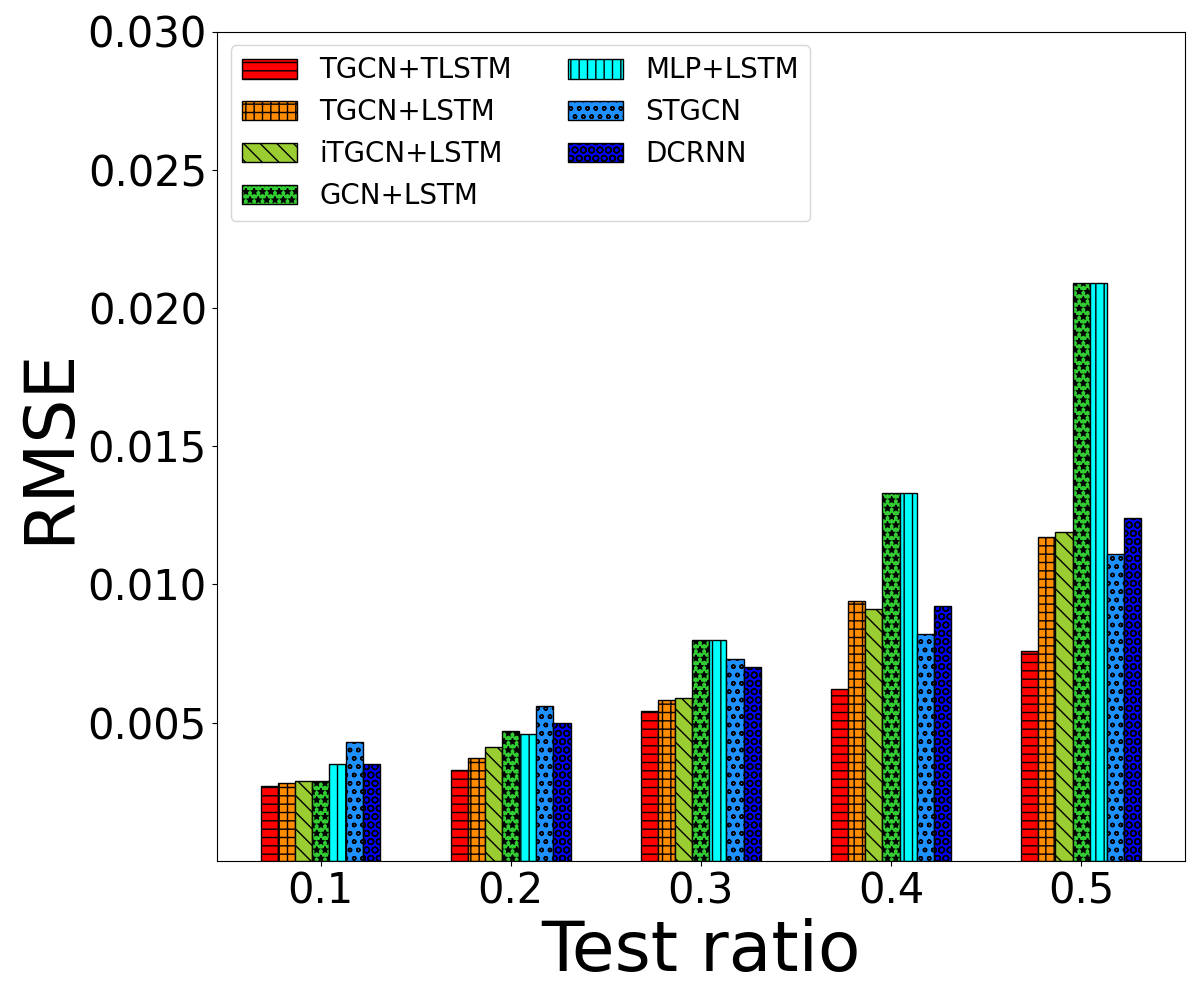}\label{fig:exp_20cr_missing}}\,
\subfloat[Future Value Prediction]
{\includegraphics[width=.20\linewidth]{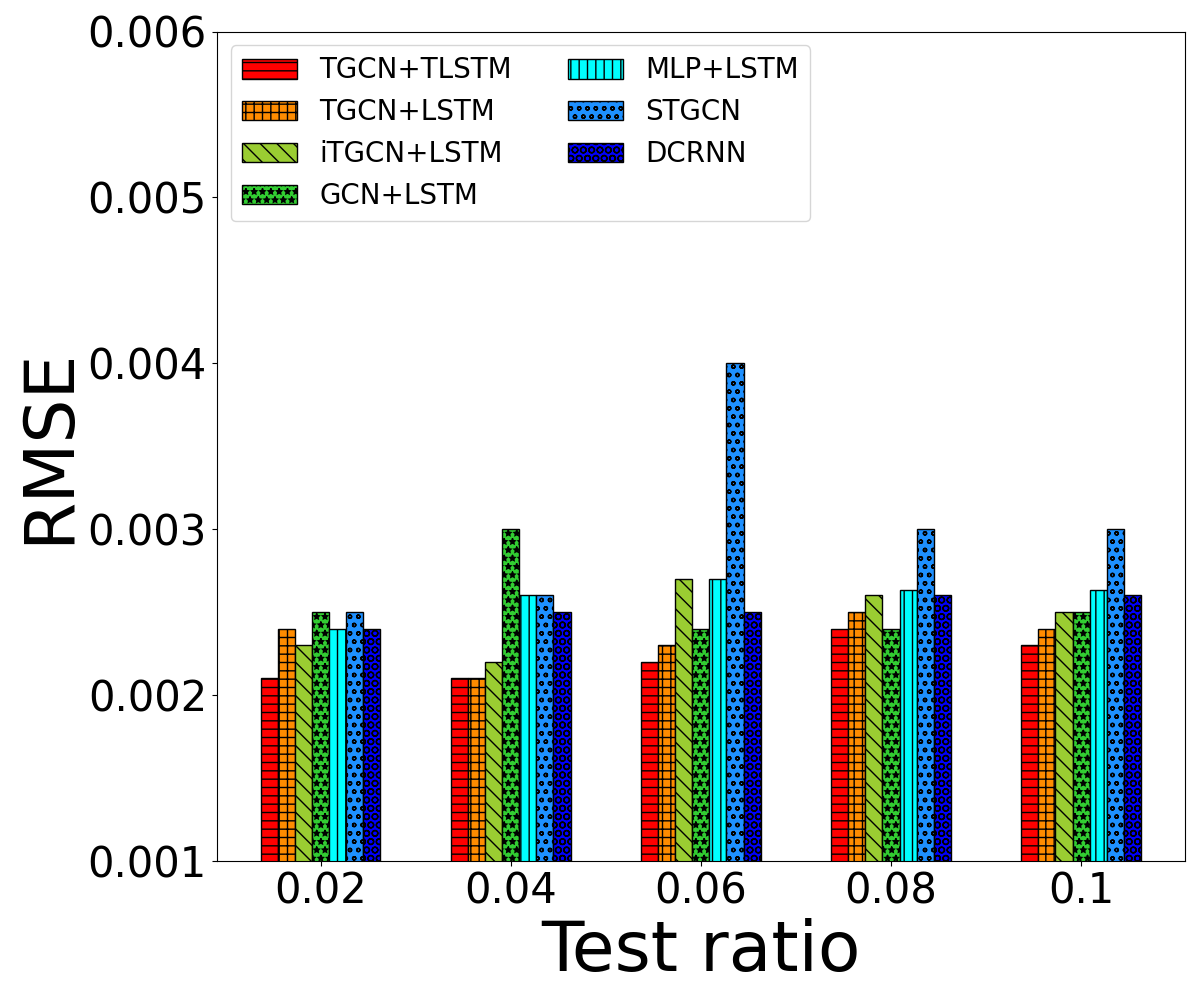}\label{fig:exp_20cr_future}}\,
\subfloat[Synergy-Missing]
{\includegraphics[width=.20\linewidth]{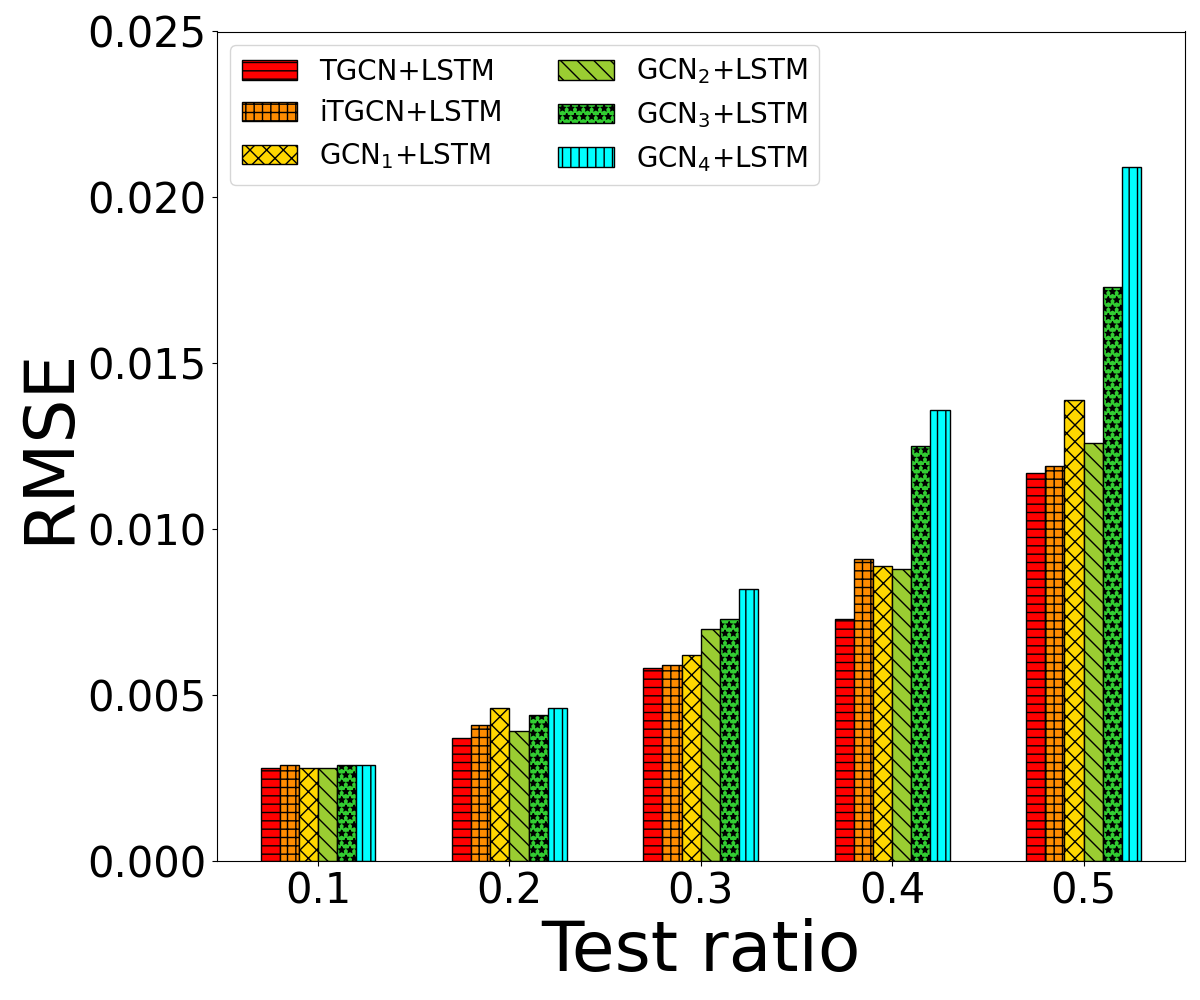}\label{fig:exp_20cr_missing_synergy}}\,
\subfloat[Synergy-Future]
{\includegraphics[width=.20\linewidth]{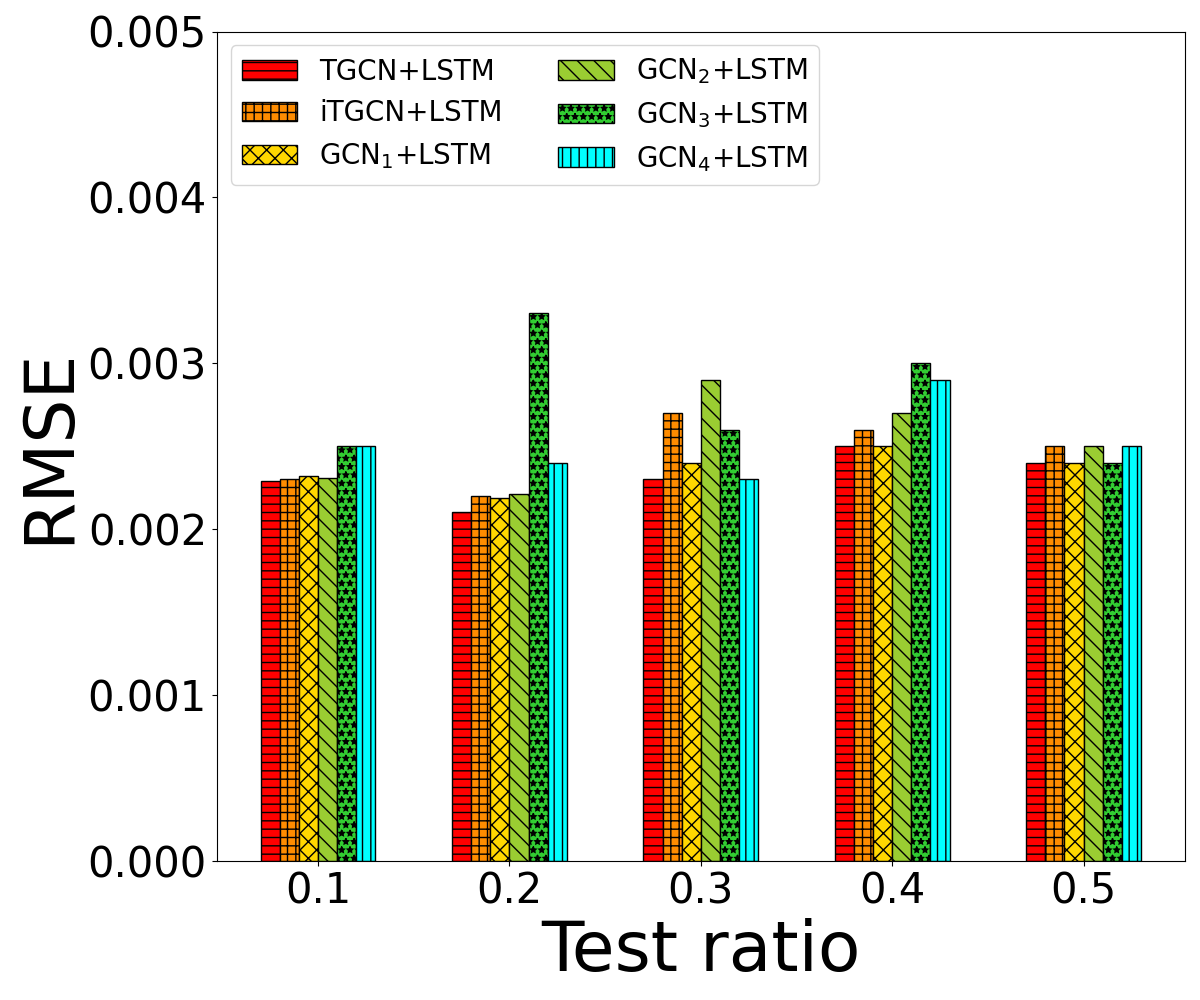}\label{fig:exp_20cr_future_synergy}}
\caption{Experiments on the \textit{20CR} dataset}\label{fig:exp_20cr}
\end{figure*}

\begin{figure*}[h]
\centering
\subfloat
{\includegraphics[width=.32\linewidth]{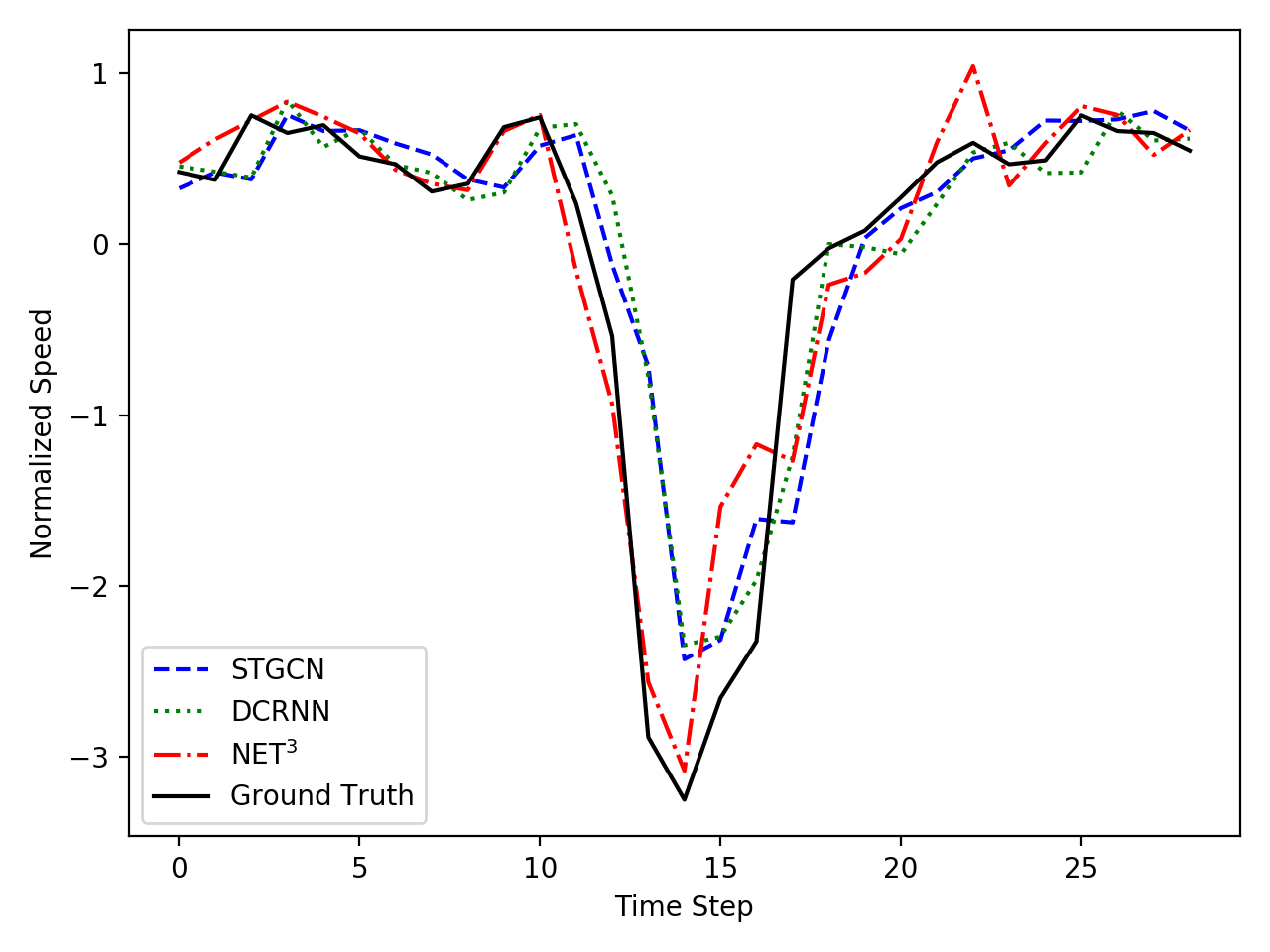}}\,
\subfloat
{\includegraphics[width=.32\linewidth]{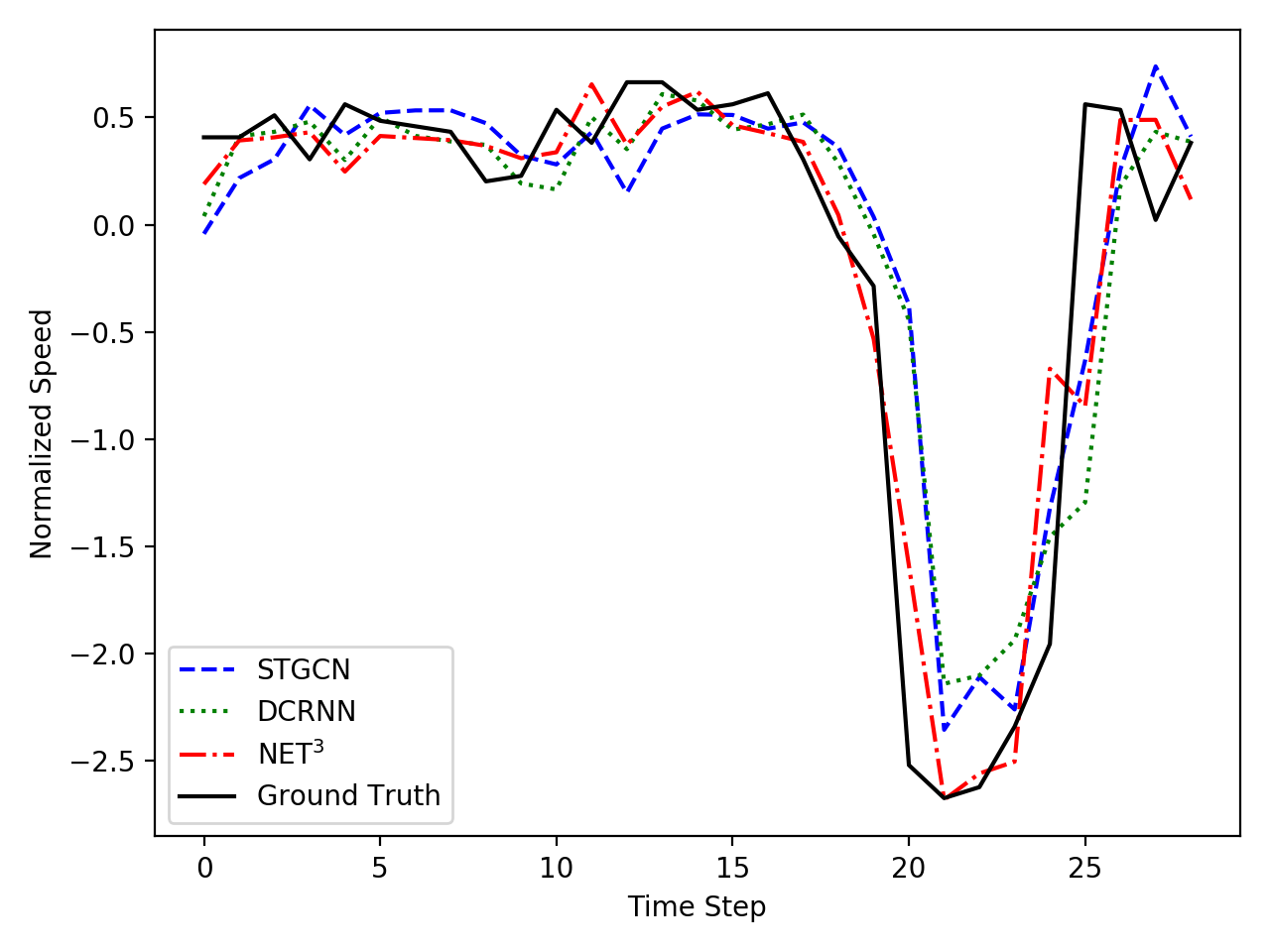}}\,
\subfloat
{\includegraphics[width=.32\linewidth]{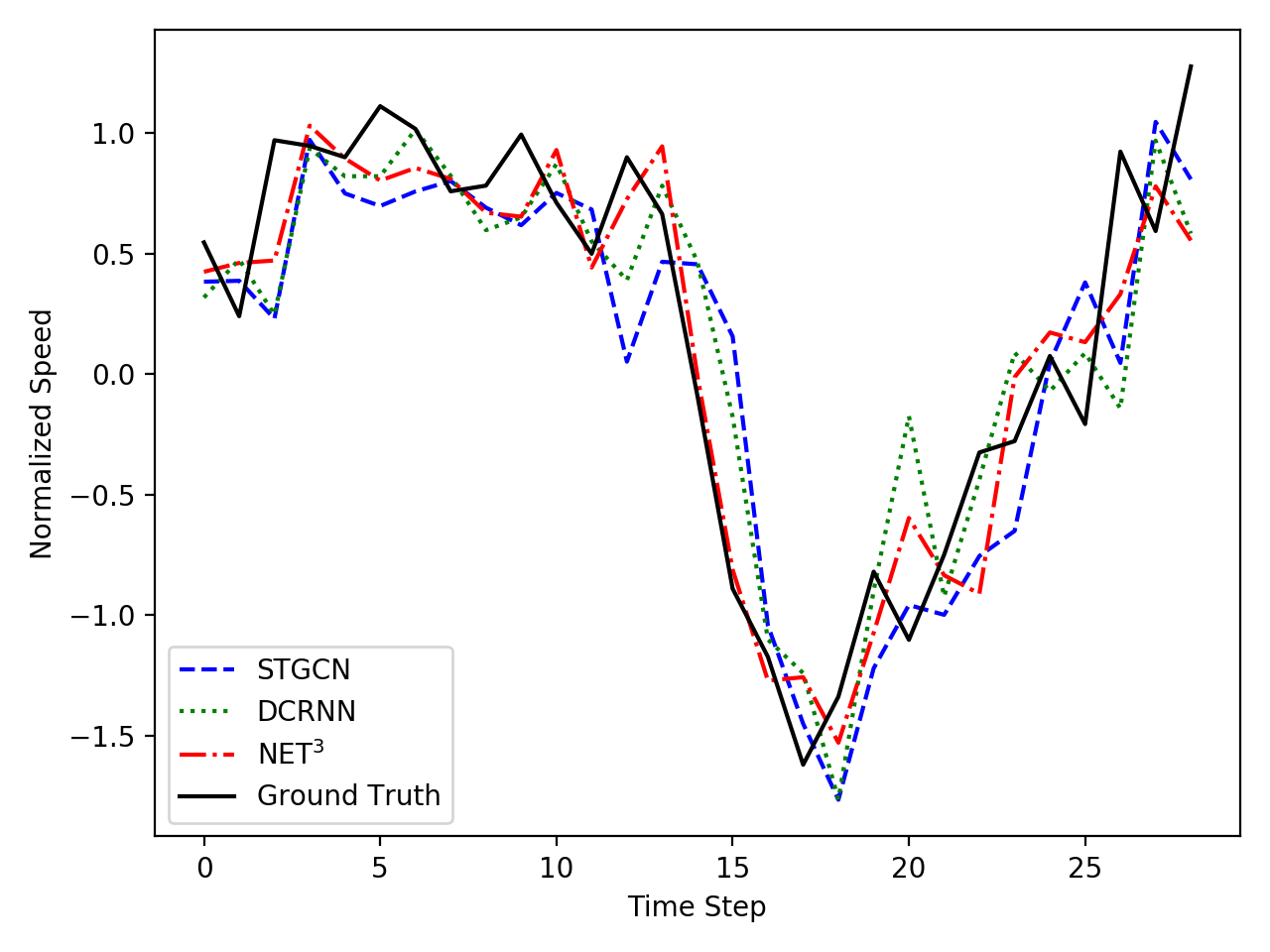}}\,
\subfloat
{\includegraphics[width=.32\linewidth]{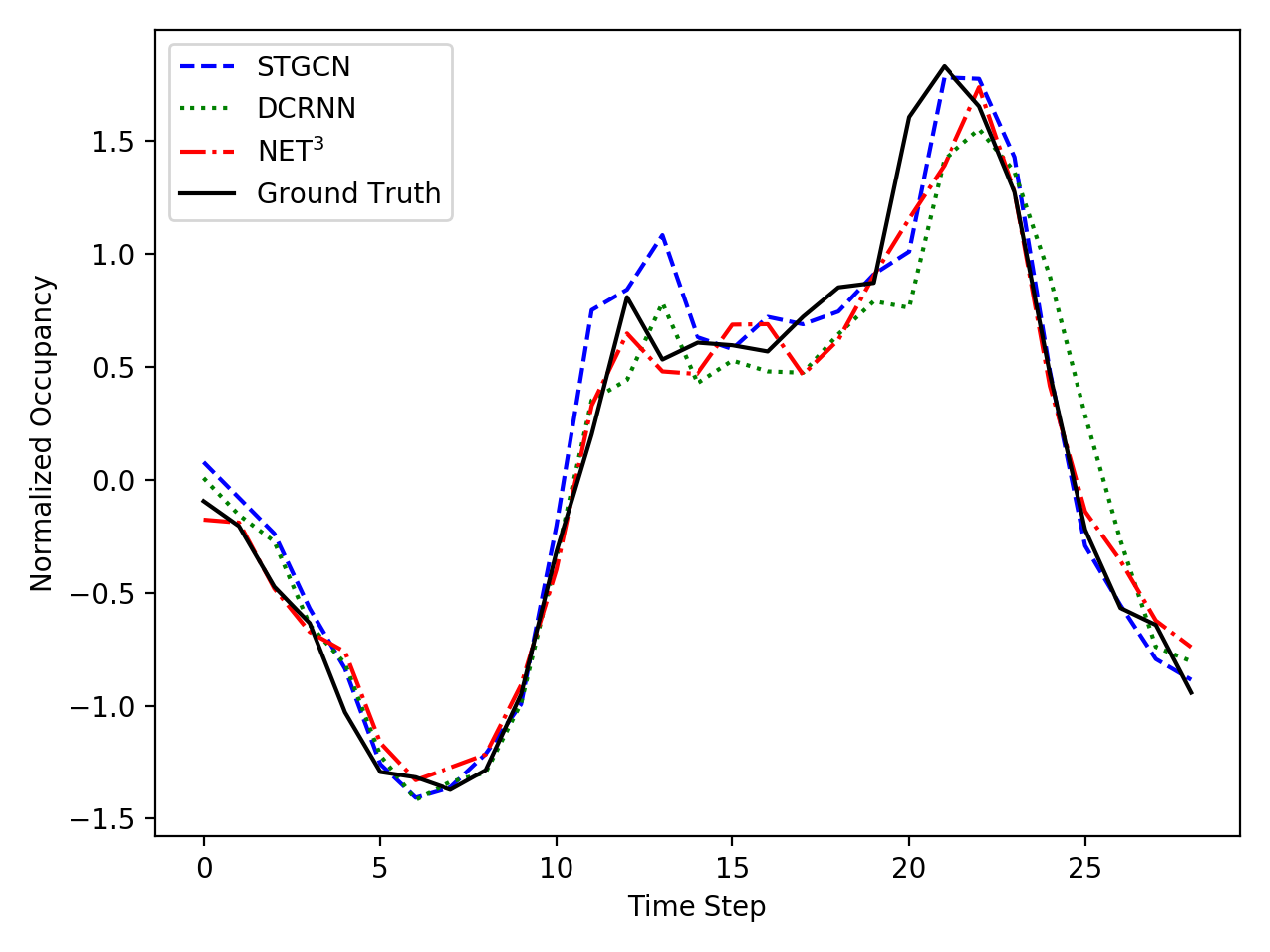}}\,
\subfloat
{\includegraphics[width=.32\linewidth]{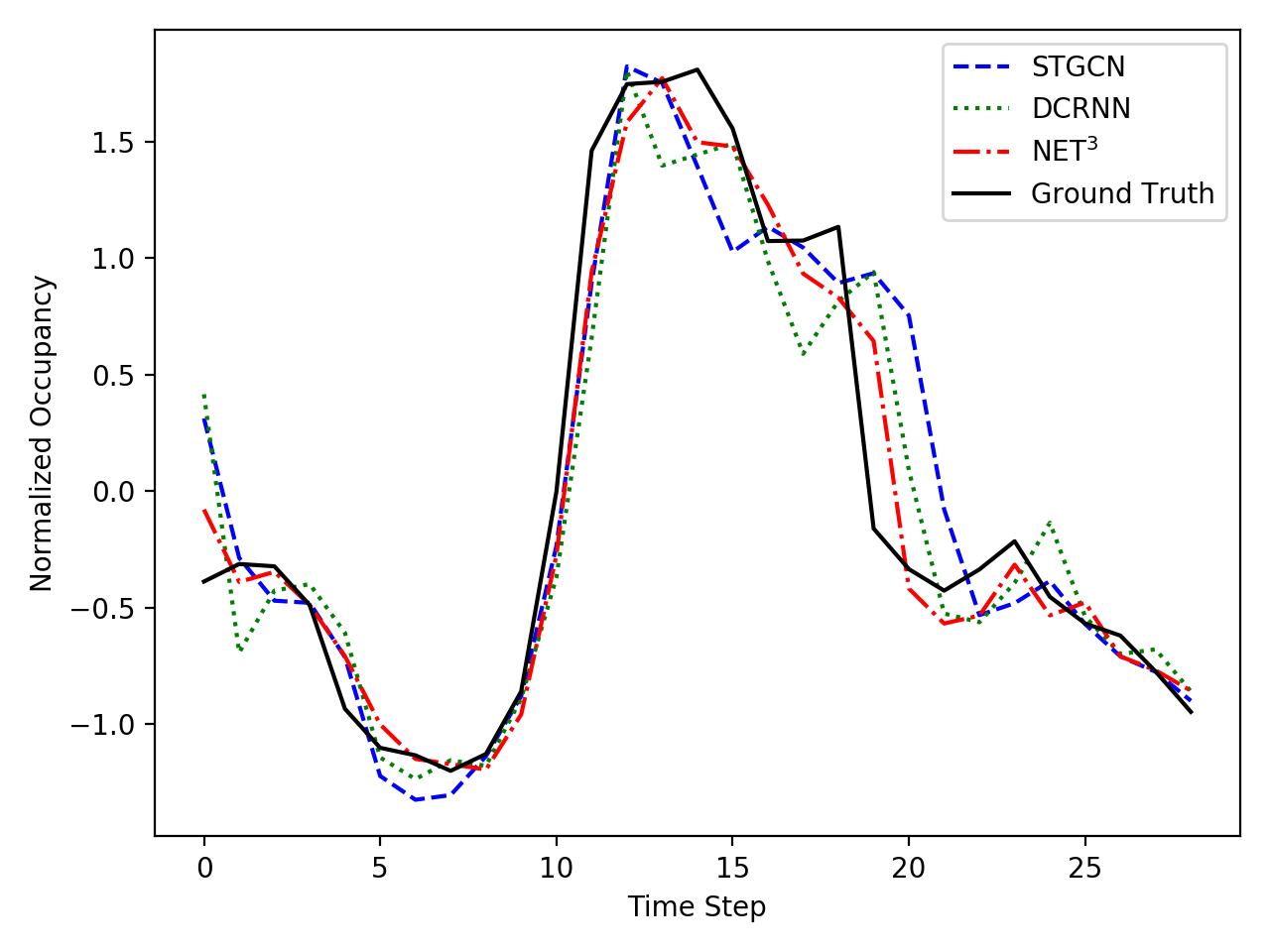}}\,
\subfloat
{\includegraphics[width=.32\linewidth]{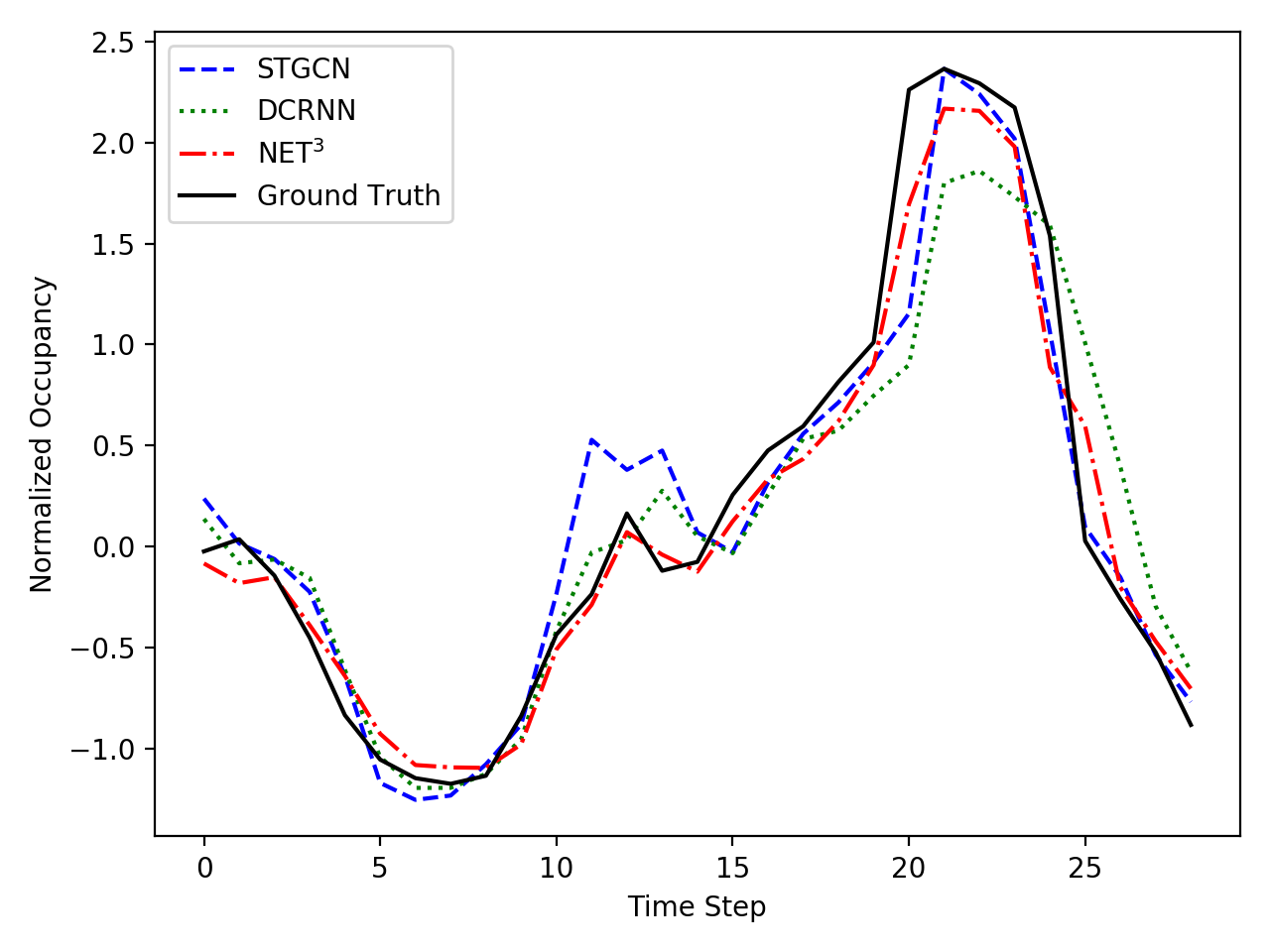}}
\caption{Visualization of future value prediction on the \textit{Traffic} dataset. Upper part presents the results for normalized speed. 
Lower part presents the results for normalized occupancy.}\label{fig:exp_visual}
\end{figure*}

\subsubsection{Implementation Details}
For all the datasets and tasks, we use one layer TGCN, one layer TLSTM, and one layer MLP with the linear activation.
The hidden dimension is fixed as $8$.
We fix $\rho=0.8$, $0.8$, $0.2$, $0.1$ and $0.9$ for TLSTM on \textit{Motes}, \textit{Soil}, \textit{Revenue}, \textit{Traffic}, and \textit{20CR} datasets respectively.
The window size is set as $\omega=5$ and $\tau=1$, and Adam optimizer \cite{kingma2014adam} with a learning rate of $0.01$ is adopted.
Coefficients $\mu_1$ and $\mu_2$ are fixed as $10^{-3}$.

\subsection{Effectiveness Results}\label{sec:exp_effectiveness}
In this section, we present the effectiveness experimental results for missing value recovery, future value prediction, synergy analysis and sensitivity experiments.
\subsubsection{Missing Value Recovery}
For all the datasets, we randomly select 10\% to 50\% of the data points as test sets, and we use the mean and standard deviation of each time series in the training sets to normalize each time series.
The evaluation results on \textit{Motes}, \textit{Soil}, \textit{Revenue} and \textit{Traffic} are shown in Figure \ref{fig:exp_motes_missing}-\ref{fig:exp_traffic_missing}, and the results for \textit{20CR} are presented in \ref{fig:exp_20cr_missing}.
The proposed full model \nettt (TGCN+TLSTM) outperforms all of the baseline methods on almost all of the settings.
Among the baselines methods, those equipped with GCNs generally have a better performance than LSTM.
When comparing TGCN with iTGCN, we observe that TGCN performs better than iTGCN on most of the settings.
This is due to TGCN's capability in capturing various synergy among graphs.
We can also observe that TLSTM (TGCN+TLSTM) achieves lower RMSE than both mLSTM (TGCN+mLSTM) and LSTM (TGCN+LSTM), demonstrating the effectiveness of capturing the implicit relations.

\subsubsection{Future Value Prediction}
We use the last 2\% to 10\% time steps as test sets for the \textit{Motes}, \textit{Traffic}, \textit{Soil} and \textit{20CR} datasets, and we use the last 1\% to 5\% time steps as test sets for the \textit{Revenue} dataset.
% For the Motes, Traffic and Soil datasets, this range covers the data of the last 30 to 144 minutes, 1 to 6 days and 1 week to 1 month respectively.
% For the Revenue dataset, it represents the last 1 to 4 quarters.
Similar to the missing value recovery task, 
The datasets are normalized by mean and standard deviation of the training sets.
The evaluation results are shown in Figure \ref{fig:exp_motes_future}-\ref{fig:exp_traffic_future}
and Figure \ref{fig:exp_20cr_future}.
The proposed \nettt\ outperforms the baseline methods on all of the five datasets.
Different from the missing value recovery task, the classic methods perform much worse than deep learning methods on the future value prediction, which might result from the fact that these methods are unable to capture the non-linearity in the temporal dynamics.
Similar to the missing value recovery task, generally, TGCN also achives lower RMSE than iTGCN and GCN, and TLSTM performs better than both mLSTM and LSTM.
% The proposed full model (TGCN+TLSTM) also outperforms all of its ablated versions except on the smallest dataset (Motes) with a relatively high test ratio (from $0.03$ to $0.05$), where one of its  ablated versions (LSTM+TGCN) achieves slightly lower RMSE.

We present the visualization of the future value prediction task on the \textit{Traffic} dataset in Figure \ref{fig:exp_visual}.

\begin{figure*}[t!]
\centering
\subfloat[Missing value recovery]
{\includegraphics[width=.32\linewidth]{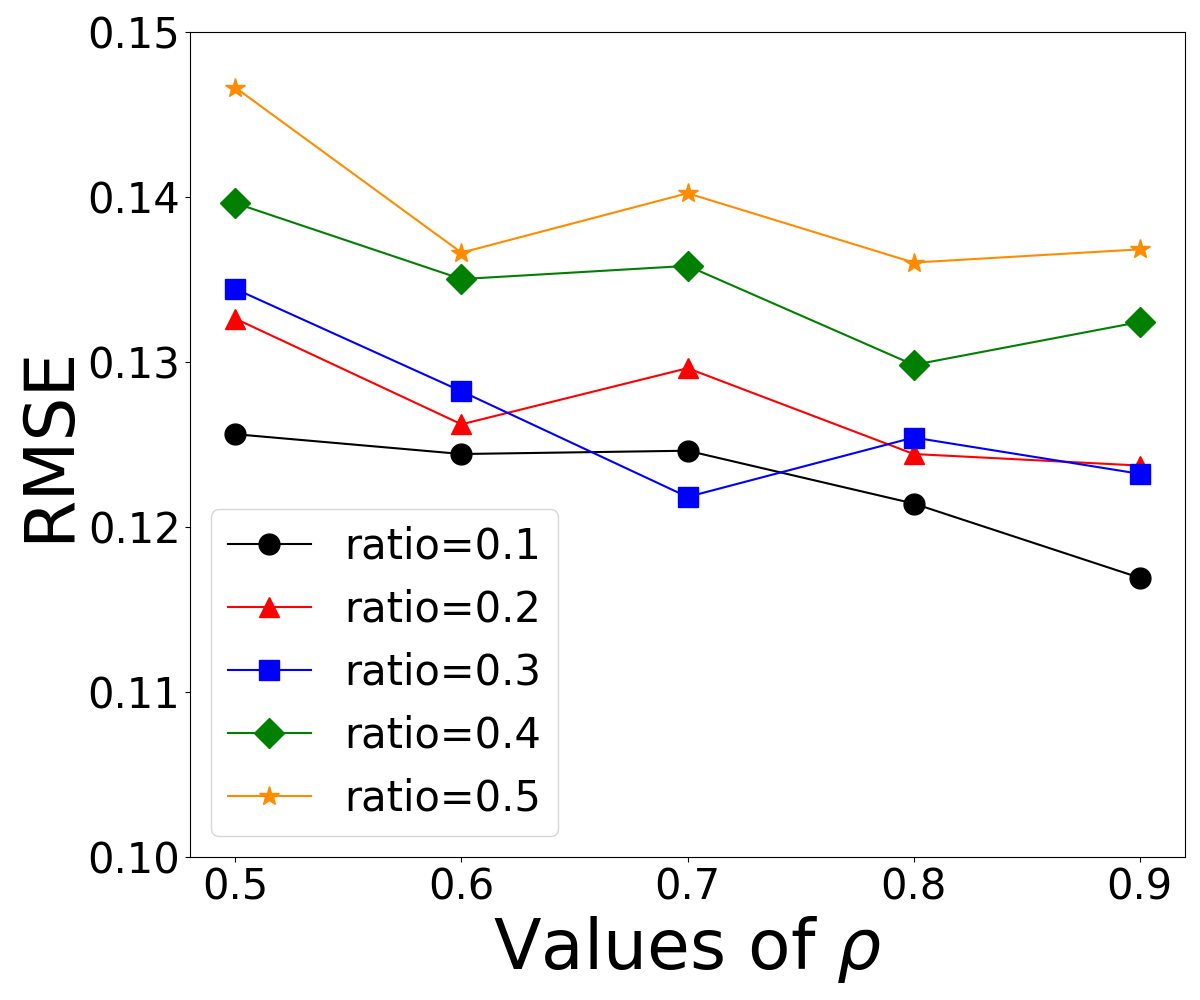}\label{fig:exp_sens_missing}}
\:
\subfloat[Future value prediction]
{\includegraphics[width=.32\linewidth]{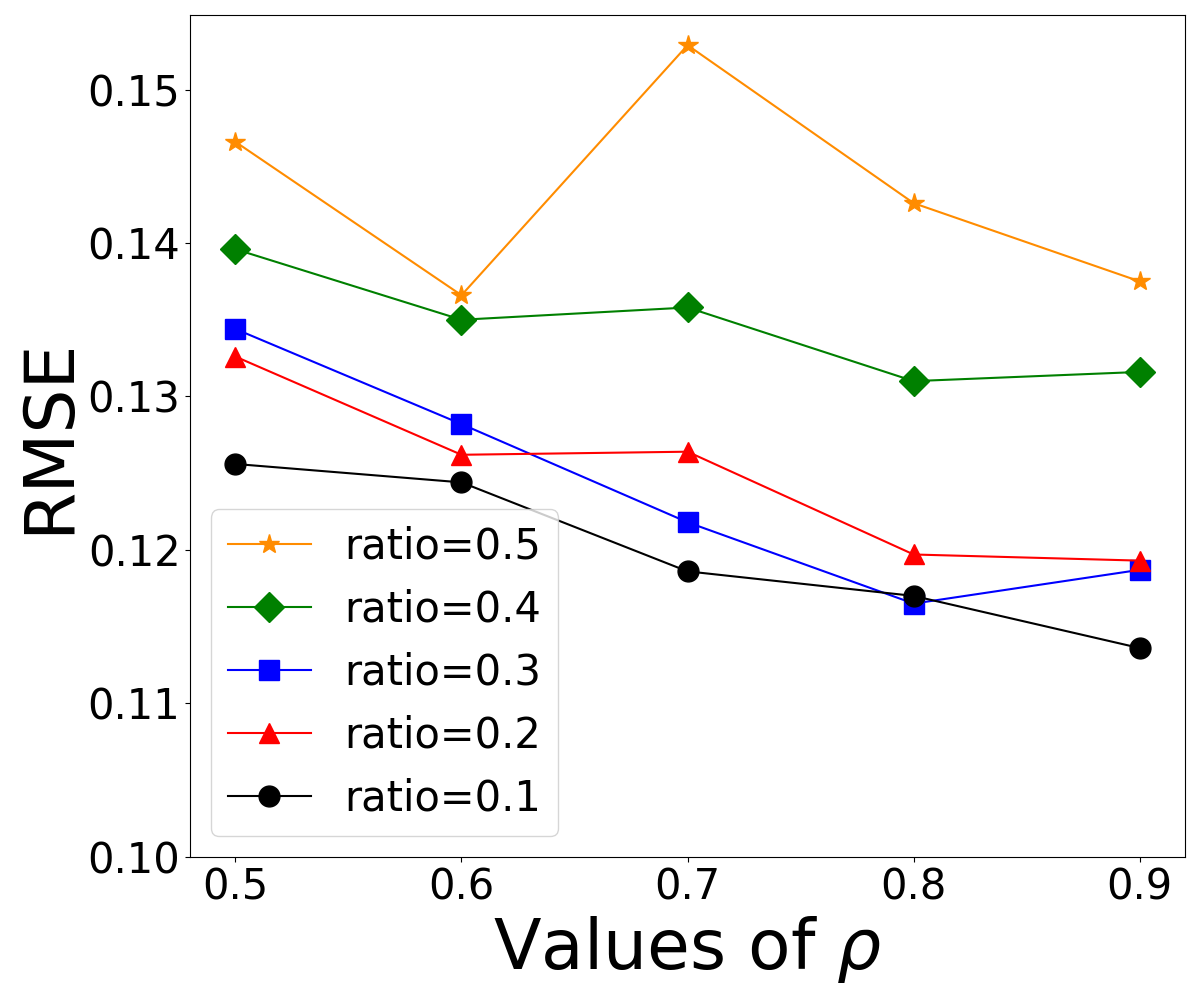}\label{fig:exp_sens_future}}
\:
\subfloat[Number of parameters]
{\includegraphics[width=.32\linewidth]{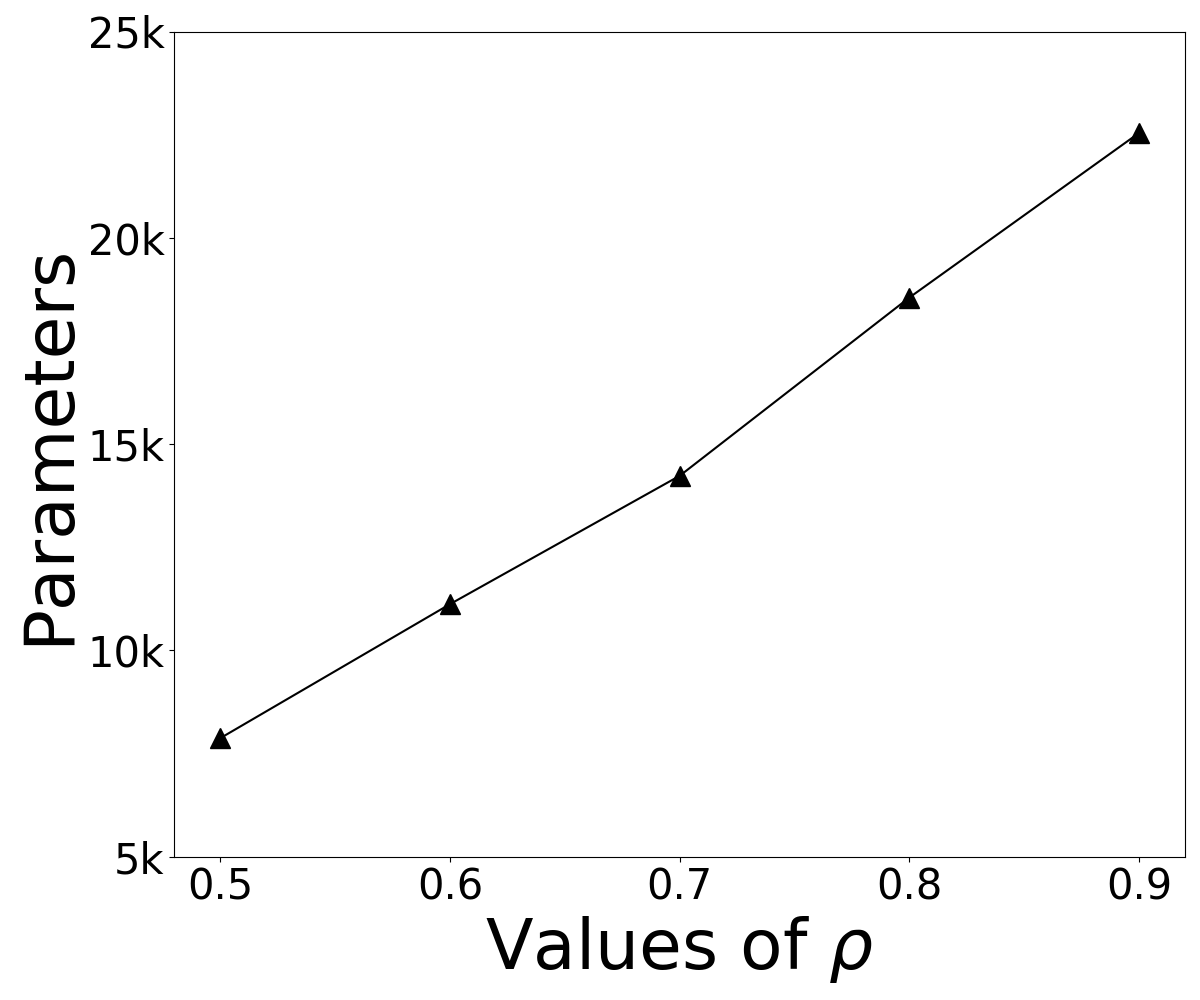}\label{fig:exp_sens_para}}
\caption{Sensitivity experiments of $\rho$ on the Motes dataset.}\label{fig:exp_sensitivity}
\end{figure*}

\subsubsection{Experiments on Synergy}
In this section, we compare the proposed TGCN with iTGCN, GCN$_1$, GCN$_2$, GCN$_3$ and GCN$_4$ (if applicable) on the missing value recovery and future value prediction tasks.
Here, GCN$_1$, GCN$_2$, GCN$_3$ and GCN$_4$ denote the GCN with the adjacency matrix of the 1st, 2nd, 3rd and 4th mode respectively.
iTGCN is an independent version of TGCN (Equation \eqref{eq:itgcn}), which is a simple linear combination of different GCNs (GCN$_1$, GCN$_2$, GCN$_3$ and GCN$_4$).
As shown in Figure \ref{fig:exp_rmse_synergy} and Figure \ref{fig:exp_20cr_missing_synergy}-\ref{fig:exp_20cr_future_synergy}, generally, TGCN outperforms GCNs designed for single modes and the simple combination of them (iTGCN).

\subsubsection{Sensitivity Experiments}
We use different values of $\rho$ for TLSTM on the \textit{Motes} dataset for the missing value recovery and future value prediction tasks and report their RMSE values in Figure \ref{fig:exp_sens_missing} and Figure \ref{fig:exp_sens_future}.
It can be noted that, in general, the greater $\rho$ is, the better results (i.e., smaller RMSE) will be obtained.
We believe the main reason is that a greater $\rho$ indicates that TLSTM captures more interaction between different time series. 
Figure \ref{fig:exp_sens_para} shows that the number of parameters of TLSTM is linear with respect to $\rho$.

\begin{table}[h]
\centering
\caption{In the upper part, $\rho_{max}$ and $\rho_{exp}$ are the upper bounds and the values of $\rho$ used in experiments. The middle and lower parts present the number of parameters in TLSTM and LSTM, and the parameter reduction ratio.}
    \begin{tabular}{c|r|r|r|r|r}
        \hline
        & \textit{Motes} & \textit{Soil} & \textit{Revenue} & \textit{Traffic} & \textit{20CR}\\
        \hline
        $\rho_{max}$ & 2.17 & 2.43 & 0.64 & 0.31 & 57.25\\
        $\rho_{exp}$ & 0.80 & 0.80 & 0.20 & 0.10 & 0.90\\
        \hline
        TLSTM & 18,552 & 10,996 & 87,967 & 180,554 & 16,696\\
        mLSTM & 117,504 & 57,120 & 669,120 & 1,088,000 & 58,752,000\\
        \hline
        Reduce & 84.21\% & 80.75\% & 86.85\% & 83.40\% & 99.97\%\\
        \hline
    \end{tabular}\label{table:rho}
\end{table}

\begin{figure}[h!]
\centering
\subfloat
{\includegraphics[width=.48\linewidth]{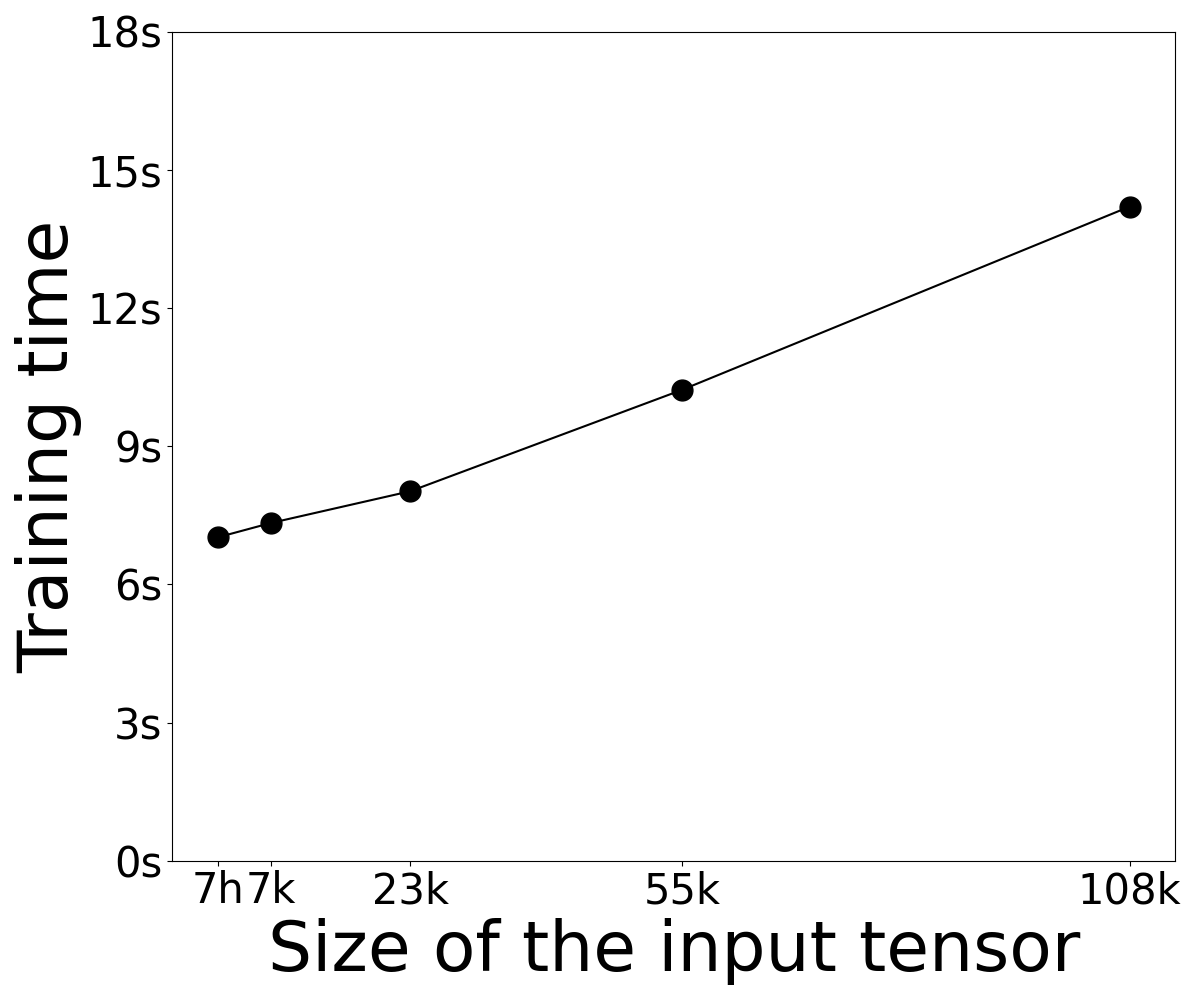}}\,
\subfloat
{\includegraphics[width=.48\linewidth]{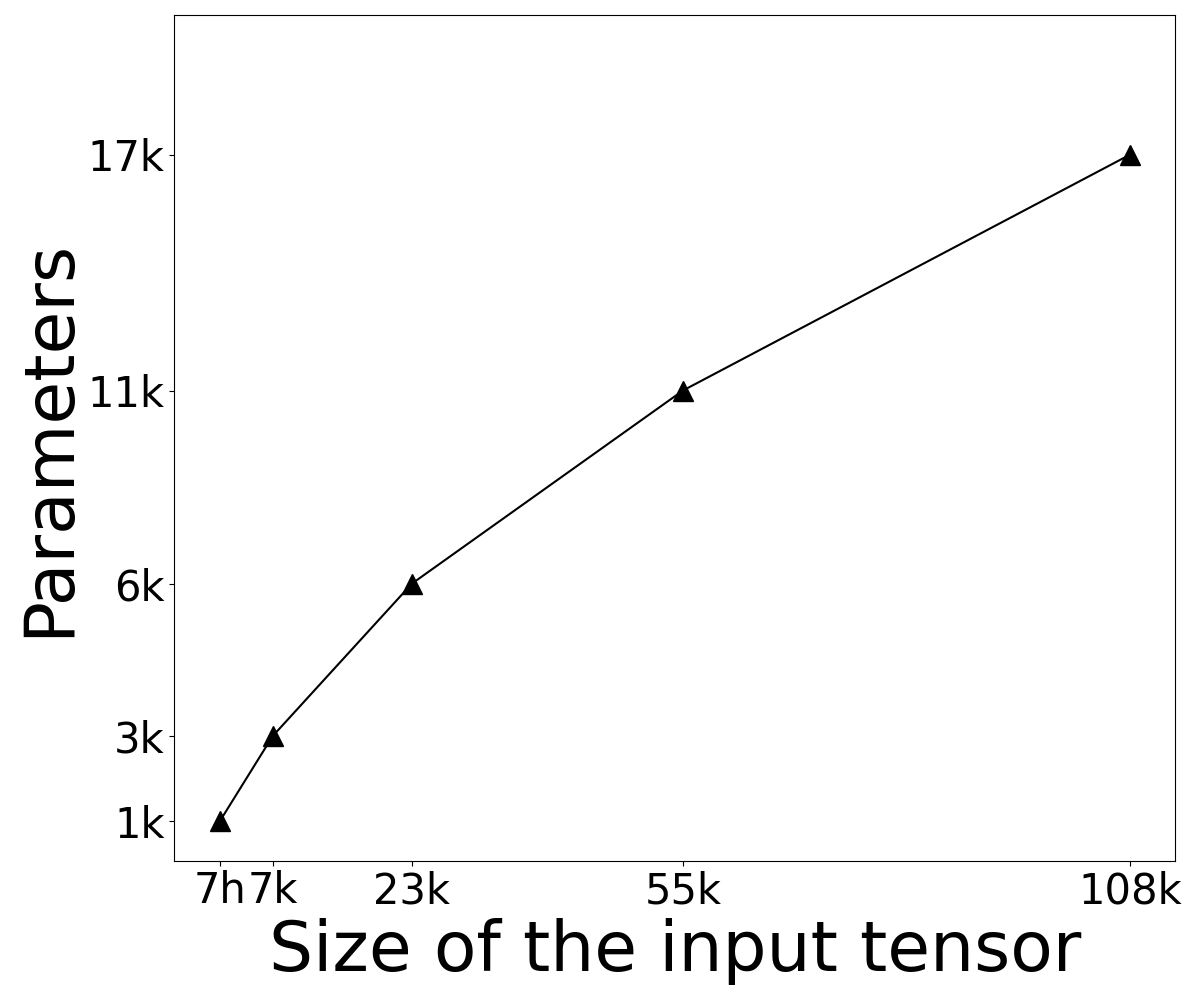}}\,
\caption{Scalability experiments.}\label{fig:exp_scalability}
\end{figure}

\subsection{Efficiency Results}\label{sec:exp_efficiency}
In this section, we present experimental results for memory efficiency and scalability.
\subsubsection{Memory Efficiency}
As shown in Table \ref{table:rho}, the upper bounds ($\rho_{max}$) of $\rho$ for the five datasets are 2.17, 2.43, 0.64, 0.31 and 57.25.
In the experiments, we fix $\rho_{exp}=0.80$, $0.80$, $0.20$, $0.10$ and $0.90$ for the \textit{Motes}, \textit{Soil}, \textit{Revenue}, \textit{Traffic} and \textit{20CR} datasets, respectively.
Given the above values of $\rho_{exp}$, the TLSTM row in Table \ref{table:rho} shows the number of parameters in TLSTM.
The mLSTM row shows the required number of parameters for multiple separate LSTMs for each single time series.
Compared with mLSTM, TLSTM significantly reduces the number of parameters by more than 80\% and yet performs better than mLSTM 
% on all the experiments 
(Figure \ref{fig:exp_rmse} and Figure \ref{fig:exp_20cr_missing}-\ref{fig:exp_20cr_future}).

\subsubsection{Scalability}
We evaluate the scalability of \nettt on the \textit{20CR} dataset in terms of the training time and the number of parameters.
We fix the $\rho=0.9$, and change the size of the input tensor by shrinking the dimension of all the modes by the specified ratios: [0.2, 0.4, 0.6, 0.8, 1.0]. 
Given the ratios, the input sizes (the number of nodes) are therefore 684, 6,912, 23,328, 55,296 and 108,000 respectively.
The averaged training time of one epoch for TLSTM against the size of the input tensor is presented in left part of Figure \ref{fig:exp_scalability},
and the number of parameters of TLSTM against the size of the input tensor is presented in right part of Figure \ref{fig:exp_scalability}.
Note that \textit{h} and \textit{k} on the x-axis represent \textit{hundreds} and \textit{thousands} respectively.
\textit{s} and \textit{k} on the y-axis represent \textit{seconds} and \textit{thousands} respectively.
The figures show that the training time and the number of parameters grow almost linearly with the size of input tensor.

\section{Related Works}\label{sec:related_work}
In this section, we review the related work in terms of (1) co-evolving time series, (2) graph convolutional networks (GCN), and (3) networked time series.

\subsection{Co-evolving Time Series}
Co-evolving time series is ubiquitous and appears in a variety of applications, such as enviornmental monitoring, financial analysis and smart transportation.
Li et al. \cite{li2009dynammo} proposed a linear dynamic system based on Kalman filter and Bayesian networks to model co-evolving time series.
Rogers et al. \cite{rogers2013multilinear} extended \cite{li2009dynammo} and further proposed a Multi-Linear Dynamic System (MLDS), which provides the base of the proposed TRNN.
Yu et al \cite{yu2016temporal} proposed a Temporal Regularized Matrix Factorization (TRMF) for modeling co-evolving time series.
Zhou et al. \cite{7837896} proposed a bi-level model to detect the rare patterns of time series.
Recently, Yu et al. \cite{yu2017deep} used LSTM \cite{hochreiter1997long} for modeling traffic flows.
Liang et al. \cite{liang2018geoman} proposed a multi-level attention network for geo-sensory time series prediction. 
Srivastava et al. \cite{srivastava2018comparative} and Zhou et al. \cite{zhou2017recover} used separate RNNs for weather and air quality monitoring time series.
Yu et al. \cite{yu2017long} proposed a HOT-RNN based on tensor-trains for long-term forecasting. 
Zhou et al. \cite{zhou2020domain} proposed a multi-domality neural attention network for financial time series.
One limitation of this line of research is that it often ignores the relation network between different time series. 

\subsection{Graph Convolutional Networks}
Plenty of real-world data could naturally be represented by a network or graph, such as social networks and sensor networks.
Bruna et al. \cite{bruna2013spectral} defined spectral graph convolution operation in the Fourier domain by analogizing it to one-dimensional convolution.
Henaff et al. \cite{henaff2015deep} used a linear interpolation, and Defferrard et al. \cite{defferrard2016convolutional} adopted Chebyshev polynomials to approximate the spectral graph convolution.
Kipf et al. \cite{kipf2016semi} simplified the Chebyshev approximation and proposed a GCN.
These methods were typically designed for flat graphs.
There are also graph convolutional network methods considering multiple types of relationships.
Monti et al. \cite{monti2017geometric} proposed a multi-graph CNN for matrix completion, which does not apply to tensor graphs. 
Wang et al. \cite{wang2019heterogeneous} proposed HAN which adopted attention mechanism to extract node embedding from different layers of a multiplex network \cite{de2013mathematical, jing2021hdmi, yan2021dynamic}, which is a flat graph with multiple types of relations, but not the \textit{tensor graph} in our paper.
Liu et al. \cite{liu2020tensor} proposed a TensorGCN for text classification. It is worth pointing out that the term \textit{tensor} in \cite{liu2020tensor} was used in a different context, i.e., it actually refers to a multiplex graph. 
For a comprehensive review of the graph neural networks, please refer to \cite{zhang2019graph, zhou2018graph, wu2020comprehensive}.

\subsection{Networked Time Series}
Relation networks have been encoded into traditional machine learning methods such as dynamic linear \cite{li2009dynammo} and multi-linear \cite{rogers2013multilinear} systems for co-evolving time series \cite{cai2015fast, cai2015facets, hairi2020netdyna}.
Recently, Li et al. \cite{li2017diffusion} incorporated spatial dependency of co-evolving traffic flows by the diffusion convolution. 
Yu et al. \cite{yu2017spatio} used GCN to incorporate spatial relations and CNN for capturing temporal dynamics.
Yan et al. \cite{yan2018spatial}, introduced a spatial-temporal GCN for skeleton recognition.
Li et al. \cite{li2019predicting} leveraged RGCN \cite{schlichtkrull2018modeling} to model spatial dependency and LSTM \cite{hochreiter1997long} for temporal dynamics.
These methods only focus on the relation graphs of a single mode, and ignore relations on other modes e.g. the correlation between the speed and occupancy of the traffic. In addition, these methods rely on the same function for capturing temporal dynamics of all time series.

It is worth pointing out that the proposed \nettt\ unifies and supersedes both co-evolving time series and networked time series as a more general data model. For example, if the adjacency matrix $\mathbf{A}_m (m=1,...,M)$ for each mode is set as an identity matrix, the proposed \nettt\ degenerates to co-evolving (tensor) time series (e.g., \cite{li2009dynammo}); networked time series in~\cite{cai2015fast} can be viewed as a special case of \nettt\ whose tensor $\mathcal X$ only has a single mode.

\section{Conclusion}\label{sec:conclusion}
In this paper, we introduce a novel \nettt\ for jointly modeling of tensor time series with its relation networks.
In order to effectively model the tensor with its relation networks at each time step, we generalize the graph convolution from flat 
%\hh{flat? plain graphs usually means graphs without attributes. pls be consistent with the def in sec2}
graphs to tensor graphs and propose a novel TGCN which not only captures the synergy among graphs but also has a succinct form.
To balance the commonality and specificity of the co-evolving time series, we propose a novel TRNN, which helps reduce noise in the data and the number of parameters in the model.
Experiments on a variety of real-world datasets demonstrate the efficacy and the applicability of \nettt.

%%
%% The acknowledgments section is defined using the "acks" environment
%% (and NOT an unnumbered section). This ensures the proper
%% identification of the section in the article metadata, and the
%% consistent spelling of the heading.
\begin{acks}
This work is supported by National Science Foundation under grant No. 1947135, %hh-career-new
by Agriculture and Food Research Initiative (AFRI) grant no. 2020-67021-32799/project accession no.1024178 from the USDA National Institute of Food and Agriculture, 
% NSF-NIFA AIFarm@UIUC
and IBM-ILLINOIS Center for Cognitive Computing Systems Research (C3SR) - a research collaboration as part of the IBM AI Horizons Network. 
The content of the information in this document does not necessarily reflect the position or the policy of the Government, and no official endorsement should be inferred.  The U.S. Government is authorized to reproduce and distribute reprints for Government purposes notwithstanding any copyright notation here on.

\end{acks}

%%
%% The next two lines define the bibliography style to be used, and
%% the bibliography file.

\bibliographystyle{ACM-Reference-Format}
% \bibliography{sample-base}
\bibliography{acmart.bib}

%%
%% If your work has an appendix, this is the place to put it.
% \appendix

\end{document}